\newcommand{\tilA}{A}
\newcommand{\trA}{A^*}
\newcommand{\trx}{x^*}
\newcommand{\tilx}{x}
\newcommand{\wtilx}{x}
\newcommand{\tilX}{X}
\newcommand{\trxs}[1]{{x^{*(#1)}}}
\newcommand{\trX}{X^*}
\newcommand{\xlbd}{C}
\newcommand{\calF}{\mathcal{F}}
\newcommand{\sgn}{\mbox{sgn}}
\newcommand{\near}{{near} }
\newcommand\E{\mathbf{E}}
\newcommand{\Exp}{\mathop{\mathbf E}\displaylimits}
\newcommand\inner[1]{\langle #1 \rangle}
\newcommand\supp{\mbox{supp}}
\newcommand\poly{\mbox{poly}}
\newtheorem{claim}[theorem]{Claim}
\newcommand{\ignore}[1]{}
\renewcommand{\Pr}{\mathop{\bf Pr\/}}                    % should we change these to \mathbb for consistency of single-letter functionals
\renewcommand{\E}{\mathop{\bf E\/}}
\renewcommand{\poly}{\mathrm{poly}}
\def\shownotes{1}  %set 1 to show author notes
\newcommand{\authnote}[2]{{ $\ll$\textsf{\footnotesize #1 notes: #2}$\gg$}}
\newcommand{\authnote}[2]{}
\newcommand{\Tnote}[1]{{\authnote{Tengyu}{#1}}}
\renewcommand{\floatc@ruled}[2]{\vspace{2pt}{\@fs@cfont \#1.\:} \#2 \par
 \vspace{1pt}}
\renewenvironment{proof}{\medskip\noindent{\textbf{Proof:}}} {$\blacksquare$\vskip \belowdisplayskip}
\title{Simple,  Efficient, and Neural Algorithms for Sparse Coding}
 \author{Sanjeev Arora \thanks{arora@cs.princeton.edu, Princeton University, Computer Science Department and Center for Computational Intractability. This work was supported in part by NSF grants CCF-0832797, CCF-1117309,
CCF-1302518, DMS-1317308,  Simons Investigator Award, and Simons Collaboration Grant.}
\and
Rong Ge \thanks{rongge@microsoft.com, Microsoft Research, New England. }
\and
Tengyu Ma \thanks{tengyu@cs.princeton.edu, Princeton University, Computer Science Department and Center for Computational Intractability. This work was supported in part by NSF grants CCF-0832797, CCF-1117309,
	CCF-1302518, DMS-1317308,  Simons Investigator Award, and Simons Collaboration Grant.}
\and
Ankur Moitra \thanks{moitra@mit.edu, Massachusetts Institute of Technology, Department of Mathematics and CSAIL. This work was supported in part by a grant from the MIT NEC Corporation and a Google Research Award.}
 }
\date{\today}
\begin{document}

\maketitle

		%\vspace{-0.15in}
\begin{abstract}
{\em Sparse coding} is a basic task in many fields including signal processing, neuroscience and machine learning where the goal is to learn a basis that enables a sparse representation of a given set of data, if one exists. Its standard formulation is as a non-convex optimization problem which is solved in practice by heuristics based on alternating minimization.  Recent work has
resulted in several algorithms for sparse coding with provable guarantees, but somewhat surprisingly these are outperformed 
by the simple alternating minimization heuristics. 
Here we give a general framework for understanding alternating minimization which we leverage to analyze existing heuristics and to design new ones also with provable guarantees. Some of these algorithms seem implementable on simple
neural architectures, which was the 
%We study this problem in a natural generative model, and are able e give the first {\em neurally plausible} algorithm \---- %closely related to the 
original motivation of~\cite{OF} in introducing sparse coding. 
%\---- that (provably) converges to a globally optimal sparse code. 
We also give the first efficient algorithm for sparse coding that works almost up to the information theoretic limit for sparse recovery on incoherent dictionaries. All previous algorithms that approached or surpassed this limit run in time exponential in some natural parameter. Finally, our algorithms improve upon the sample complexity of existing approaches. 
We believe that our analysis framework will have applications in other settings where simple iterative algorithms are used.
\end{abstract}

\thispagestyle{empty}

%\newpage

%\setcounter{page}{1}

\section{Introduction}

%\subsection{Background}

{\em Sparse coding} or {\em dictionary learning} consists of learning to express (i.e., {\em code}) a set of input vectors, say image patches, as linear combinations of a {\em small} number  of vectors chosen from a large {\em dictionary}. It is a basic task in many fields.
In signal processing, a wide variety of signals turn out to be sparse in an appropriately chosen basis (see references in \cite{M}). In neuroscience, sparse representations are 
believed to improve energy efficiency of the brain by allowing most neurons to be inactive at any given time. 
 In machine learning, imposing sparsity as a constraint on the representation is a useful way to avoid {\em over-fitting}. Additionally, methods for sparse coding can be thought of as a tool for {\em feature extraction} and are the basis for a number of important tasks in image processing such as segmentation, retrieval, de-noising and super-resolution (see references in \cite{E}), as well as a  building block for some deep learning architectures \cite{RBL}. It is also a basic problem in linear algebra itself since it involves finding a better basis.
 %Finally, one of the original motivations for this problem comes from neuroscience where it is widely believed that sparse representations %play an essential feature of neural coding of information:  information is carried by a small subset of active neurons, which is energy %efficient. 

%One of the central tasks that arises throughout machine learning, signal processing and statistics is to learn a sparse representation for a g %iven collection of data. This problem has many names and is often called {\em dictionary learning} or {\em sparse coding}, and was first introduced in the 
The notion was introduced by neuroscientists 
%Olshausen and Field 
\cite{OF} who formalized it as follows: Given a dataset $y^{(1)}, y^{(2)}, \ldots, y^{(p)} \in \Re^n$, 
our goal is to find a  set of basis vectors $A_1, A_2, \ldots, A_m \in \Re^n$
and sparse coefficient vectors $x^{(1)}, x^{(2)}, \ldots, x^{(p)} \in \Re^m$ that minimize
the {\em reconstruction error} 
	%	\vspace{-0.05in}
\begin{eqnarray}
\sum_{i=1}^p \|y^{(i)} - A \cdot x^{(i)}\|^2_2 + \sum_{i=1}^p S(x^{(i)}) \label{eqn:OFenergy}
\end{eqnarray}

where  $A$ is the $n\times m$ {\em coding matrix} whose $j$th column is $A_j$ and  $S(\cdot)$ is a nonlinear penalty function that is used to encourage sparsity. This function is nonconvex because both $A$ and the $x^{(i)}$'s
are unknown. Their paper as well as subsequent work chooses $m$ to be larger than $n$ (so-called {\em overcomplete} case) because this allows greater flexibility in adapting the representation to the data.  We remark that sparse coding should not be confused with the related \---- and usually easier \---- problem of finding the sparse representations of the
$y^{(i)}$'s {\em given} the coding
matrix $A$,  variously called {\em compressed sensing} or {\em sparse recovery} \cite{CRT, CT}.

Olshausen and Field also gave a local search/gradient descent heuristic for trying to minimize the nonconvex energy function~(\ref{eqn:OFenergy}). %, which is nonconvex. 
They gave experimental evidence that it produces coding matrices for image patches that resemble known features (such as Gabor filters) in $V1$ portion of the visual
cortex. A related paper of the same authors%Olshausen and Field 
~\cite{OF2} (and also%Lewicki and Sejnowski 
~\cite{LS}) places sparse coding in a more familiar
{\em generative model} setting whereby the data points $y^{(i)}$'s are assumed to be probabilistically generated according to a model
$y^{(i)} = \trA \cdot \trxs{i} + \mbox{noise}$ where $\trxs{1}, \trxs{2}, \ldots,\trxs{p}$ are samples from some appropriate distribution and $\trA$ is an unknown code. Then one can define the maximum likelihood estimate, 
and this leads to a different and usually more complicated energy function \---- and associated heuristics \---- compared to (\ref{eqn:OFenergy}).

Surprisingly, maximum likelihood-based approaches seem unnecessary in practice and local search/gradient descent on the energy function~(\ref{eqn:OFenergy}) with hard constraints works well,
as do related algorithms such as MOD~\cite{AEB} and  $k$-SVD~\cite{EAH}.  In fact these methods are so effective
that sparse coding is considered in practice to be a solved problem, even though it has no 
polynomial time algorithm per se.
%it had no polynomial-time
%algorithm.
%was no algorithm known that is
%provably polynomial time. 
%has been an important
%open problem for theory.
%Thus it has been an important problem for theory
%Of course, this is the case for nonconvex problems in many domains:
%heuristic algorithms work well, and this success is mysterious for theory.
%theoretically explaining their performance is  open.
%They work well, but to understand why is still an open question in most settings.

\paragraph{Efficient Algorithms vs Neural Algorithms.}
Recently, there has been rapid progress on designing  polynomial time algorithms for sparse coding with provable guarantees (the
relevant papers are discussed below).
All of these adopt the generative model viewpoint sketched above. %, and run in polynomial time. 
But the surprising success of the simple descent heuristics has remained largely unexplained.
Empirically, these heuristics far out perform \---- in running time, sample complexity, and solution quality \---- the new algorithms,
and this (startling) observation was in fact 
the starting point for the current work.

Of course, the famous example of {\em simplex} vs {\em ellipsoid} for linear programming reminds us that it can be much more challenging to 
analyze the behavior of an empirically successful algorithm than it is to
to design a new polynomial time algorithm from scratch! 
But for sparse coding the simple intuitive heuristics are important for another reason beyond just their algorithmic efficiency: they appear to be implementable in neural architectures. (Roughly speaking, this means that the algorithm stores the code matrix $A$ as synapse weights
in a neural network and updates the entries using  differences in potentials of the synapse's endpoints.) Since neural computation \---- and also deep learning \---- have proven to be difficult to analyze in general,
analyzing sparse coding thoroughly seems to be a natural first step for theory.  Our algorithm is a close relative of the
Olshausen-Field algorithm and thus inherits its neural implementability; see Appendix~\ref{sec:neural} for further discussion. %where we sketch how our algorithm is {\em neurally plausible}. 
%Neural plausibility is an important reason to analyse the convergence properties of the energy minimization %heuristics, and

Here we present a rigorous analysis of the simple energy minimization heuristic, 
% . This raises hope that
%i t may be possible to give rigorous analysis of neural computation, instead of solely relying upon
%empirical evidence. As 
and as a side benefit this yields bounds on running time and sample complexity for sparse coding that are 
better (in some cases, dramatically so) than the algorithms in recent papers. 
This adds to the recent literature on analyzing alternating minimization \cite{JNS, H, NJS, NNSAJ} but these work in a setting where there is a convex program that is known to work too, and in our setting, the only known convex program runs in time exponential in a natural parameter \cite{BKS}.

\subsection{Recent Work}

A common thread in recent work on sparse coding is to assume a generative model; the precise details vary, but each has the property that given enough samples the solution is essentially unique. %Spielman, Wang and Wright 
\cite{SWW} gave an algorithm that succeeds when $\trA$ has full column rank (in particular $m \leq n$) which works up to sparsity roughly $\sqrt{n}$. However this algorithm is not applicable in the more prevalent overcomplete setting. 
%Arora, Ge and Moitra 
\cite{AGM} and %Agarwal et al.
~\cite{AAN, AAJNT} independently gave algorithms in the overcomplete case assuming that $\trA$ is $\mu$-incoherent (which we define in the next section). The former gave an algorithm that works up to sparsity $n^{1/2 - \gamma}/\mu$ for any $\gamma > 0$ but the running time is $n^{\Theta(1/\gamma)}$;  
%Agarwal et al.~
\cite{AAN, AAJNT} gave an algorithm that works up to sparsity either $n^{1/4}/\mu$ or $n^{1/6}/\mu$ depending on the particular assumptions on the model. These works also analyze alternating minimization but assume that it starts from an estimate $\tilA$ that is column-wise $1/\mbox{poly}(n)$-close to $\trA$, in which case the objective function is essentially convex. %There is a natural barrier at sparsity $n^{1/2}/2\mu$ since this is the threshold at which $\trx$ can be uniquely recovered given $\trA$ and $y = \trA \trx$ \cite{DH, DE, GN}. 
 
%Barak, Kelner and Steurer 
\cite{BKS} gave a new approach based on the sum-of-squares hierarchy that works for  sparsity up to $n^{1-\gamma}$ for any $\gamma > 0$. But in order to output an estimate that is column-wise $\epsilon$-close to $\trA$ the running time of the algorithm is $n^{1/\epsilon^{O(1)}}$. In most applications, one needs to set (say) $\epsilon = 1/k$ in order to get a useful estimate. However in this case their algorithm runs in exponential time. The sample complexity of the above algorithms is also rather large, and is at least $\Omega(m^2)$ if not much larger. Here we will give simple and more efficient algorithms based on alternating minimization whose column-wise error decreases geometrically, and that work for sparsity up to $n^{1/2}/\mu \log n$. We remark that even empirically, alternating minimization does not appear to work much beyond this bound. 

%\vspace{-0.1in}
\subsection{Model, Notation and Results}

%\paragraph{Our Model}
We will work with the following family of generative models (similar to those in earlier papers)\footnote{The casual reader should
just think of $\trx$ as being drawn from some distribution that has independent coordinates. Even in this simpler setting ---which has
polynomial time algorithms using {\em Independent Component Analysis}---we do not know of any rigorous analysis of heuristics like
Olshausen-Field. The earlier papers were only interested in polynomial-time algorithms, so did not wish to assume independence.}:

%\begin{model}
\paragraph{Our Model}
Each sample is generated as $y = \trA \trx + \mbox{noise}$ where $\trA$ is a ground truth dictionary and $\trx$ is drawn from an unknown distribution $\mathcal{D}$ where
\begin{itemize}

\item[(1)] the support $S = \mbox{supp}(\trx)$ is of size at most $k$, $\Pr[i\in S] = \Theta(k/m)$ and $\Pr[i,j\in S] = \Theta(k^2/m^2)$ 

\item[(2)] the distribution is normalized so that $\E[\trx_i|\trx_j \ne 0] = 0$; $\E[{\trx_i}^2|\trx_i\ne 0] = 1$ and when $\trx_i \ne 0$, $|\trx_i|\ge C$ for some constant $C \le 1$ and

\item[(3)] the non-zero entries are pairwise independent and subgaussian, conditioned on the support.

\item[(4)] The noise is Gaussian and independent across coordinates.
\end{itemize}
%\end{model}

\noindent Such models are natural since the original motivation behind sparse coding was to discover a code whose representations have the property that the coordinates are almost independent. We can relax most of the requirements above, at the expense of further restricting the sparsity, but will not detail such tradeoffs.

The rest of the paper ignores the iid noise: it has little effect on our basic steps like computing
inner products of samples or taking singular vectors, and easily tolerated so long as it stays smaller
than the \textquotedblleft signal.\textquotedblright 
%We assume that the noise is Gaussian and independent across coordinates, and if its magnitude is much smaller %than the signal then it does not affect our main results. 

We assume $\trA$ is an incoherent dictionary, since these are widespread in signal processing \cite{E} and statistics \cite{DH}, and include various families of wavelets, Gabor filters as well as randomly generated dictionaries. 

\begin{definition}
An $n\times m$ matrix $A$ whose columns are unit vectors is $\mu$-incoherent if for all $i \ne j$ we have $\langle A_i, A_j \rangle \leq \mu/\sqrt{n}.$
\end{definition}

\noindent We also require that $\|\trA\| = O(\sqrt{m/n})$. However this can be relaxed within polylogarithmic factors by tightening the bound on the sparsity by the same factor.  Throughout this paper we will say that $\tilA^s$ is $(\delta, \kappa)$-near to $\trA$ if after a permutation and sign flips its columns are within distance $\delta$ and we have $\|\tilA^s - \trA\| \leq \kappa \|\trA\|$. See also Definition~\ref{def:near}. We will use this notion to measure the progress of our algorithms. Moreover we will use $g(n) = O^*(f(n))$ to signify that $g(n)$ is upper bounded by $C f(n)$ for some small enough constant $C$. Finally, throughout this paper we will assume that $k \leq O^*(\sqrt{n}/\mu \log n)$ and $m = O(n)$. 
Again, $m$ can be allowed to be higher by lowering the sparsity. {\em We assume all these conditions in  our main theorems. }

\paragraph{Main Theorems} In Section~\ref{sec:schema} we give a general framework for analyzing alternating minimization. 
 %There has been considerable recent progress on analyzing alternating minimization for {\em matrix completion} \cite{JNS, H}, {\em phase retrieval} \cite{NJS} and {\em sparse principal component analysis} \cite{NNSAJ}. However these operate in a setting where there was already a convex program that was known to work too, and in our setting the only known convex program runs in time exponential in $1/\epsilon$ \cite{BKS} (recall that $\epsilon$ is the accuracy).
Instead of thinking of the algorithm as trying to minimize a known non-convex function, we view it as trying to minimize an {\em unknown} convex function. 
%The challenge is that we know neither the function nor its gradient, but nevertheless we prove that 
Various update rules are shown to provide good approximations to the gradient of the unknown function.
% in our generative model. 
See Lemma~\ref{lem:OF-simplified-update}, Lemma~\ref{lem:OFupdate} and Lemma~\ref{lem:OFupdate-noerror} for examples. We then leverage our framework to analyze existing heuristics and to design new ones also with provable guarantees.  In Section~\ref{sec:update_rule}, we prove:

\begin{theorem} \label{thm:mainneural}
There is a {\em neurally plausible} algorithm which when initialized with an estimate $\tilA^0$ that is $(\delta, 2)$-near to $\trA$ for $\delta = O^*(1/\log n)$, converges at a geometric rate to $\trA$ until the column-wise error is $O(\sqrt{k/n})$. Furthermore the running time is $O(mnp)$ and the sample complexity is $p = \widetilde{O}(mk)$ for each step. 
\end{theorem}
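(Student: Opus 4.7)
The plan is to instantiate the general framework of Section~\ref{sec:schema}: rather than analyzing the Olshausen--Field style rule as an attempt to minimize the nonconvex energy~(\ref{eqn:OFenergy}), we view each update as an approximate (stochastic) gradient step on the \emph{unknown} convex objective $\tfrac{1}{2}\sum_j \|\tilA_j - \trA_j\|^2$. Concretely, the algorithm iterates: given $\tilA^s$ that is $(\delta_s, 2)$-near to $\trA$, perform a decoding step $x^{(i)} = \text{threshold}_{C/2}((\tilA^s)^\top y^{(i)})$ on each of $p$ samples $y^{(i)} = \trA \trxs{i} + \text{noise}$, and then update
\[
\tilA^{s+1} \;=\; \tilA^s \;-\; \eta\, g^s, \qquad g^s \;=\; \frac{1}{p}\sum_{i=1}^p \bigl(\tilA^s x^{(i)} - y^{(i)}\bigr)\,\sign(x^{(i)})^\top.
\]

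First I would analyze the decoding step. Writing $(\tilA^s)^\top_j y = \trx_j + \langle \tilA^s_j - \trA_j,\,\trA\trx\rangle + \sum_{\ell\in S\setminus\{j\}} \langle \tilA^s_j, \trA_\ell\rangle\,\trx_\ell + \text{noise}$, the incoherence $\mu/\sqrt{n}$, the nearness hypothesis $\delta = O^*(1/\log n)$, and the sparsity bound $k = O^*(\sqrt{n}/\mu\log n)$ let me bound each cross-term by $o(1)$ with probability $1-1/\poly(n)$ (a union bound over the $k$ active coordinates, using the subgaussianity of the $\trx_\ell$). Since $|\trx_j|\ge C$ whenever $j\in S$, thresholding at $C/2$ recovers $\supp(\trxs{i})$ and $\sign(\trxs{i})$ exactly on each sample with inverse-polynomial failure probability. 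This is precisely the hypothesis needed to feed the decoded $x^{(i)}$ into the framework lemmas.

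Next I would compute the conditional expectation of $g^s$. Conditioned on correct sign recovery, and using the model assumptions (pairwise independence on the support, $\E[\trx_i|\trx_j\ne 0]=0$, and unit conditional second moment), a column-wise calculation gives
\[
\Ex[g^s_j] \;=\; q_j\bigl(\tilA^s_j - \trA_j\bigr) \;+\; e_j,
\]
where $q_j = \Pr[j\in S]\cdot\E[\trx_j^2 \mid j\in S] = \Theta(k/m)$ and the ``systemic'' error vector $e_j$ consists of a term proportional to the diagonal of $(\tilA^s - \trA)^\top \trA$ acting on the nearby columns plus a residual of spectral norm $O(\sqrt{k/n})\|\tilA^s - \trA\|$. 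This matches exactly the ``correlated with true direction'' hypothesis of Lemma~\ref{lem:OFupdate} (or its no-error variant Lemma~\ref{lem:OFupdate-noerror}). Invoking that lemma with step size $\eta = \Theta(m/k)$ yields the descent inequality $\|\tilA^{s+1}_j - \trA_j\|^2 \le (1-\tau)\|\tilA^s_j - \trA_j\|^2 + O(k/n)$ for some constant $\tau > 0$, i.e.\ geometric column-wise convergence down to error $O(\sqrt{k/n})$. The sample complexity $p = \widetilde{O}(mk)$ follows from a matrix-Bernstein bound on the rank-one summands $(\tilA^s x^{(i)} - y^{(i)})\sign(x^{(i)})^\top$: at this many samples, $\|g^s - \Ex[g^s]\|$ is dominated by the systemic term that already appears in the descent inequality, so the stochastic version inherits the deterministic contraction. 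The runtime $O(mnp)$ is just the cost of the $p$ decodings plus the rank-one outer-product accumulation.

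The main obstacle is controlling the systemic bias $e_j$: the expected update is not proportional to $\tilA^s_j - \trA_j$ and does not vanish at $\trA$, so the natural ``contract toward the optimum'' argument breaks down. The resolution is the framework viewpoint: one only needs $\Ex[g^s_j]$ to have nontrivial inner product with $\tilA^s_j - \trA_j$ (a ``$(\alpha,\beta,\epsilon_s)$-correlated'' condition of Section~\ref{sec:schema}), and the $e_j$ only degrades the asymptotic accuracy to $O(\sqrt{k/n})$ without destroying the geometric rate. A secondary obstacle is preserving the spectral nearness $\|\tilA^{s+1}-\trA\|\le 2\|\trA\|$ throughout, which I would handle by an inductive spectral invariant: bounding $\|g^s\|$ via the same matrix-Bernstein estimate and combining with $\|\tilA^s - \trA\| \le 2\|\trA\|$ to rule out spurious spectral growth.
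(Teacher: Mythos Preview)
Your outline is essentially the paper's own proof: correct decoding via thresholding (Lemma~\ref{lem:sign-correct}), a closed form for the expected update direction, the $(\alpha,\beta,\epsilon)$-correlated condition plugged into Theorem~\ref{thm:tiger-convergence}, an inductive spectral-norm invariant for nearness (Lemma~\ref{lem:nowhiten}), and matrix Bernstein for the finite-sample passage. Two points of precision are worth fixing.

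First, the lemma you should be invoking for the form of $\E[g^s_j]$ is Lemma~\ref{lem:OF-simplified-update}, not Lemma~\ref{lem:OFupdate} or Lemma~\ref{lem:OFupdate-noerror}: those latter two analyze, respectively, the Olshausen--Field rule with $\tilx$ in place of $\sgn(\tilx)$ and the projected ``unbiased'' rule, which are different algorithms with different error terms. The correlation condition is then established by Lemma~\ref{lem:closetocorrelated} (Corollary~\ref{cor:one-step}).

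Second, your description of the systemic error $e_j$ is off in a way that matters. The paper obtains
\[
g^s_j \;=\; p_j q_j\bigl(\lambda_j^s\,\tilA^s_j - \trA_j\bigr) \;+\; p_j q_j\,\epsilon^s_j \;\pm\;\gamma,
\qquad
\epsilon^s_j \;=\; \frac{1}{q_j}\,\tilA^s_{-j}\,\mathrm{diag}(q_{j,\ell})\,(\tilA^s_{-j})^\top \trA_j,
\]
and the bound is $\|\epsilon^s_j\| = O(k/n)$ \emph{absolutely}, with no factor of $\|\tilA^s-\trA\|$. The bias does not shrink as $\tilA^s\to\trA$; this is exactly why the column-wise error floors at $O(\sqrt{k/n})$ rather than going to zero. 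Your phrasing (``residual of spectral norm $O(\sqrt{k/n})\|\tilA^s-\trA\|$'') would instead suggest a multiplicative error that vanishes at the optimum, which is the behavior of Algorithm~\ref{eqn:nonoiserule}, not of the neural rule you are analyzing. Once you replace your $e_j$ by the constant $O(k/n)$ bias and feed $\zeta = O(k^2/(mn))$ into Lemma~\ref{lem:closetocorrelated}, the rest of your argument goes through exactly as the paper's does.
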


\noindent   Additionally we give a neural architecture implementing our algorithm in Appendix~\ref{sec:neural}.  To the best of our knowledge, this is the first neurally plausible algorithm for sparse coding with provable convergence.
% and the fact that it works sheds new light on how sparse coding might be accomplished in nature.

Having set up our general framework and analysis technique we can use it on other variants of alternating minimization.
%, and to follow the same steps in the analysis. In
 Section~\ref{sec:remove} gives  a new update rule whose bias (i.e., error) is negligible:

\begin{theorem}
There is an algorithm which when initialized with an estimate $\tilA^0$ that is $(\delta, 2)$-near to $\trA$ for $\delta = O^*(1/\log n)$, converges at a geometric rate to $\trA$ until the column-wise error is $O(n^{-\omega(1)})$. Furthermore each step runs in time $O(mnp)$ and the sample complexity $p$ is polynomial. 
\end{theorem}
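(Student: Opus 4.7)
The plan is to follow the general framework of Section~\ref{sec:schema} but replace the simple thresholded-inner-product decoder (which causes the $O(\sqrt{k/n})$ bias floor in Theorem~\ref{thm:mainneural}) by a two-stage decoder whose systematic error is super-polynomially small. Concretely, given the current estimate $\widetilde{A}$, for each sample $y = A^* x^* + \text{noise}$ we first form $\sigma = \mathrm{sign}(\mathrm{threshold}_{C/2}(\widetilde{A}^\top y))$ to recover the support $S=\mathrm{supp}(x^*)$, and then re-decode on the recovered support by solving the least-squares problem $\hat{x}_S = (\widetilde{A}_S)^+ y$, $\hat{x}_{\bar S}=0$. The update then uses the residual-based rule $g = (y - \widetilde{A}\hat{x})\,\mathrm{sign}(\hat{x})^\top$ and the step $\widetilde{A}^{s+1} = \widetilde{A}^s - \eta\, \widehat{\mathbb{E}}[g]$, where $\widehat{\mathbb{E}}$ is the empirical average over $p$ samples.

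The first step is to prove correct support recovery with super-polynomially small failure probability. Since $\widetilde{A}$ is $(\delta,2)$-near to $A^*$ with $\delta = O^*(1/\log n)$ and $A^*$ is $\mu$-incoherent with $k = O^*(\sqrt{n}/\mu\log n)$, the off-support entries of $\widetilde{A}^\top y$ are sums of $O(k)$ weakly-correlated subgaussian terms of standard deviation $\widetilde{O}(\mu k/\sqrt{n} + \delta)\ll C/2$. A standard Hoeffding/Hanson-Wright bound on these sums, combined with subgaussianity of the nonzero entries, yields $\Pr[\sigma \neq \mathrm{sign}(x^*)] \le n^{-\omega(1)}$; this uses the assumed lower bound $|x^*_i|\ge C$ on nonzero entries.

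The second step is to show that, conditioned on correct support recovery, the least-squares decode satisfies $\hat{x}_S = x^*_S + E_S$ where $E_S$ has zero mean (conditional on $S$ and $x^*_S$ up to the noise) and $\|E_S\|$ concentrates at scale $\widetilde{O}(\mu k/\sqrt{n})\cdot \delta$, because $(\widetilde{A}_S)^+\widetilde{A}_S = I$ exactly and the cross-terms $(\widetilde{A}_S)^+ A^*_{\bar S}$ are small by incoherence plus nearness (\`a la Gershgorin on $\widetilde{A}_S^\top \widetilde{A}_S$, invertible by the $\mu k/\sqrt{n}\ll 1$ bound). Plugging $\hat x = x^* + E$ into the residual gives $y - \widetilde{A}\hat{x} = (A^* - \widetilde{A})x^* - \widetilde{A}E + \text{noise}$, and taking the outer product with $\mathrm{sign}(x^*)$ and the expectation under Model (1)-(3) produces
\begin{equation*}
\mathbb{E}[g] \;=\; (\widetilde{A} - A^*)\,D \;+\; \beta,
\end{equation*}
where $D$ is a diagonal PSD matrix with entries $\Theta(k/m)$ (coming from $\Pr[i\in S]\mathbb{E}[x^{*2}_i | i\in S]$) and the bias $\beta$ is controlled by the failure probability of support recovery and the residual decoding error; the first is $n^{-\omega(1)}$ and the second is bounded by a careful column-wise cancellation argument exploiting independence of off-diagonal entries from the support event, giving $\|\beta\|_\infty = n^{-\omega(1)}$ column-wise.

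Given this near-unbiasedness, the final step is to instantiate the abstract convergence lemma from Section~\ref{sec:schema}: $\mathbb{E}[g]$ is, up to negligible bias, the gradient of an unknown strongly convex quadratic in $\widetilde{A}$, so with step size $\eta = \Theta(m/k)$ the per-column error contracts by a factor $1-\Omega(1)$ per step, yielding geometric convergence all the way down to $O(n^{-\omega(1)})$. Polynomial sample complexity for each step follows from a matrix Bernstein / vector Bernstein bound on $\widehat{\mathbb{E}}[g] - \mathbb{E}[g]$, since $g$ has subgaussian entries of variance $\mathrm{poly}(n)$. The main obstacle I expect is the bias analysis in the second step: showing that the least-squares decoding error, when averaged against $\mathrm{sign}(x^*)$, truly cancels to super-polynomial accuracy rather than leaving behind a polynomially-small residual, which requires carefully exploiting the pairwise independence in Model (3) together with the super-polynomial tail bound on support recovery.
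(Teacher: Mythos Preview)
Your approach has a genuine gap in the bias analysis (step 2), precisely at the point you flag as the ``main obstacle.'' The term $-\widetilde{A}E\,\mathrm{sign}(x^*)^\top$ does \emph{not} cancel to super-polynomial accuracy. Concretely, on the event of correct support recovery the residual under least-squares decoding is exactly
\[
y-\widetilde{A}\hat x \;=\; \bigl(I-\widetilde{A}_S(\widetilde{A}_S^\top\widetilde{A}_S)^{-1}\widetilde{A}_S^\top\bigr)\,A^*_S x^*_S \;=\; P_{\widetilde{A}_S}^{\perp}\,A^*_S x^*_S,
\]
so after taking the inner expectation over $x^*_S$ and the outer one over $S$, the $i$th column of $\mathbb E[g]$ is
\[
g_i \;=\; p_i q_i\,\mathbb E_{S\mid i\in S}\!\bigl[P_{\widetilde{A}_S}^{\perp}\bigr]\,A^*_i \;=\; -\,p_i q_i\,\mathbb E_{S\mid i\in S}\!\bigl[P_{\widetilde{A}_S}^{\perp}\bigr]\,(\widetilde{A}_i-A^*_i),
\]
using $P_{\widetilde{A}_S}^{\perp}\widetilde{A}_i=0$ since $i\in S$. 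In your decomposition this forces
\(
\beta_i=-p_iq_i\,\mathbb E[P_{\widetilde{A}_S}]\,(A^*_i-\widetilde{A}_i),
\)
whose dominant part is $p_iq_i\cdot\frac{\langle\widetilde{A}_i,A^*_i-\widetilde{A}_i\rangle}{\|\widetilde{A}_i\|^2}\,\widetilde{A}_i$; this is of order $p_iq_i\cdot\big|\|\widetilde{A}_i\|^2-\lambda_i\big|$, not $n^{-\omega(1)}$. Equivalently, $g_i\perp\widetilde{A}_i$ always, so your update can never correct a scaling error in $\widetilde{A}_i$: if $\widetilde{A}_i=c\,A^*_i$ then $g_i=0$ identically. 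Since each step satisfies $\|\widetilde{A}_i^{s+1}\|^2=\|\widetilde{A}_i^{s}\|^2+\eta^2\|g_i^s\|^2$, the norm drifts by $\Theta(\sum_s\delta_s^2)=\Theta(\delta_0^2)$, and you get stuck at column-wise error $\Theta(1/\log^2 n)$ rather than $n^{-\omega(1)}$. Adding an explicit renormalization step could rescue the scheme, but that is not what you analyze, and your claimed identity $\mathbb E[g]=(\widetilde{A}-A^*)D+\beta$ with $\|\beta\|=n^{-\omega(1)}$ is simply false.

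For comparison, the paper does \emph{not} switch to least squares. It keeps the crude inner-product decode but replaces, when updating column $i$, the decoding matrix by $B^{(s,i)}$ with $B^{(s,i)}_i=\widetilde{A}_i$ and $B^{(s,i)}_j=\mathrm{Proj}_{\widetilde{A}_i^{\perp}}\widetilde{A}_j$ for $j\neq i$. The main term of $g_i$ is then $p_iq_i(\lambda_i\widetilde{A}_i-A^*_i)$, which has a nonzero component along $\widetilde{A}_i$ whenever $\|\widetilde{A}_i\|\neq 1$ and therefore corrects scaling as well as direction. The only role of the orthogonalization is in the \emph{error} term: because $(B^{(s,i)}_{-i})^\top\widetilde{A}_i=0$, one may write
\(
B^{(s,i)}_{-i}\,\mathrm{diag}(q_{i,j})\,(B^{(s,i)}_{-i})^\top A^*_i
= B^{(s,i)}_{-i}\,\mathrm{diag}(q_{i,j})\,(B^{(s,i)}_{-i})^\top (A^*_i-\widetilde{A}_i),
\)
so the bias scales with $\|\widetilde{A}_i-A^*_i\|$ and hence vanishes as the iterates converge, which is exactly what drives the error down to $n^{-\omega(1)}$.
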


\noindent This algorithm is based on a modification where we carefully project out components along the column currently being updated.  We complement the above theorems by revisiting the Olshausen-Field rule and analyzing a variant of it in Section~\ref{sec:ofanalysis} (Theorem~\ref{thm:OFupdate}). However its analysis is more complex because we need to bound some quadratic error terms. It uses convex programming.
 % that have to be bounded. % are quadratic. 

What remains is to give a method to initialize these iterative algorithms. We give a new approach based on pair-wise reweighting and we prove that it returns an estimate $\tilA^0$ that is $(\delta, 2)$-near to $\trA$ for $\delta = O^*(1/\log n)$ with high probability. As an additional benefit, this algorithm can be used even in settings where $m$ is not known and this could help solve another problem in practice \---- that of {\em model selection}. In Section~\ref{sec:init} we prove:

\begin{theorem}
There is an algorithm which returns an estimate $\tilA^0$ that is $(\delta, 2)$-near to $\trA$ for $\delta = O^*(1/\log n)$. Furthermore the running time is $\widetilde{O}(mn^2p)$ and the sample complexity $p = \widetilde{O}(mk)$.
\end{theorem}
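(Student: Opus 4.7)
The plan is to build $\tilA^0$ by a pairwise reweighting construction. For each ordered pair of samples $(y^{(u)}, y^{(v)})$, I form the empirical matrix
\[
M_{u,v} \;=\; \tfrac{1}{p}\sum_{i=1}^p \langle y^{(u)}, y^{(i)}\rangle\,\langle y^{(v)}, y^{(i)}\rangle\; y^{(i)}(y^{(i)})^\top.
\]
The algorithm would draw $\widetilde O(m^2/k^2)$ random pairs, form each $M_{u,v}$, accept the pair when $\sigma_1(M_{u,v})$ is large and well separated from $\sigma_2(M_{u,v})$, and record the top singular vector as a candidate column of $\trA$. Candidate estimates are then clustered and one representative is kept per cluster, producing the columns of $\tilA^0$.

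The main structural claim is that, conditioned on $y^{(u)}, y^{(v)}$ having a unique shared coordinate $w$ in their supports, the population version $\E_{y''}[\langle y^{(u)}, y''\rangle\langle y^{(v)}, y''\rangle\, y''(y'')^\top]$ equals $q_w\, \trx^{(u)}_w \trx^{(v)}_w\, \trA_w \trA_w^\top + \Gamma$ with $q_w = \Theta(k/m)$ and $\|\Gamma\|$ substantially smaller than $q_w C^2$. I would prove this by writing $y'' = \trA\,\trx''$, expanding $\langle y^{(u)}, y''\rangle = (\trx^{(u)})^\top \trA^\top \trA\, \trx''$, and using $\trA^\top \trA = I + E$ from $\mu$-incoherence together with the fourth-moment identities for $\E[\trx''_i\trx''_j\trx''_k\trx''_\ell]$ (which by pairwise independence and the zero conditional mean in Model~(2) vanish unless indices pair up). The surviving dominant term comes from $i=j=k=\ell=w$; the off-diagonal contributions are controlled by the incoherence bound and by the fact that $\Pr[|\supp(\trx^{(u)})\cap\supp(\trx^{(v)})|\ge 2] = O(k^4/m^2) \ll \Pr[|\cap|=1]=\Theta(k^2/m)$, so almost all accepted pairs contribute a clean rank-one signal. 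A Wedin-type perturbation bound then gives that the top singular vector of $M_{u,v}$ is within $O^*(1/\log n)$ of $\trA_w$.

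Covering all $m$ columns is a coupon-collector argument: each column $w$ appears jointly in a random pair with probability $\Theta(k^2/m^2)$, so $\widetilde O(m^2/k^2)$ trials suffice with high probability; combined with $O(n^2 p)$ time to form each $M_{u,v}$ and compute its top singular vector, the stated running time $\widetilde O(mn^2p)$ follows under $m = O(n)$ and $k = O^*(\sqrt n/\mu\log n)$. Matrix Bernstein shows that $p = \widetilde O(mk)$ samples make the empirical $M_{u,v}$ close to its expectation in operator norm, uniformly over all pairs tried. Clustering is easy because estimates for the same column are $O^*(1/\log n)$-apart while estimates for distinct columns are $\Omega(1)$-apart by $\mu$-incoherence of $\trA$. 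Finally, the spectral-norm condition $\|\tilA^0 - \trA\| \le 2\|\trA\|$ reduces to bounding $\|\tilA^0\|$: since the recovered columns inherit the incoherence of $\trA$ up to additive $O(\delta)$, the Gram matrix $(\tilA^0)^\top \tilA^0$ is close to that of $\trA$ in operator norm, giving $\|\tilA^0\| = O(\|\trA\|)$.

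The main obstacle is the bound on the error term $\Gamma$ in the population calculation: the cross-terms arising from $E = \trA^\top\trA - I$ and from higher-order overlaps among $\supp(\trx^{(u)}),\supp(\trx^{(v)}),\supp(\trx'')$ must all be shown to be $o(q_w C^2)$ in operator norm, and this is exactly tight under the assumed sparsity $k = O^*(\sqrt n/\mu\log n)$ and $\|\trA\| = O(\sqrt{m/n})$. A related delicacy is that the final spectral-norm bound with the small constant $\kappa = 2$ cannot be obtained from Frobenius-norm closeness alone and really needs the incoherence of the recovered columns.
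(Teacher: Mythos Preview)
Your proposal follows essentially the same route as the paper: pairwise reweighting to form $M_{u,v}$, a structural decomposition of its expectation into a dominant rank-one piece plus operator-norm-small error, a singular-value gap test to detect $|U\cap V|=1$, Wedin's theorem to extract a column estimate, coupon collector for coverage, matrix Bernstein for the $\widetilde O(mk)$ sample bound, and incoherence-based clustering. (The paper organizes the expectation calculation through $\beta = (\trA)^\top u$ rather than expanding $\trA^\top\trA = I+E$, but this is cosmetic.)

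There is, however, one genuine gap: your argument for the spectral-norm part of $(\delta,2)$-nearness does not go through. From column-wise $\delta$-closeness you only get $|\langle \tilA^0_i,\tilA^0_j\rangle - \langle \trA_i,\trA_j\rangle| = O(\delta)$, so the Gram matrices differ entrywise by $O(\delta)$; but an $m\times m$ matrix with $O(\delta)$ entries can have operator norm $O(m\delta)$, which with $\delta = O^*(1/\log n)$ and $m=O(n)$ is hopelessly large. Equivalently, writing $\tilA^0 = \trA + \Delta$ with $\|\Delta_i\|\le\delta$ gives only $\|\Delta\|\le \|\Delta\|_F \le \delta\sqrt m$, and this bound is tight (all columns could be perturbed in the same direction). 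So ``inherited incoherence'' cannot deliver $\|\tilA^0 - \trA\|\le 2\|\trA\| = O(\sqrt{m/n})$.

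The paper handles this not by a direct argument but by an explicit projection step: after assembling the candidate columns into $\widetilde A$, it outputs $\tilA^0 = \mathrm{Proj}_{\mathcal B}\,\widetilde A$ where $\mathcal B$ is the convex set of matrices that are column-wise $\delta_0$-close to $\widetilde A$ and have spectral norm at most $2\|\trA\|$. Since $\trA\in\mathcal B$, the projection preserves column-wise closeness (any point of $\mathcal B$ is $\delta_0$-close to $\widetilde A$, hence $2\delta_0$-close to $\trA$) while enforcing the spectral bound by fiat. The paper explicitly flags that this projection is what the proof needs, even if it is likely unnecessary in practice; your proposal is missing this step or an equivalent device.
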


\noindent This algorithm also admits a neural implementation, which is sketched in Appendix~\ref{sec:neural}.
The proof currently requires a projection step that increases the run time though we suspect it is not needed. 

We remark that these algorithms work up to sparsity $O^*(\sqrt{n}/\mu \log n)$ which is within a logarithmic factor of the information theoretic threshold for sparse recovery on incoherent dictionaries \cite{DH, GN}. All previous known algorithms that approach \cite{AGM} or surpass this sparsity \cite{BKS} run in time exponential in some natural parameter. Moreover, our algorithms are simple to describe and implement, and involve only basic operations. We believe that our framework will have applications beyond sparse coding, and could be used to show that simple, iterative algorithms can be powerful in other contexts as well by suggesting new ways to analyze them.

%\input{introflat}
%\input{modelflat}
%\input{mainalg}
%\section{Alternating Minimization}
%\label{sec:OFframework}
%Here we describe alternating minimization, which was the approach taken by
\section{Our Framework, and an Overview}\label{sec:schema}

%\subsection{The Recipe}

Here we describe our framework for analyzing alternating minimization. The generic scheme we will be interested in is given in Algorithm~\ref{eqn:generic} and it alternates between updating the estimates $\tilA$ and $\tilX$. It is a heuristic for minimizing the non-convex function in (\ref{eqn:OFenergy}) where the penalty function is a hard constraint. The crucial step is if we fix $\tilX$ and compute the gradient of (\ref{eqn:OFenergy}) with respect to $\tilA$, we get:
$$\nabla_{\tilA}\mathcal{E}(\tilA,  \tilX)=\sum_{i=1}^p -2(y^{(i)}-\tilA \tilx^{(i)})(\tilx^{(i)})^T.
$$
\setlength{\textfloatsep}{5pt}
\begin{algorithm}\caption{Generic Alternating Minimization Approach}\label{eqn:generic}
	\vspace{0.1in}
	\textbf{Initialize} $\tilA^0$ 
	
	\textbf{Repeat} for $s = 0, 1, ..., T$
	%\vspace{-0.1in}
	\begin{align*}
		\textbf{\textbf{Decode: }} & \mbox{Find a sparse solution to } \tilA^s  \tilx^{(i)}  = y^{(i)} \mbox{ for $i = 1, 2, ..., p$}\nonumber\\
		&\mbox{Set } \tilX^{s} \mbox{ such that its columns are } \tilx^{(i)} \mbox{ for $i = 1, 2, ..., p$}\nonumber\\
		\textbf{\textbf{Update: }} & \tilA^{s+1} = \tilA^s - \eta g^{s} \mbox{ where } g^{s} \mbox{ is the gradient of } \mathcal{E}(\tilA^s,  \tilX^s) \mbox{ with respect to } \tilA^s \nonumber
	\end{align*}
		%\vspace{-0.2in}
\end{algorithm}

We then take a step in the opposite direction to update $\tilA$. 
 Here and throughout the paper $\eta$ is the learning rate, and needs to be set appropriately. 
The challenge in analyzing this general algorithm is to identify a suitable \textquotedblleft measure of progress\textquotedblright \---- called a Lyapunov function in dynamical systems and control theory \---- and show that
it improves at each step (with high probability over the samples). % ({\em A priori} it is conceivable that most iterations wander around 
%or even go a bit in the wrong direction, and occasionally make large steps in the correct direction. 
%If so, theoretical analysis would be difficult indeed.) 
%{\em occasionally} reduce the measure of progress, which would make theoretical analysis very difficult. 
We will measure the progress of our algorithms by the maximum column-wise difference between $\tilA$ and $\trA$. 

In the next subsection, we identify sufficient conditions that guarantee progress. They are inspired by proofs in convex optimization. %, that are sufficient to ensure that this decreases at each step. Our approach is to view
We view Algorithm~\ref{eqn:generic} as trying to minimize an {\em unknown} convex function,
specifically $f(\tilA) =\mathcal{E}(\tilA,  \trX)$, which is strictly convex and hence has a unique optimum that can be reached via gradient descent.  This function is unknown since the algorithm does not know $\trX$. The analysis will show that
the direction of movement is correlated with $\trA -A^s$, which in turn is the gradient of the above
function. 
%\Tnote{Actually we don't need to compare the direction of movement with the gradientof the above function for analysis. In analysis, we don'e %even need to define $f(A)$. }
An independent paper of~\cite{BWY14} proposes a  similar framework for  analysing EM algorithms for hidden variable models. 
The difference is that their condition is really about the geometry of the objective function, though ours is about the property of the direction of movement. Therefore we have the flexibility to choose different decoding procedures. This flexibility allows us to have a closed form of $X^s$ and obtain a useful functional form of $g^s$.
\iffalse 
The difference is that their algorithms are actually computing a gradient (specifically of
the true posterior distribution of the variables), whereas 
the alternating minimization approach described here is only heuristically following some improvement direction.
(It is not even a true gradient of the energy function.)  
\Tnote{the sentence above is not quite true (we also computed the true gradient, we just use a different decoding). How about the following: The difference is that their condition is really about the geometry of the objective function, though ours is about the property of the direction of movement. Therefore we have the flexibility to choose different decoding procedures. This flexiblity allows us to have a closed form of $X^s$ and obtain a useful functional form of $g^s$. }

That is why we need to work to show the correlation between this direction and the desired unknown gradient,
and need to allow bias. \Tnote{This is not quite true either.. We compare to $A^*-A^s$}. Thus our framework seems a bit more general (though still pretty straightforward).
Next we describe it.

%\iffalse 
\Tnote{I edited below a bit. Please take a look}

The direction of movement $g_s$ will turn out to be an {\em approximation} to the gradient of this unknown
function $f(A)$ in our generative model.
\fi
 The setup is reminiscent of {\em stochastic gradient descent}, which moves in a direction whose expectation is the gradient of a {\em known} convex function. By contrast, here the function $f()$ is unknown, and furthermore 
the expectation of $g^s$ is not the true gradient and has {\em bias}. 
%\Tnote{I added this sentence}Moreover, $f(A)$ is unknown but only defined for analysis purpose, as opposed to the setting of %stochastic gradient descent where the convex objective is known. 
Due to the bias, we will only be able to prove that our algorithms reach an approximate optimum up to some error 
whose magnitude is determined by the bias. We can make the bias negligible using more complicated
algorithms.
\subsection*{Approximate Gradient Descent}
Consider a general iterative algorithm that is trying to get to a desired solution  $z^*$
(in our case $z^* = \trA_i$ for some $i$). At step $s$ it starts with a guess $z^s$, 
computes some direction $g^s$, and updates its estimate as: $z^{s+1} = z^s - \eta g^s$. The natural progress measure is $\|z^* - z^s\|^2$, and below we will identify a sufficient condition for it to decrease in each step:

\begin{definition}\label{eqn:con}
A vector $g^s$ {\em is $(\alpha, \beta,\epsilon_s)$-correlated with} $z^*$ if
$$ \langle g^s, z^s-z^*\rangle \ge \alpha \|z^s-z^*\|^2 + \beta\|g^s\|^2 - \epsilon_s.$$
\end{definition}
\noindent{\em Remark:} The traditional analysis of convex optimization corresponds to the setting where $z^*$ is the global optimum of some convex function $f$, and $\epsilon_s = 0$. Specifically, if $f(\cdot)$ is $2\alpha$-strongly convex and $1/(2\beta)$-smooth, then $g^s = \nabla f(z^s)$ $(\alpha,\beta,0)$-correlated with $z^*$. Also we will refer to $\epsilon_s$ as the bias. 

% Under the above condition, we have the following theorem:
% In the previous section we gave update rules derived from $\nabla \widetilde{f}(B)$ that satisfy Condition~\ref{eqn:con}, and here we give a general analysis of their convergence. Our main theorem is:

\begin{theorem}\label{thm:tiger-convergence}
Suppose $g^s$ satisfies Definition~\ref{eqn:con} for $s = 1,2,\dots,T$, and $\eta$ satisfies  $0< \eta \le 2\beta$ and $\epsilon =  \max_{s=1}^{T} \epsilon_s$. 
Then for any $s = 1,\dots,T$, 
$$ \| z^{s+1}-z^*\|^2 \leq (1-2\alpha\eta) \|z^s-z^*\|^2 + 2\eta \epsilon_s$$
In particular, the update rule above converges to $z^*$ geometrically with systematic error $\epsilon/\alpha$ in the sense that 
%$$\|z_s-z^*\|^2 \le (1-2\alpha\eta)^s \|z_0-z^*\| + \epsilon/\alpha.$$
$$\|z^s-z^*\|^2 \le (1-2\alpha\eta)^s \|z^0-z^*\|^2 + \epsilon/\alpha.$$
Furthermore, if $\epsilon_s < \frac{\alpha}{2} \|z^s-z^*\|^2$ for $s = 1,\dots,T$, then 
$$\|z^s-z^*\|^2 \le (1-\alpha\eta)^s \|z^0-z^*\|^2.$$
%$$\|x^s-x^*\|^2 \le (1-2\alpha\beta)^s R^2$$
\end{theorem}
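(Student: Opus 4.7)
The plan is a direct one-step expansion of the squared distance to the target, using the correlation hypothesis to absorb the linear and quadratic terms that appear. The three conclusions come out of a single recurrence, so I would prove the per-step inequality first and then unroll it.

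First I would expand
$$\|z^{s+1}-z^*\|^2 = \|z^s - z^* - \eta g^s\|^2 = \|z^s-z^*\|^2 - 2\eta\langle g^s, z^s-z^*\rangle + \eta^2\|g^s\|^2.$$
Then I would apply the $(\alpha,\beta,\epsilon_s)$-correlation hypothesis of Definition~\ref{eqn:con} to lower bound $\langle g^s, z^s-z^*\rangle$, which gives
$$\|z^{s+1}-z^*\|^2 \le (1-2\alpha\eta)\|z^s-z^*\|^2 + (\eta^2 - 2\eta\beta)\|g^s\|^2 + 2\eta\epsilon_s.$$
The key observation is that the step-size condition $0<\eta\le 2\beta$ makes the coefficient $\eta^2 - 2\eta\beta = \eta(\eta-2\beta)$ nonpositive, so the $\|g^s\|^2$ term can simply be dropped to obtain the claimed one-step recursion $\|z^{s+1}-z^*\|^2 \le (1-2\alpha\eta)\|z^s-z^*\|^2 + 2\eta\epsilon_s$. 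This is the main content of the theorem; everything else is bookkeeping.

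For the second bullet, I would iterate this recursion, using $\epsilon_s \le \epsilon$ to get
$$\|z^s-z^*\|^2 \le (1-2\alpha\eta)^s \|z^0-z^*\|^2 + 2\eta\epsilon\sum_{i=0}^{s-1}(1-2\alpha\eta)^i,$$
and bound the geometric series by $1/(2\alpha\eta)$, which cancels the $2\eta$ and leaves the advertised $\epsilon/\alpha$ systematic error. For the third bullet, under the stronger assumption $\epsilon_s < \tfrac{\alpha}{2}\|z^s-z^*\|^2$, I would substitute this directly into the one-step inequality: the bias term $2\eta\epsilon_s$ is then strictly less than $\alpha\eta\|z^s-z^*\|^2$, which combines with $(1-2\alpha\eta)$ to give contraction factor $(1-\alpha\eta)$, and iterating yields the stated bound.

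There is no real obstacle here — the argument is the standard convex-optimization descent lemma modified to allow bias, and the entire proof is essentially one line of algebra plus two unrollings. The only thing to double-check is that the step-size condition is used only to kill the $\|g^s\|^2$ term and nowhere else, so the bounds on the bias $\epsilon_s$ enter linearly and the constants come out cleanly as stated.
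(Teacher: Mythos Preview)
Your proposal is correct and matches the paper's proof essentially line for line: the same one-step expansion, the same use of $\eta \le 2\beta$ to drop the $\|g^s\|^2$ term, and the same unrolling of the recurrence for both the $\epsilon/\alpha$ bound and the $(1-\alpha\eta)$ contraction under the stronger bias assumption.
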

The proof closely follows existing proofs in convex optimization: %and we also give an analysis for approximate projected gradient descent in Corollary~\ref{cor:tiger-convexity}. 

\begin{proof}[Proof of Theorem~\ref{thm:tiger-convergence}]
	We expand the error as
	\[
	\begin{array}{rl}
	\|z^{s+1} - z^*\|^2  &=  \|z^s-z^*\|^2 -2 \eta {g^s}^T(z^s-z^*) + \eta^2 \|g^s\|^2\\
	&= \|z^s-z^*\|^2 -\eta \left(2{g^s}^T(z^s-z^*) -\eta \|g^s\|^2\right) \\
	&\leq \|z^s-z^*\|^2 -\eta \left(2\alpha \|z^s-z^*\|^2 + (2\beta -\eta)\|g^s\|^2-2\epsilon_s\right) \quad \mbox{(Definition~\ref{eqn:con} and  $\eta \le 2 \beta$)}  \\
	&\leq \|z^s-z^*\|^2 -\eta \left(2\alpha \|z^s-z^*\|^2 -2\epsilon_s\right) \\
	& \leq (1-2\alpha\eta) \|z^s-z^*\|^2 + 2\eta \epsilon_s
	\end{array}
	\]
	%\Tnote{The proof might need slight change if we agree with the new phrasing of the statement.}
	Then solving this recurrence we have
	$\|z^{s+1}-z^*\|^2  \leq (1-2\alpha\eta)^{s+1} R^2 + \frac{\epsilon}{\alpha}$
	where $R = \|z^0-z^* \|$. And furthermore if $\epsilon_s < \frac{\alpha}{2} \|z^s-z^*\|^2$ we have instead
	$$ \|z^{s+1} - z^*\|^2 \leq (1-2\alpha\eta) \|z^s-z\|^2 + \alpha \eta \|z^s-z\|^2 = (1-\alpha\eta) \|z^s-z\|^2$$
	and this yields the second part of the theorem too.
\end{proof}

In fact, we can extend the analysis above to obtain identical results for the case of constrained optimization. Suppose we are interested in optimizing a convex function $f(z)$ over a convex set $\mathcal{B}$. 
The standard approach is to take a step in the direction of the gradient (or $g^s$ in our case) and then project into $\mathcal{B}$ after each iteration,
namely, replace $z^{s+1}$ by $\mbox{Proj}_\mathcal{B}\, z^{s+1}$ which is the closest point in $\mathcal{B}$ to $z^{s+1}$ in Euclidean distance.
It is well-known that if $z^*\in \mathcal{B}$, then $\|\mbox{Proj}_\mathcal{B}\; z - z^*\| \le \|z - z^*\|$. Therefore we obtain the following as an immediate corollary to the above analysis:

\begin{corollary}\label{cor:tiger-convexity}
	Suppose $g^s$ satisfies Definition~\ref{eqn:con} for $s = 1,2, \dots,T$ and set  $0< \eta \le 2\beta$ and $\epsilon =  \max_{s=1}^T \epsilon_s$. Further suppose that
	$z^*$ lies in a  convex set $\mathcal{B}$. Then the update rule $z^{s+1} = \mbox{Proj}_\mathcal{B}(z^s - \eta g^s)$ satisfies that for any $s = 1,\dots,T$, 
	$$\|z^s-z^*\|^2 \le (1-2\alpha\eta)^s \|z^0-z^*\|^2 + \epsilon/\alpha$$
	In particular, $z^s$ converges to $z^*$ geometrically with systematic error $\epsilon/\alpha$. 
	%$$\|x^s-x^*\|^2 \le (1-4\alpha\beta)^s \|z^0-z^*\| + \epsilon/(\alpha)$$
	Additionally if $\epsilon_s < \frac{\alpha}{2} \|z^s-z^*\|^2$ for $s = 1,\dots,T$, then 
	$$\|z^s-z^*\|^2 \le (1-\alpha\eta)^s \|z^0-z^*\|^2$$
	%$$\|x^s-x^*\|^2 \le (1-2\alpha\beta)^s R^2$$
\end{corollary}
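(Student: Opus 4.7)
The plan is to reduce the constrained case directly to Theorem~\ref{thm:tiger-convergence} by exploiting the non-expansiveness of projection onto a convex set. Concretely, I would introduce the auxiliary (unprojected) iterate $\tilde{z}^{s+1} = z^s - \eta g^s$, so that $z^{s+1} = \mathrm{Proj}_\mathcal{B}\,\tilde{z}^{s+1}$. The proof of Theorem~\ref{thm:tiger-convergence} never used that $z^{s+1}$ lies on the gradient line beyond the algebraic identity in its first displayed equation; that identity applies verbatim to $\tilde{z}^{s+1}$, yielding
\[
\|\tilde{z}^{s+1} - z^*\|^2 \le (1-2\alpha\eta)\|z^s - z^*\|^2 + 2\eta\epsilon_s,
\]
provided $0 < \eta \le 2\beta$ and $g^s$ is $(\alpha,\beta,\epsilon_s)$-correlated with $z^*$.

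Next I would invoke the standard fact that projection onto a closed convex set is $1$-Lipschitz, and in particular non-expansive with respect to any point already in the set: since $z^* \in \mathcal{B}$,
\[
\|z^{s+1} - z^*\| = \|\mathrm{Proj}_\mathcal{B}\,\tilde{z}^{s+1} - \mathrm{Proj}_\mathcal{B}\,z^*\| \le \|\tilde{z}^{s+1} - z^*\|.
\]
Squaring and combining with the previous display gives the single-step recursion
\[
\|z^{s+1} - z^*\|^2 \le (1-2\alpha\eta)\|z^s - z^*\|^2 + 2\eta\epsilon_s,
\]
which is exactly the recurrence solved at the end of the proof of Theorem~\ref{thm:tiger-convergence}. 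Unrolling it then yields $\|z^s - z^*\|^2 \le (1-2\alpha\eta)^s\|z^0 - z^*\|^2 + \epsilon/\alpha$, the first claim of the corollary.

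For the refined geometric bound under the assumption $\epsilon_s < \tfrac{\alpha}{2}\|z^s - z^*\|^2$, I would substitute this into $2\eta\epsilon_s \le \alpha\eta\|z^s - z^*\|^2$ and obtain $\|z^{s+1} - z^*\|^2 \le (1-\alpha\eta)\|z^s - z^*\|^2$, exactly as in the unconstrained case; iterating gives the claimed $(1-\alpha\eta)^s\|z^0 - z^*\|^2$ bound. There is no real obstacle here: the only non-trivial ingredient is the non-expansiveness of Euclidean projection onto a convex set, which is classical and whose only role is to ensure that projecting can only decrease (and never increase) distance to $z^*$. Thus the corollary follows by a two-line modification of the earlier proof.
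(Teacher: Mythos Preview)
Your proposal is correct and matches the paper's approach exactly: the paper states just before the corollary that it follows immediately from the well-known fact that $\|\mbox{Proj}_\mathcal{B}\, z - z^*\| \le \|z - z^*\|$ when $z^* \in \mathcal{B}$, which is precisely the non-expansiveness argument you spell out. Your write-up simply fills in the details of what the paper leaves as an ``immediate corollary.''
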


What remains is to derive a functional form for various update rules and show that these rules move in a direction $g^s$ that approximately points in the direction of the desired solution $z^*$ (under the assumption that our data is generated from a stochastic model that meets certain conditions). 

%We defer the proof to Appendix~\ref{subsec:support}; it closely follows existing proofs in convex optimization, and we also give an analysis for approximate projected gradient descent in Corollary~\ref{cor:tiger-convexity}. 

\iffalse 
Of course, the algorithm does not know $\trX$, so it has access to neither the convex function itself nor its gradient. Our proof will regard the update direction 
$g_s$ as an {\em approximation} to the gradient of  this unknown convex  function. 
This approximation will  have a systemic error because the expectation of $g^s$ is not the true gradient. Nevertheless we give a general methodology to control the error, whereby it follows that the update rule converges because the corresponding update on the
unknown function converges. Our methodology allows us to plug in various update rules and get a wide range of new algorithmic results. 
\fi 

\subsection*{An Overview of Applying Our Framework}

%In alternating minimization, the usual approach is to take a step in the direction opposite to the gradient. \Tnote{The sentence above is not %quite true. In alt min, people take minimizers of both $A$ and $X$} 
Our framework clarifies that any improvement step meeting Definition~\ref{eqn:con} will also converge to an approximate optimum, which enables us to engineer other update rules that  turn out to be easier to analyze. Indeed we first analyze a simpler update rule with $g^{s} = \E[(y - \tilA^s \tilx) \sgn(\tilx^T)]$ in Section~\ref{sec:update_rule}. Here $\sgn(\cdot)$ is the coordinate-wise sign function. We then return to the Olshausen-Field update rule and analyze a variant of it in Section~\ref{sec:ofanalysis} using approximate projected gradient descent.  Finally, we design a new update rule in Section~\ref{sec:remove} where we carefully project out components along the column currently being updated. This has the effect of replacing one error term with another and results in an update rule with negligible bias. The main steps in showing that these update rules fit into our framework are given in Lemma~\ref{lem:OF-simplified-update}, Lemma~\ref{lem:OFupdate} and Lemma~\ref{lem:OFupdate-noerror}.

 How should the algorithm update $\tilX$? The usual approach is to solve a sparse recovery problem with respect to the current code matrix $\tilA$. However many of the standard basis pursuit algorithms (such as 
solving a linear program with an $\ell_1$ penalty) are difficult to analyze when there is error in the code itself. This is in part because the solution does not have a closed form 
  in terms of the code matrix.  Instead we take a much simpler approach to solving the sparse recovery problem 
which uses matrix-vector multiplication followed by thresholding: In particular, we set
$\tilx = \textrm{threshold}_{C/2}((\tilA^s)^Ty)$, where $\textrm{threshold}_{C/2}(\cdot)$ keeps only the coordinates whose magnitude is at least $C/2$ and zeros out the rest. Recall that the non-zero coordinates in $\trx$ have magnitude at least $C$.  
This decoding rule recovers the signs and support of $\tilx$ correctly provided that
$\tilA$ is column-wise $\delta$-close to $\trA$ for $\delta = O^*(1/\log n)$. See Lemma~\ref{lem:sign-correct}. 
% This works so long as $\tilA$ and $\trA$ are $\delta$-close, for .

 %This rule makes some sense is $\tilA$ is $\epsilon$-close to 
 
 The rest of the analysis can be described as follows: If the signs and support of $\tilx$ are recovered correctly, then alternating minimization makes progress in each step. In fact this holds each for much larger values of $k$ than we consider; as high as $n/(\log n)^{O(1)}$. (However, the explicit decoding rule fails 
 for $k> \sqrt{n}/\mu \log n$.) Thus it only remains to properly initialize $\tilA^0$ so that it is close enough to $\trA$ to let the above decoding rule succeed. In Section~\ref{sec:init} we give a new initialization procedure based on pair-wise reweighting that we prove works with high probability. This section may be of independent interest, since this algorithm can be used even in settings where $m$ is not known and could help solve another problem in practice \---- that of {\em model selection}. See Lemma~\ref{lem:initform}. 
 
 \iffalse

This brings us to the final issue of how to properly initialize $\tilA^0$, so that it is indeed
$O^*(1/\log n)$-close to $\trA$. Proper initialization is important in general for alternating minimization.
%since otherwise the above-mentioned decoding
%isn't guaranteed to work. 
In our case the primary issue is being able to carry out the analysis; the algorithm actually works (according to our experiments)  with the fairly 
trivial initialization of $\tilA^0$ using a random subset of samples $y^{(i)}$'s as the columns.
But proving convergence from this weak starting point is still  open.
In Section~\ref{} we describe a initialization using a modification of the trivial 
initialization: compute singular vectors of various matrices of weighted samples, followed by simple greedy clustering of
the resulting vectors. (NB: Neural algorithms for computing top singular vectors are known~\cite{}.)

\fi

%\input{convexity2}

%\input{indicatorinclude}
\section{A Neurally Plausible Algorithm with Provable Guarantees}\label{sec:update_rule}
Here we will design and analyze a neurally plausible algorithm for sparse coding which is given in Algorithm~\ref{eqn:simplestupdate}, and we give a neural architecture implementing our algorithm in Appendix~\ref{sec:neural}.  The fact that such a simple algorithm provably works sheds new light on how sparse coding might be accomplished in nature. Here and throughout this paper we will work with the following measure of closeness:

\begin{definition}\label{def:near}
$\tilA$ is {\em $\delta$-close to} $\trA$ if there is a permutation $\pi: [m] \rightarrow [m]$ and a choice of signs $\sigma: [m] \rightarrow \{\pm 1\}$ such that
$ \|\sigma(i) \tilA_{\pi(i)} - \trA_i\| \leq \delta \mbox{ for all $i$}$
We say $\tilA$ is {\em $(\delta, \kappa)$-\near to} $\trA$ if in addition $\|\tilA-\trA\| \le \kappa \|\trA\|$ too. 
\end{definition}

\noindent This is a natural measure to use, since we can only hope to learn the columns of $\trA$ up to relabeling and sign-flips. In our analysis, we will assume throughout that $\pi(\cdot)$ is the identity permutation and $\sigma(\cdot) \equiv +1$ because our family of generative models is invariant under this relabeling and it will simplify our notation. 

Let $\sgn(\cdot)$ denote the coordinate-wise sign function and recall that $\eta$ is the learning rate, which we will soon set. Also we fix  both $\delta, \delta_0 = O^*(1/\log n)$. We will also assume that in each iteration, our algorithm is given a fresh set of $p$ samples. Our main theorem is:

\begin{theorem}\label{thm:simplestupdate}
Suppose that $\tilA^0$ is $(2\delta,2)$-\near to $\trA$ and that
$\eta = \Theta(m/k)$. Then if each update step in Algorithm~\ref{eqn:simplestupdate} uses $p = \widetilde{\Omega}(mk)$ fresh samples, we have
$$\Exp[\|\tilA_i^s - \trA_i\|^2]\le (1-\tau)^s \|\tilA^0_i - \trA_i\|^2 + O(k/n)$$
for some $0 < \tau < 1/2$ and for any $s=1,2, ... , T$. In particular it converges to $\trA$ geometrically, until the column-wise error is $O(\sqrt{k/n})$. 
\end{theorem}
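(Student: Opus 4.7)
The plan is to instantiate the approximate-gradient-descent framework of Section~\ref{sec:schema} column by column, with $z^s = \tilA^s_i$ and $z^* = \trA_i$, and verify that the empirical update direction $g^s_i$ is $(\alpha,\beta,\epsilon_s)$-correlated with $\trA_i$ for $\alpha = \Omega(k/m)$, $\beta = \Omega(m/k)$ (so the prescribed $\eta = \Theta(m/k)$ lies in $(0,2\beta]$), and bias $\epsilon_s = O(k^2/(mn))$. Combined with Theorem~\ref{thm:tiger-convergence}, this yields geometric contraction $(1-2\alpha\eta)^s = (1-\tau)^s$ plus a residual $\epsilon_s/\alpha = O(k/n)$, which is exactly the statement.

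First I would invoke the decoding lemma (Lemma~\ref{lem:sign-correct}, promised in the overview): whenever $\tilA^s$ is $\delta$-close with $\delta = O^*(1/\log n)$, the rule $\tilx = \textrm{threshold}_{C/2}((\tilA^s)^T y)$ exactly recovers $\supp(\trx)$ and $\sgn(\trx)$ with probability $1-n^{-\omega(1)}$. This uses the $\mu$-incoherence of $\trA$, the spectral bound $\|\trA\| = O(\sqrt{m/n})$, and subgaussian concentration applied to the off-support inner products; the sparsity regime $k \le O^*(\sqrt{n}/\mu\log n)$ is exactly what makes those inner products smaller than $C/2$.

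Conditional on correct decoding, the core step is to compute the population update direction $\E[(y-\tilA^s \tilx)\sgn(\tilx_i)]$. Writing $y = \trA\trx$ and conditioning on $\{i\in S\}$ (probability $\Theta(k/m)$) and using pairwise independence plus $\E[\trx_i^2\mid i\in S]=1$, the diagonal contribution evaluates to $q_i(\trA_i - \tilA^s_i)$ with $q_i = \Theta(k/m)$, which furnishes the strong-convexity constant $\alpha$. The off-diagonal contributions split into (a) terms of the form $\la \tilA^s_j, \trA_\ell\ra\trx_\ell \sgn(\trx_i)$ for $j,\ell\ne i$, controlled by incoherence and by $\Pr[i,j\in S]=\Theta(k^2/m^2)$, and (b) a term along $\tilA^s_i$ itself whose coefficient is small because $\la\tilA^s_i,\trA_i\ra\approx 1$. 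A direct calculation bounds the norm of the residual by $O(k/\sqrt{mn})$, so $\epsilon_s = O(k^2/(mn))$ after squaring, giving the claimed $O(k/n)$ systematic error.

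Next I would upgrade the population statement to an empirical one using vector/matrix Bernstein. The per-sample summand $(y-\tilA^s\tilx)\sgn(\tilx_i)$ has norm $\poly(\log n)$ with high probability (subgaussianity plus $|S|\le k$), so $p = \widetilde{\Omega}(mk)$ fresh samples suffice to drive the empirical deviation below the bias level $O(\sqrt{k/(mn)})$ in every column simultaneously. In parallel I must preserve the spectral invariant $\|\tilA^{s+1}-\trA\|\le 2\|\trA\|$ so that the hypotheses of the decoding lemma survive into the next round; I would handle this with a matrix-Bernstein bound for the whole update matrix, leveraging $\E[\tilx\tilx^T]\preceq O(k/m)\cdot I$ and the incoherence of $\trA$. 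Given all of this, Theorem~\ref{thm:tiger-convergence} applied per column produces the displayed bound with $\tau = 2\alpha\eta = \Theta(1)$ chosen in $(0,1/2)$.

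The main obstacle I anticipate is the bookkeeping in the cross-term calculation: both the per-column bias and the spectral-norm preservation have to come out of essentially the same expansion, and the two bounds feed into each other across iterations (spectral control is needed for the next decoding step, which in turn is needed for the next expectation calculation). A secondary subtlety is that $\tilx$ depends on $\tilA^s$ through the thresholding operator, so expectations are only clean after conditioning on the high-probability event that the support is recovered correctly; I would absorb the $n^{-\omega(1)}$ failure probability into $\epsilon_s$ without affecting the stated rate.
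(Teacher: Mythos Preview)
Your proposal is correct and follows essentially the same route as the paper: invoke the sign-recovery lemma, derive the closed form of the population gradient (Lemma~\ref{lem:OF-simplified-update}), show it is $(\Omega(k/m),\Omega(m/k),\epsilon_s)$-correlated with $\trA_i$ (Lemma~\ref{lem:checking_condtion1}/Corollary~\ref{cor:one-step}), pass to finite samples via Bernstein (Lemma~\ref{lem:g_pertubation}, Lemma~\ref{lem:stepconcentration}), maintain the spectral invariant $\|\tilA^{s+1}-\trA\|\le 2\|\trA\|$ inductively (Lemma~\ref{lem:nowhiten}, Lemma~\ref{lem:nearnessconcentration}), and plug into the framework theorem. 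One cosmetic point: the diagonal term is $p_iq_i(\lambda_i^s\tilA_i^s-\trA_i)$ with $\lambda_i^s=\langle\tilA_i^s,\trA_i\rangle$ rather than exactly $q_i(\trA_i-\tilA_i^s)$, and for the finite-sample statement the paper invokes the ``correlated-whp'' variant (Theorem~\ref{thm:random-tiger-convergence}) rather than Theorem~\ref{thm:tiger-convergence}; both adjustments are implicit in what you wrote but worth making explicit.
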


Our strategy is to prove that $\widehat{g}^s$ is $(\alpha,\beta,\epsilon)$-correlated (see Definition~\ref{eqn:con}) with the desired solution $\trA$, and then to prove that $\| \tilA\|$ never gets too large. We will first prove that if $\tilA$ is somewhat close to $\trA$ then the estimate $\tilx$ for the representation almost always has the correct support. Here and elsewhere in the paper, we use ``very high probability"  to mean that an event happens with probability at least $1- 1/n^{\omega(1)}$. 

\begin{lemma}\label{lem:sign-correct}
Suppose that $\tilA^s$ is $\delta$-close to $\trA$.  Then with very high probability over the choice of the random sample $y = \trA\trx$:
$$\mbox{{\em sgn}}(\mbox{{\em threshold}}_{C/2}((\tilA^s)^Ty)  ) = \mbox{{\em sgn}}(\trx)$$ 
\end{lemma}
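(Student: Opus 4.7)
The plan is to analyze each coordinate $i\in[m]$ of $(\tilA^s)^T y$ separately and then apply a union bound. Fix $i$ and write $y = \trA\trx$, so that
$$ \langle \tilA_i^s, y\rangle \;=\; \langle \tilA_i^s, \trA_i\rangle\,\trx_i \;+\; \sum_{j\in S,\, j\ne i}\langle \tilA_i^s, \trA_j\rangle\,\trx_j,$$
where $S := \supp(\trx)$. Call the two pieces the \emph{signal} and the \emph{cross term}; the goal is to show the signal is $\ge (1-O(\delta))C$ in magnitude with the correct sign when $i\in S$, while the cross term is $o(1)$ with very high probability.

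\textbf{Step 1 (signal).} From $\|\tilA_i^s - \trA_i\|\le\delta$ and $\|\trA_i\| = 1$, a direct expansion gives $\langle \tilA_i^s,\trA_i\rangle \ge 1 - O(\delta) > 0$. Hence when $i\in S$ the signal term has the same sign as $\trx_i$ and absolute value at least $(1-O(\delta))|\trx_i| \ge (1-O(\delta))C$, using the lower bound on non-zero entries from the model.

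\textbf{Step 2 (cross term).} Condition on $S$. By assumption the non-zero $\trx_j$'s are mean-zero, unit-variance, pairwise independent, and subgaussian, so the cross term has conditional mean $0$ and variance
$$ \sigma^2 \;=\; \sum_{j\in S,\, j\ne i} \langle \tilA_i^s,\trA_j\rangle^2.$$
Split $\langle \tilA_i^s,\trA_j\rangle = \langle \trA_i,\trA_j\rangle + \langle \tilA_i^s-\trA_i,\trA_j\rangle$. Incoherence gives $|\langle \trA_i,\trA_j\rangle|\le \mu/\sqrt n$, contributing at most $k\mu^2/n$ to $\sigma^2$. For the other piece, $\sum_{j\in S}\langle \tilA_i^s-\trA_i,\trA_j\rangle^2 \le \|\trA_S\|^2\,\delta^2$, and Gershgorin plus incoherence give $\|\trA_S\|^2 \le 1 + k\mu/\sqrt n = O(1)$ because $k \le O^*(\sqrt n/\mu\log n)$. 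Hence $\sigma^2 = O(\delta^2 + k\mu^2/n) = O^*(1/\log^2 n)$. A subgaussian (Hoeffding/Hanson--Wright--type) tail bound then shows that the cross term is at most $O(\sigma\sqrt{\log n}) = o(1)$, in fact smaller than $C/4$, except on an event of probability $n^{-\omega(1)}$.

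\textbf{Step 3 (conclusion).} For $i\in S$, the $i$-th coordinate of $(\tilA^s)^T y$ has magnitude at least $(1-O(\delta))C - C/4 > C/2$ with the sign of $\trx_i$, so thresholding at $C/2$ keeps it with the correct sign. For $i\notin S$ there is no signal term and the entry equals just the cross term, of magnitude $<C/2$, so thresholding zeros it out. Union-bounding the failure event of Step~2 over all $i\in[m]=O(n)$ only worsens the failure probability by a factor of $n$, still $n^{-\omega(1)}$, yielding the claim.

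\textbf{Main obstacle.} The one delicate point is the tail bound in Step~2: we only have pairwise independence of the coordinates of $\trx$, which by itself gives only Chebyshev-type (polynomial) tails. However, because $\sigma = O^*(1/\log n)$ is already far below the threshold $C/4$, even Chebyshev suffices to make the cross term small in expectation with overwhelming probability once we exploit the subgaussian marginals via a moment-method (Khintchine/Hanson--Wright style) argument, and taking a slightly larger $p$ for the union bound is harmless. This is the only place where the sparsity restriction $k\le O^*(\sqrt n/\mu\log n)$ is used in an essential way.
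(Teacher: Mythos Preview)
Your Steps~1--3 are essentially the paper's proof line for line: same signal/cross-term split of $\langle\tilA_i^s,y\rangle$, same decomposition $\langle\tilA_i^s,\trA_j\rangle=\langle\trA_i,\trA_j\rangle+\langle\tilA_i^s-\trA_i,\trA_j\rangle$ with the first piece handled by incoherence and the second by $\|\trA_{S\setminus\{i\}}\|\le O(1)$ via Gershgorin, then a subgaussian tail bound on $Z_i$ and a union bound over $i\in[m]$. One minor arithmetic slip: the $k\mu^2/n$ contribution only gives $\sigma^2=O^*(1/\log n)$ (not $O^*(1/\log^2 n)$), which is exactly what the paper states and is still enough for $|Z_i|\le C/4$ with probability $1-n^{-\omega(1)}$.

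Your ``main obstacle'' paragraph, however, should be dropped. The paper does not engage with the pairwise-independence issue at all: it simply asserts that $Z_i$ is a sum of independent subgaussian variables and hence itself subgaussian (consistent with the footnote inviting the reader to think of the coordinates of $\trx$ as independent). Your proposed fix---Chebyshev plus a Khintchine/Hanson--Wright moment argument---cannot produce $n^{-\omega(1)}$ tails from pairwise independence alone (Hanson--Wright needs independence, and Chebyshev gives only $O^*(1/\log n)$ failure probability here), and the remark about ``taking a slightly larger $p$'' is irrelevant since this is a per-sample statement with no $p$ in sight. So either read the model as the paper does (subgaussianity of the sum conditioned on $S$) and invoke it directly in Step~2, or flag that strict pairwise independence does not yield the stated ``very high probability''; don't claim a workaround that isn't one.
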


\noindent We prove a more general version of this lemma (Lemma~\ref{thm:supp}) in Appendix~\ref{subsec:support}; it is an ingredient in analyzing all of the update rules we consider in this paper. However this is just one step on the way towards proving that $\widehat{g}^s$ is correlated with the true solution. 

\setlength{\textfloatsep}{15pt}
\begin{algorithm}\caption{Neurally Plausible Update Rule}\label{eqn:simplestupdate}

\vspace{0.1in}
\textbf{Initialize} $\tilA^0$ that is $(\delta_0, 2)$-near to $\trA$

\textbf{Repeat}  for $s = 0, 1, ..., T$
%\vspace{-0.1in}
\begin{align*}
\textbf{\textbf{Decode: }} & \tilx^{(i)} = \textrm{threshold}_{C/2}((\tilA^s)^Ty^{(i)})  \textrm{ for $i = 1, 2, ... , p$}\nonumber\\
\textbf{\textbf{Update: }} & \tilA^{s+1} = \tilA^s - \eta \widehat{g}^{s} \mbox{ where } \widehat{g}^{s} = \frac{1}{p}\cdot\sum_{i=1}^p (y^{(i)} - \tilA^s \tilx^{(i)}) \sgn(\tilx^{(i)})^T \nonumber
\end{align*}
%\vspace{-0.1in}
\end{algorithm}

The next step in our proof is to use the properties of the generative model to derive a new formula for $\widehat{g}^s$ that is more amenable to analysis. %For simplicity, we start from the case when we are given infinite number of samples at each iteration. And 
We define $g^s$ to be the expectation of $\widehat{g}^s$
\begin{eqnarray}
g^s := \Exp[\widehat{g}^s] = \Exp[(y - A^s \wtilx) \sgn(\tilx)^T]
\end{eqnarray}
where $x := \textrm{threshold}_{C/2}((\tilA^s)^Ty)$ is the decoding of $y$. 
Let $q_i = \Pr[\trx_i\ne 0]$ and $q_{i,j} = \Pr[\trx_i\trx_j\ne 0]$, and define $p_i = \E[\trx_i\sgn(\trx_i)|\trx_i\ne 0]$. 

Here and in the rest of the paper,  we will let $\gamma$ denote any vector whose norm is negligible (i.e. smaller than $1/n^C$ for any large constant $C > 1$). This will simplify our calculations. Also let $\trA_{-i}$ denote the matrix obtained from deleting the $i$th column of $\trA$. The following lemma is the main step in our analysis. 
%\vspace{-0.05in}
\begin{lemma}\label{lem:OF-simplified-update}
Suppose that $\tilA^s$ is $(2\delta,2)$-\near to $\trA$. Then the update step in Algorithm~\ref{eqn:simplestupdate} takes the form 
$\Exp[\tilA_i^{s+1}] = \tilA_i^{s} - \eta g_i^s$
where $g_i^s = p_iq_i \left(\lambda^s_i\tilA_i^s - \trA_i +\epsilon^s_i \pm \gamma\right)$,  and $\lambda^s_i = \langle \tilA_i^s, \trA_i\rangle $ and
$$\epsilon_i^s = \Big( \tilA^s_{-i} \mbox{diag}(q_{i,j}) \left(\tilA_{-i}^s\right)^T \Big)\trA_i/q_i$$
Moreover the norm of $\epsilon^s_i$ can be bounded as $\|\epsilon^s_i\|\le O(k/n)$. 
\end{lemma}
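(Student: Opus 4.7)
The plan is to prove the lemma by an explicit computation of $g^s = \Ex[\widehat{g}^s]$: first linearize the decoding step using Lemma~\ref{lem:sign-correct}, then exploit pairwise independence and zero conditional mean of $\trx$ on its support to collapse the inner expectation, and finally bound $\epsilon_i^s$ using incoherence together with the operator-norm piece of the \near{}ness hypothesis.

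To begin, I invoke Lemma~\ref{lem:sign-correct}: with probability $1 - n^{-\omega(1)}$ the decoding satisfies $\sgn(\tilx) = \sgn(\trx)$ and in particular $\supp(\tilx) = \supp(\trx) =: S$. Since $\|y\|$ is polynomially bounded, the complementary event contributes only a vector of negligible norm, which I absorb into $\gamma$. On the good event, $\tilx_S = (\tilA^s_S)^T y$ and $\tilx_j = 0$ for $j \notin S$, so
\begin{align*}
y - \tilA^s \tilx = \bigl(I - \tilA^s_S (\tilA^s_S)^T\bigr)\, \trA_S \trx_S.
\end{align*}

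For the $i$-th column of $g^s$, I condition first on $S$. Pairwise independence of the entries of $\trx$ on $S$ together with the zero-mean condition $\Ex[\trx_j \mid \trx_j \neq 0] = 0$ give $\Ex[\trx_j \sgn(\trx_i) \mid S] = p_i\, \mathbf{1}[j = i]$ for $i, j \in S$, so the inner conditional expectation becomes $p_i\,(I - \tilA^s_S (\tilA^s_S)^T)\, \trA_i$. Averaging over $S$ using $\Pr[i \in S] = q_i$ and $\Pr[j \in S \mid i \in S] = q_{i,j}/q_i$, and separating the sum $\tilA^s_S (\tilA^s_S)^T \trA_i = \sum_{j \in S}\tilA^s_j \langle \tilA^s_j, \trA_i\rangle$ into its $j = i$ contribution (which gives $\lambda_i^s \tilA^s_i$) and its $j \neq i$ contribution (which gives $\epsilon_i^s$) yields the stated closed form for $g_i^s$ (up to the overall sign convention inherited from the update rule).

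The main obstacle is the norm bound $\|\epsilon_i^s\| \leq O(k/n)$. Writing $\epsilon_i^s = q_i^{-1}\, \tilA^s_{-i}\, D\, (\tilA^s_{-i})^T \trA_i$ with $D = \mathrm{diag}(q_{i,j})_{j \neq i}$, I note that the naive column-wise estimate $|\langle \tilA^s_j, \trA_i\rangle| \leq \mu/\sqrt{n} + O(\delta)$ summed over $m$ indices is too loose. Instead I would decompose $\tilA^s_{-i} = \trA_{-i} + E_{-i}$ and treat the two pieces of $(\tilA^s_{-i})^T \trA_i$ separately: for the clean piece, incoherence of $\trA$ gives $\|(\trA_{-i})^T \trA_i\| = O(\mu\sqrt{m/n})$, while for the perturbation, the operator-norm part of the $(2\delta,2)$-nearness supplies $\|E\| \leq \|\tilA^s - \trA\| \leq 2\|\trA\| = O(\sqrt{m/n})$. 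Combining these with $q_i = \Theta(k/m)$, $\|D\| = O(k^2/m^2)$, and $\|\tilA^s_{-i}\| \leq \|\tilA^s\| = O(\sqrt{m/n})$ delivers the claimed bound. The expectation computation (the first two steps) is otherwise routine.
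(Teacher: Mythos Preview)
Your proposal is correct and follows essentially the same route as the paper for the expectation computation: invoke Lemma~\ref{lem:sign-correct}, replace $\sgn(\tilx)$ by $\sgn(\trx)$ up to a negligible $\gamma$, write $y-\tilA^s\tilx=(I-\tilA^s_S(\tilA^s_S)^T)\trA_S\trx_S$, take conditional expectation over $\trx_S$ given $S$ (yielding $p_i(I-\tilA^s_S(\tilA^s_S)^T)\trA_i$), then average over $S$ and split off the $j=i$ term.

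The only noteworthy difference is the bound on $\|\epsilon_i^s\|$. You flag the coordinate-wise estimate as too loose and then decompose $\tilA^s_{-i}=\trA_{-i}+E_{-i}$, using incoherence on the clean piece and the operator-norm part of nearness on the perturbation. This works, but it is more than you need: the paper simply bounds
\[
\|\epsilon_i^s\|\le \frac{\max_{j\neq i}q_{i,j}}{q_i}\,\|\tilA^s_{-i}\|^2\,\|\trA_i\|\le O(k/m)\cdot\|\tilA^s\|^2 = O(k/n),
\]
using only $\|\tilA^s\|\le\|\tilA^s-\trA\|+\|\trA\|\le 3\|\trA\|=O(\sqrt{m/n})$ from the $(2\delta,2)$-nearness and $\max_{j\neq i}q_{i,j}/q_i=O(k/m)$ from the model. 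No decomposition and no appeal to incoherence are required for this step. Your route is fine; the paper's is just shorter.
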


\noindent 
Note that $p_iq_i$ is a scaling constant and $\lambda_i \approx 1$; hence from the above formula we should expect that $g_i^s$ is well-correlated with $\tilA_i^s - \trA_i$. 

\begin{proof}  
Since $\tilA^s$ is $(2\delta,2)$-\near to $\trA$, $\tilA^s$ is $2\delta$-close to $\trA$. We can now invoke Lemma~\ref{lem:sign-correct} and conclude that with high probability, $\sgn(\trx) = \sgn(\tilx)$.
Let $\calF_{\trx}$ be the event that $\sgn(\trx) = \sgn(\tilx)$, and let $\mathbf{1}_{\calF_{\trx}}$ be the indicator function of this event. 

To avoid the overwhelming number of appearances of the superscripts, let $B = \tilA^s$ throughout this proof. Then we can write $g_i^s =  \E[(y - B \wtilx) \sgn(\tilx_i)] $. Using the fact that $\mathbf{1}_{\calF_{\trx}} + \mathbf{1}_{\bar{\calF}_{\trx}} = 1$ and that $\calF_{\trx}$ happens with  very high probability:
\begin{eqnarray}
g_i^s &=&  \E[(y - B \wtilx) \sgn(\tilx_i) \mathbf{1}_{\calF_{\trx}}] +   \E[(y - B \wtilx) \sgn(\tilx_i)  \mathbf{1}_{\overline{\calF}_{\trx}}] \nonumber \\
&=& \E[(y - B \wtilx) \sgn(\tilx_i)  \mathbf{1}_{\calF_{\trx}}] \pm \gamma \label{eqn:g_s1}
\end{eqnarray}

The key is that this allows us to essentially replace $\sgn(\tilx)$ with $\sgn(\trx)$. Moreover,  let $S= \mbox{supp}(\trx)$. Note that when $\calF_{\trx}$ happens $S$ is also the support of $\tilx$. Recall that according to the decoding rule (where we have replaced $\tilA^s$ by $B$ for notational simplicity) $\tilx = \textrm{threshold}_{C/2}(B^T y)$. Therefore, $\tilx_S = (B^Ty)_S = B_S^T y = B_S^T\trA \trx$. Using the fact that the support of $\tilx$ is $S$ again, we have $B\tilx = B_S^TB_S\trA \trx$. Plugging it into equation (\ref{eqn:g_s1}):
%\vspace{-0.05in}
\begin{eqnarray*}
g_i^s &=& \E[(y - B \tilx) \sgn(\tilx_i) \mathbf{1}_{\calF_{\trx}}] \pm \gamma= \E[(I - B_S B_S^T ) \trA \trx\cdot  \sgn(\trx_i)  \mathbf{1}_{\calF_{\trx}}] \pm \gamma\\
&=& \E[(I - B_S B_S^T ) \trA \trx \cdot \sgn(\trx_i)  ]  - \E[(I - B_S B_S^T ) \trA \trx \cdot \sgn(\tilx_i) \mathbf{1}_{\bar{\calF}_{\trx}}] \pm \gamma\\
&=& \E[(I - B_S B_S^T ) \trA x\cdot \sgn(\trx_i) ]   \pm \gamma
\end{eqnarray*}
where again we have used the fact that $\calF_{\trx}$ happens with very high probability. Now we rewrite the expectation above using subconditioning where we first choose the support $S$ of $\trx$, and then we choose the nonzero values $\trx_S$. 
\begin{eqnarray*}
\E[(I - B_S B_S^T ) \trA \trx \cdot \sgn(\trx_i) ]  &=& \E_S \Big[ \E_{\trx_S}[(I - B_S B_S^T ) \trA \trx \cdot \sgn(\trx_i) | S ] \Big] \\
&=& \E [p_i (I - B_S B_S^T ) \trA_i]
\end{eqnarray*}
where we use the fact that $\E[\trx_i \cdot \sgn(\trx_i) | S] = p_i$. Let $R = S -\{i\}$. Using the fact that $B_SB_S^T  = B_iB_i^T + B_RB_R^T$, we can split the quantity above into two parts, 
\begin{eqnarray*}
g_i^s &=& p_i \E[(I - B_i B_i^T) \trA_i + p_i \E[B_R B_R^T ] \trA_i \\
&=& p_iq_i \Big(I - B_i B_i^T\Big)\trA_i +  p_i \Big( B_{-i} \mbox{diag}(q_{i,j}) B_{-i}^T \Big)\trA_i \pm \gamma.
\end{eqnarray*}

\noindent where $\mbox{diag}(q_{i,j})$ is a $m\times m$ diagonal matrix whose $(j,j)$-th entry is equal to $q_{i,j}$, and $B_{-i}$ is the matrix obtained by zeroing out the $i$th column of $B$. Here we used the fact that $\Pr[i\in S] = q_i$ and $\Pr[i,j\in S] = q_{ij}$. 

Now we set $B = \tilA^s$, and rearranging the terms, we have 
$g_i^s = p_iq_i \left(\langle \tilA_i^s, \trA_i\rangle\tilA_i^s - \trA_i +\epsilon^s_i \pm \gamma\right)$
where $\epsilon_i^s = \Big( \tilA^s_{-i} \mbox{diag}(q_{i,j}) \left(\tilA_{-i}^s\right)^T \Big)\trA_i/q_i$, which can be bounded as follows
$$\|\epsilon_i^s\|\le \|\tilA^s_{-i}\|^2 \max_{j\neq i} q_{i,j}/q_i \le O(k/m) \|\tilA^s\|^2 = O(k/n)$$
where the last step used the fact that  $\frac{\max_{i\neq j}q_{i,j}}{\min q_i } \le O(k/m)$, which is an assumption of our generative model.  
\end{proof}

\noindent We will be able to reuse much of this analysis in Lemma~\ref{lem:OFupdate} and Lemma~\ref{lem:OFupdate-noerror} because we have derived to for a general decoding matrix $B$.

In Section~\ref{sec:hypothesis} we complete the analysis of Algorithm~\ref{eqn:simplestupdate} in the infinite sample setting. In particular, in Section~\ref{sec:progress}, we prove that if $\tilA^s$ is $(2\delta,2)$-\near to $\trA$ then $g_i^s$ is indeed $(\alpha,\beta,\epsilon)$-correlated with $A_i$ (Lemma~\ref{cor:one-step}). Finally we prove that if $\tilA^s$ is $(2\delta,2)$-\near to $\trA$ then $\|\tilA^{s+1} - \trA\|\le 2\|\trA\|$ (Lemma~\ref{lem:nowhiten}). These lemmas together with Theorem~\ref{thm:tiger-convergence} imply Theorem~\ref{thm:simplestupdate-infinitesample}, the simplified version of Theorem~\ref{thm:simplestupdate} where the number of samples $p$ is assumed to be infinite (i.e. we have access to the true expectation $g^s$). In Appendix~\ref{sec:sample_complexity} we prove the sample complexity bounds we need and this completes the proof of Theorem~\ref{thm:simplestupdate}. 

 %This yields our main theorem. 
%
%In Section~\ref{sec:hypothesis} we complete the proof of our main theorem by proving various auxiliary lemmas we need in order to fit into the framework we developed in the previous section. In particular, in Section~\ref{sec:progress}, we prove that if $\tilA^s$ is $(2\delta,2)$-\near to $\trA$ then $g_i^s$ is indeed $(\alpha,\beta,\epsilon)$-correlated with $A_i$ (Lemma~\ref{lem:checking_condtion1}). Finally we prove that if $\tilA^s$ is $(2\delta,2)$-\near to $\trA$ then $\|\tilA^{s+1} - \trA\|\le 2\|\trA\|$ (Lemma~\ref{lem:nowhiten}). This yields our main theorem. 

%\Tnote{This only proves the infinite samples version of the Theorem}
 %that Algorithm~\ref{eqn:simplestupdate} converges to a globally optimal solution. In particular when we set $\eta = O(m/k)$ the algorithm converges to a point that has systematic bias $O(\epsilon/\alpha) = O(k/n)$. This yields our main theorem, that the neurally plausible update rule introduced at the beginning of this section converges to a globally optimal code under the distributional assumptions of our generative model.

%\subsection{Old draft}
%\input{indicatorinclude_old}

%\vspace{-0.1in}
\subsection{Further Applications}
\label{sec:otherrules}
%\vspace{-0.1in}
Here we apply our framework to design and analyze further variants of alternating minimization. 
%\vspace{-0.1in}
\subsubsection{Revisiting Olshausen-Field}\label{sec:ofanalysis}

In this subsection we analyze a variant of the Olshausen-Field update rule. However there are quadratic error terms that arise in the expressions we derive for $g^s$ and bounding them is more challenging. We will also need to make (slightly) stronger assumptions on the distributional model that for distinct $i_1,i_2,i_3$ we have $q_{i_1,i_2,i_3} = O(k^3/m^3)$ where $q_{i_1,i_2,i_3} =  \Pr[i_1,i_2,i_3\in S]$. 

\begin{theorem}\label{thm:OFupdate}
Suppose that $\tilA^0$ is $(2\delta,2)$-\near to $\trA$ and that
$\eta = \Theta(m/k)$. There is a variant of Olshausen-Field (given in Algorithm~\ref{eqn:ofrule} in Appendix~\ref{sec:otherrulesapp}) for which at each step $s$
%Then if each update step in Algorithm~\ref{eqn:ofrule} uses $p = \widetilde{\Omega}(mk)$ fresh samples, 
%and for each column $i$ 
we have
$$\|\tilA^s - \trA\|_F^2\le (1-\tau)^s \|\tilA^0 - \trA\|_F^2 + O(mk^2/n^2)$$
for some $0 < \tau < 1/2$ and for any $s=1,2, ... , T$. In particular it converges to $\trA$ geometrically until the error in Frobenius norm is $O(\sqrt{m}k/n)$. 
\end{theorem}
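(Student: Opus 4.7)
The plan is to fit the Olshausen--Field variant into the framework of Corollary~\ref{cor:tiger-convexity} by viewing the full matrix iterate $\tilA^s$ as a single vector $z^s \in \R^{nm}$ equipped with the Frobenius norm, with target $z^* = \trA$, and projecting after each gradient step onto a convex set $\mathcal{B}$ that maintains the spectral norm (e.g.\ $\mathcal{B} = \{A: \|A\| \leq 2\|\trA\|\}$). Since $\trA \in \mathcal{B}$, this projection only decreases the Frobenius distance to $\trA$, and it is exactly the convex-programming step alluded to in the theorem. This also explains why the bound is stated in Frobenius norm rather than column-wise.

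To derive a usable closed form for $g^s = \E[(y - \tilA^s \tilx)\tilx^T]$, I would mirror the derivation of Lemma~\ref{lem:OF-simplified-update} essentially verbatim. First, Lemma~\ref{lem:sign-correct} guarantees that $\calF_{\trx}$ (correct sign and support recovery) holds with very high probability under the $(2\delta,2)$-nearness assumption, so up to a negligible $\gamma$-term we may write $y - \tilA^s \tilx = (I - \tilA^s_S(\tilA^s_S)^T)\trA \trx$. The only change from the $\sgn$-based analysis is the conditional moment: pairwise independence and unit second moment of nonzeros give $\E[\trx\,\trx_i \mid S, i \in S] = e_i$ (rather than $p_i e_i$), so that
$$g_i^s \;=\; q_i\bigl(\lambda_i^s \tilA_i^s - \trA_i + \epsilon_i^s\bigr) \pm \gamma,\qquad \lambda_i^s = \langle \tilA_i^s,\trA_i\rangle,$$
with the \emph{same} $\epsilon_i^s = \tilA_{-i}^s \mathrm{diag}(q_{i,j}/q_i) (\tilA_{-i}^s)^T \trA_i$ and the same $O(k/n)$ column-wise bound as before.

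With this closed form, I would establish $(\alpha,\beta,\epsilon)$-correlation in Frobenius norm by summing the column-wise correlation estimates. The leading term $q_i(\lambda_i^s \tilA_i^s - \trA_i)$ is well-correlated with $\tilA_i^s - \trA_i$ once $\lambda_i^s$ is close to $1$ (guaranteed by $(2\delta,2)$-nearness); the $\beta\|g^s\|_F^2$ term is controlled via the projection-enforced spectral bound; and the aggregate systematic bias is $\sum_i q_i^2 \|\epsilon_i^s\|^2 \lesssim k^3/n^2$, which after dividing by $\alpha = \Theta(k/m)$ in Corollary~\ref{cor:tiger-convexity} yields a systematic error of $O(mk^2/n^2)$ in $\|\tilA^s - \trA\|_F^2$, matching the statement. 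The neighborhood invariant $\|\tilA^s - \trA\| \leq 2\|\trA\|$ is preserved automatically by the projection onto $\mathcal{B}$.

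The main obstacle will be the quadratic error terms. In Algorithm~\ref{eqn:simplestupdate} the factor $|\sgn(\trx_i)|\le 1$ is bounded, so the contribution of the failure event $\overline{\calF}_{\trx}$ to $g^s$ and the concentration of $\hat g^s$ around $g^s$ admit straightforward linear-in-$\trx$ bounds. Here the update is linear in $\trx_i$, so both the bias from $\overline{\calF}_{\trx}$ and any matrix-Bernstein-type concentration for $\hat g^s$ involve third-order moments $\trx_{i_1}\trx_{i_2}\trx_{i_3}$; this is exactly why the theorem strengthens the model to $q_{i_1,i_2,i_3} = O(k^3/m^3)$, which decouples these triple products and keeps the quadratic error terms at the target $O(mk^2/n^2)$ scale. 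Controlling these terms, together with the projection step that maintains spectral control of $\tilA^s$, are the two ingredients beyond what was needed for the $\sgn$-rule, and once they are in place the theorem follows immediately from Corollary~\ref{cor:tiger-convexity}.
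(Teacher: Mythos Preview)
Your overall architecture (columnwise correlation, sum to Frobenius, invoke Corollary~\ref{cor:tiger-convexity}, use projection onto a convex $\mathcal{B}$ to maintain nearness) is exactly what the paper does. But there is a genuine gap in your derivation of $g_i^s$, and it is precisely the place where the ``quadratic error terms'' actually enter.

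You write $y-\tilA^s\tilx=(I-\tilA_S\tilA_S^T)\trA_S\trx_S$ correctly, but then compute the expectation as if the outer factor were $\trx_i$, invoking $\E[\trx\,\trx_i\mid S]=e_i$. In the Olshausen--Field rule the outer factor is $\tilx_i$, not $\trx_i$, and on the event $\calF_{\trx}$ one has $\tilx_i=\tilA_i^T\trA_S\trx_S$. Hence
\[
g_i^s=\E_S\bigl[(I-\tilA_S\tilA_S^T)\trA_S\trA_S^T\tilA_i\cdot\mathbf{1}_{i\in S}\bigr]\pm\gamma,
\]
which after splitting $S=\{i\}\cup R$ expands into four terms. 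The leading term is $q_i\bigl((\lambda_i^s)^2\tilA_i^s-\lambda_i^s\trA_i\bigr)$, not $q_i(\lambda_i^s\tilA_i^s-\trA_i)$, and the error is \emph{not} the same $\epsilon_i^s$ as in Lemma~\ref{lem:OF-simplified-update}. In particular the fourth term $\E[\tilA_R\tilA_R^T\trA_R\trA_R^T\tilA_i\,\mathbf{1}_{i\in S}]$ is quadratic in $R$ and is exactly where the triple probabilities $q_{i,j_1,j_2}=O(k^3/m^3)$ are needed; bounding it gives $\|\epsilon_i^s\|=O(k^2/mn)$. So the quadratic difficulty is in the main expectation itself, not (as you suggest) in the failure event $\overline{\calF}_{\trx}$ or in concentration of $\widehat g^s$.

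A smaller point: the paper's $\mathcal{B}$ also enforces $\delta_0$-closeness to $\tilA^0$ (Definition~\ref{def:whiten}), not just the spectral bound, so that every $A\in\mathcal{B}$ is $(2\delta_0,5)$-near to $\trA$ and Lemma~\ref{lem:sign-correct} applies at every step without a separate inductive argument.
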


%
%\begin{theorem}\label{thm:OFupdate}
%Suppose that $\tilA^0$ is $(2\delta,2)$-\near to $\trA$ and that
%$\eta = \Theta(m/k)$. There is a variant of Olshausen-Field (given in Algorithm~\ref{eqn:ofrule} given Appendix~\ref{sec:otherrulesapp}) for which at each step $s$
%%Then if each update step in Algorithm~\ref{eqn:ofrule} uses $p = \widetilde{\Omega}(mk)$ fresh samples, 
%and for each column $i$ we have
%$$\|\tilA_i^s - \trA_i\|^2\le (1-\tau)^s \|\tilA^0_i - \trA_i\|^2 + O(k^2/n^2)$$
%for some $0 < \tau < 1/2$ and for any $s=1,2, ... , T$. In particular it converges to $\trA$ geometrically until the column-wise error is $O(k/n)$. 
%\end{theorem}
%\vspace{-0.1in}
We defer the proof of the main theorem to Appendix~\ref{sec:otherrulesapp}. Currently it uses a projection step
(using convex programming) that may not be needed but the proof requires it.  

%\vspace{-0.1in}
\subsubsection{Removing the Systemic Error}\label{sec:remove}

In this subsection, we design and analyze a new update rule that converges geometrically until the column-wise error is $n^{-\omega(1)}$. The basic idea is to engineer a new decoding matrix that projects out the components along the column currently being updated. This has the effect of replacing a certain error term in Lemma~\ref{lem:OF-simplified-update} with another term that goes to zero as $\tilA$ gets closer to $\trA$ (the earlier rules we have analyzed do not have this property).

We will use $B^{(s,i)}$ to denote the decoding matrix used when updating the $i$th column in the $s$th step. Then we set $B^{(s,i)}_i = \tilA_i \mbox{ and } B^{(s,i)}_j = \mbox{Proj}_{\tilA_i^\perp}\tilA_j \mbox{ for $j\ne i$.}$ Note that $B^{(s,i)}_{-i}$ (i.e. $B^{(s, i)}$ with the $i$th column removed) is now orthogonal to $\tilA_i$. We will rely on this fact when we bound the error. We defer the proof of the main theorem to Appendix~\ref{sec:nonoiseruleapp}. 
%\vspace{-0.02in}
\begin{theorem}\label{thm:nonoiserule}
Suppose that $\tilA^0$ is $(2\delta,2)$-\near to $\trA$ and that
$\eta = \Theta(m/k)$. There is an algorithm (given in Algorithm~\ref{eqn:nonoiserule} given in Appendix~\ref{sec:nonoiseruleapp}) for which at each step $s$, we have
%uses $p = \widetilde{\Omega}(mk)$ fresh samples, we have
%Then if each update step in Algorithm~\ref{eqn:nonoiserule} uses $p = \widetilde{\Omega}(mk)$ fresh samples, we have
$$\|\tilA_i^s - \trA_i\|^2 \le (1-\tau)^s \|\tilA^0_i - \trA_i\|^2 + n^{-\omega(1)}$$
for some $0 < \tau < 1/2$ and for any $s=1,2, ... , T$. In particular it converges to $\trA$ geometrically until the column-wise error is $n^{-\omega(1)}$. 
\end{theorem}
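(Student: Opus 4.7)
}
The plan is to recast the new update rule into the approximate gradient descent framework of Theorem~\ref{thm:tiger-convergence} and to show that the special choice of decoding matrix $B^{(s,i)}$ makes the bias shrink proportionally to the current column-wise error, so that no systematic error remains. Concretely, I would prove the following analogue of Lemma~\ref{lem:OF-simplified-update}: with $B = B^{(s,i)}$, the expected update direction for the $i$th column is
\[
g_i^s = p_iq_i\bigl(\lambda_i^s \tilA_i^s - \trA_i + \epsilon_i^s\bigr) \pm \gamma, \qquad \lambda_i^s = \langle \tilA_i^s, \trA_i\rangle,
\]
where $\epsilon_i^s = \bigl(B^{(s,i)}_{-i}\,\mathrm{diag}(q_{i,j})\,(B^{(s,i)}_{-i})^T\bigr)\trA_i/q_i$. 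The derivation of this formula is essentially identical to the proof of Lemma~\ref{lem:OF-simplified-update}: first invoke a support-recovery statement (the variant of Lemma~\ref{lem:sign-correct} for $B^{(s,i)}$, whose columns differ from those of $\tilA^s$ by at most $O(\mu/\sqrt n)$ in the off-diagonal places and are therefore still $O^*(1/\log n)$-close to the columns of $\trA$) to show $\sgn(\tilx) = \sgn(\trx)$ with very high probability, then expand the expectation by sub-conditioning on the support $S$ of $\trx$.

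The key step\,---\,and the main novelty relative to Lemma~\ref{lem:OF-simplified-update}\,---\,is bounding $\|\epsilon_i^s\|$. By construction the columns of $B^{(s,i)}_{-i}$ are orthogonal to $\tilA_i^s$, so $(B^{(s,i)}_{-i})^T \tilA_i^s = 0$. Writing $\trA_i = \lambda_i^s \tilA_i^s + w$ with $w = \mathrm{Proj}_{(\tilA_i^s)^\perp}\trA_i$, this gives
\[
(B^{(s,i)}_{-i})^T \trA_i = (B^{(s,i)}_{-i})^T w,
\qquad \|w\| \leq \|\tilA_i^s - \trA_i\|.
\]
Since $B^{(s,i)}_{-i} = (I - \tilA_i^s(\tilA_i^s)^T)\tilA_{-i}^s$ is a contraction of $\tilA_{-i}^s$, we still have $\|B^{(s,i)}_{-i}\| \leq \|\tilA^s\| = O(\sqrt{m/n})$, and the model assumption $\max_{j\neq i} q_{i,j}/q_i = O(k/m)$ yields
\[
\|\epsilon_i^s\| \;\leq\; \|B^{(s,i)}_{-i}\|^2 \cdot \max_{j\neq i}\frac{q_{i,j}}{q_i}\cdot\|w\| \;=\; O(k/n)\,\|\tilA_i^s - \trA_i\|.
\]
This is the crucial improvement: the bias is no longer a fixed $O(k/n)$ but instead scales with the current error and vanishes as $\tilA_i^s \to \trA_i$.

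With this bound in hand the rest is bookkeeping inside the framework of Section~\ref{sec:schema}. Using $\lambda_i^s \tilA_i^s - \trA_i = -w$ and $\langle -w, \tilA_i^s - \trA_i\rangle = \|w\|^2 \geq (1 - O(\|\tilA_i^s-\trA_i\|^2))\|\tilA_i^s - \trA_i\|^2$, together with $|\langle p_iq_i\,\epsilon_i^s, \tilA_i^s-\trA_i\rangle| \leq O(p_iq_i\cdot k/n)\|\tilA_i^s-\trA_i\|^2$, one gets
\[
\langle g_i^s,\, \tilA_i^s - \trA_i\rangle \;\geq\; \alpha\,\|\tilA_i^s - \trA_i\|^2 + \beta\,\|g_i^s\|^2 - \epsilon_s
\]
with $\alpha = \Omega(p_iq_i) = \Omega(p_ik/m)$, $\beta = \Omega(m/(p_ik))$, and $\epsilon_s = n^{-\omega(1)}$ coming solely from the $\gamma$ terms (support-decoding failure) plus the sample concentration error. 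Because $\epsilon_s \ll \tfrac{\alpha}{2}\|\tilA_i^s-\trA_i\|^2$ as long as $\|\tilA_i^s-\trA_i\|^2 \gg n^{-\omega(1)}$, the second (strict-contraction) conclusion of Theorem~\ref{thm:tiger-convergence} applies and yields the claimed geometric convergence down to $n^{-\omega(1)}$ error.

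The remaining ingredients are standard and parallel Section~\ref{sec:update_rule}: a Bernstein/matrix-concentration argument shows that polynomial-in-$n$ samples suffice to approximate $g_i^s$ by $\widehat{g}_i^s$ up to the prescribed accuracy per step, and an inductive spectral-norm control (analogous to Lemma~\ref{lem:nowhiten}) shows that $\tilA^{s+1}$ stays $(O^*(1/\log n),2)$-near to $\trA$ so that the support-recovery hypothesis is maintained throughout. I expect the main technical obstacle to be a clean verification that the modified decoding $B^{(s,i)}$ still recovers $\sgn(\trx)$ with very high probability\,---\,the subtlety being that $B^{(s,i)}_{-i}$ is obtained by projecting out $\tilA_i^s$, which slightly perturbs the \textquotedblleft cross-talk\textquotedblright\ terms appearing in the proof of Lemma~\ref{lem:sign-correct}; but incoherence keeps these perturbations at most $O(\mu/\sqrt n)$, which is absorbed into the existing slack.
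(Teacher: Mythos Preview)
Your proposal is correct and follows essentially the same route as the paper: reuse the derivation of Lemma~\ref{lem:OF-simplified-update} with the decoding matrix $B^{(s,i)}$, exploit the orthogonality $(B^{(s,i)}_{-i})^T\tilA_i^s=0$ to make the bias term proportional to the current column-wise error, feed this into Lemma~\ref{lem:closetocorrelated} with $\zeta=n^{-\omega(1)}$, and maintain nearness inductively; the paper's Lemma~\ref{lem:OFupdate-noerror} and Lemma~\ref{lem:noerrornowhiten} are precisely the two ingredients you outline. Two small corrections: first, the off-diagonal columns $B^{(s,i)}_j$ differ from $\tilA_j^s$ by $|\langle\tilA_i^s,\tilA_j^s\rangle|/\|\tilA_i^s\|=O(\mu/\sqrt n+\delta)=O(\delta)$, not $O(\mu/\sqrt n)$ (the cross terms $\langle\trA_i,\tilA_j^s-\trA_j\rangle$ contribute), though this is still within the slack for Lemma~\ref{thm:supp}; second, the paper substitutes $(B^{(s,i)}_{-i})^T\trA_i=(B^{(s,i)}_{-i})^T(\trA_i-\tilA_i^s)$ directly rather than via your projection $w$, which sidesteps the minor issue that your decomposition $\trA_i=\lambda_i^s\tilA_i^s+w$ is only exact when $\|\tilA_i^s\|=1$.
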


%\vspace{-0.2in}

\section{Analysis of the Neural Algorithm}
\label{sec:hypothesis}

%The goal of this section is to prove Lemma~\ref{lem:checking_condtion0}. We show that when $\tilA$ is appropriately initialized to be  $\delta$-close to the true $\trA$, and 
%update rules ensure that the solution inductively satisfies to be $(2\delta,2)$-near to $\trA$, which then imply the statement of Lemma~\ref{lem:checking_condtion0}.

%In the first subsection we compute the expectation in Equation (\ref{eqn:simplestupdate}) and give its form. In the next part we show when the current $\tilA^s$ is $(2\delta,2)$-\near to $A$, then the update direction $g^s$ $(\alpha,\beta,\epsilon)$-points to $\trA$. Finally we show that assuming $\tilA^s$ is $(\delta,2)$-close to $\trA$, so is $\tilA^{s+1}$. Based on these, an easy induction on $s$ completes the argument. 

In Lemma~\ref{lem:OF-simplified-update} we gave a new (and more useful) expression that describes the update direction under the assumptions of our generative model. Here we will make crucial use of Lemma~\ref{lem:OF-simplified-update} in order to prove that $g^s_i$ is $(\alpha,\beta,\epsilon)$-correlated with $A_i$ (Lemma~\ref{lem:checking_condtion1}). Moreover we use Lemma~\ref{lem:OF-simplified-update} again to show that $\|\tilA^{s+1} - \trA\|\le 2\|\trA\|$ (Lemma~\ref{lem:nowhiten}). Together, these auxiliary lemmas imply that the column-wise error decreases in the next step and moreover the errors across columns are uncorrelated. 

We assume that each iteration of Algorithm~\ref{eqn:simplestupdate} takes infinite number of samples, and prove the corresponding simplified version of Theorem~\ref{thm:simplestupdate}. The proof of this Theorem highlights the essential ideas of behind the proof of the Theorem~\ref{thm:simplestupdate}, which can be found at Appendix~\ref{sec:sample_complexity}. 

\begin{theorem}\label{thm:simplestupdate-infinitesample}
Suppose that $\tilA^0$ is $(2\delta,2)$-\near to $\trA$ and that
$\eta = \Theta(m/k)$. Then if each update step in Algorithm~\ref{eqn:simplestupdate} uses infinite number of samples at each iteration, we have
$$\|\tilA_i^s - \trA_i\|^2\le (1-\tau)^s \|\tilA^0_i - \trA_i\|^2 + O(k^2/n^2)$$
for some $0 < \tau < 1/2$ and for any $s=1,2, ... , T$. In particular it converges to $\trA$ geometrically until the column-wise error is $O(k/n)$. 
\end{theorem}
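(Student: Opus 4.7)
The natural strategy is to apply Theorem~\ref{thm:tiger-convergence} column by column, with $z^s = \tilA_i^s$, $z^* = \trA_i$, and update direction $g_i^s$ given by the closed form of Lemma~\ref{lem:OF-simplified-update}. The two ingredients we must establish are (i) that $g_i^s$ is $(\alpha,\beta,\epsilon_s)$-correlated with $\trA_i$ with $\alpha,\beta = \Theta(k/m), \Theta(m/k)$ and $\epsilon_s = O(k^3/(mn^2))$, and (ii) that the inductive hypothesis $\tilA^s$ is $(2\delta,2)$-near to $\trA$ is preserved, so that Lemma~\ref{lem:OF-simplified-update} (which requires nearness) applies at every step.

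\textbf{Verifying the correlation condition.} By Lemma~\ref{lem:OF-simplified-update}, $g_i^s = p_iq_i\bigl(\lambda_i^s \tilA_i^s - \trA_i + \epsilon_i^s \pm \gamma\bigr)$ with $\|\epsilon_i^s\|\le O(k/n)$ and $\lambda_i^s = \langle \tilA_i^s, \trA_i\rangle$. Expanding,
\[
\langle \lambda_i^s \tilA_i^s - \trA_i,\; \tilA_i^s - \trA_i\rangle \;=\; \lambda_i^s \|\tilA_i^s\|^2 - (\lambda_i^s)^2 - \lambda_i^s + 1.
\]
Since $\tilA^s$ is $2\delta$-close with $\delta = O^*(1/\log n)$, both $\|\tilA_i^s\|$ and $\lambda_i^s$ lie in $1\pm O(\delta)$, and a direct calculation shows this quadratic equals $\|\tilA_i^s-\trA_i\|^2(1-o(1))$. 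The cross term with $\epsilon_i^s$ is at least $-\|\epsilon_i^s\|\cdot \|\tilA_i^s-\trA_i\|$, which by AM-GM is absorbed as $-\tfrac{1}{4}\|\tilA_i^s-\trA_i\|^2 - O(k^2/n^2)$. Dividing through by $p_iq_i = \Theta(k/m)$ absorbs the overall scaling, and a matching bound $\|g_i^s\|^2 \le O((k/m)^2)(\|\tilA_i^s-\trA_i\|^2 + k^2/n^2)$ lets us take $\alpha = \Omega(k/m)$, $\beta = \Omega(m/k)$ (so $\eta = \Theta(m/k) \le 2\beta$), and bias $\epsilon_s = O(k^3/(mn^2))$. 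By Theorem~\ref{thm:tiger-convergence} this gives a per-step contraction factor $1-2\alpha\eta = 1-\tau$ for some $\tau \in (0,1/2)$ and a steady-state error of $\epsilon_s/\alpha = O(k^2/n^2)$, matching the stated bound.

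\textbf{Maintaining nearness.} The correlation step alone yields column-wise progress, which inductively preserves $\|\tilA_i^{s+1}-\trA_i\| \le 2\delta$ provided the initial column-wise error is at most $2\delta$ and the contraction dominates the bias (which it does for $k = O^*(\sqrt{n}/\mu\log n)$ since then $k/n \ll \delta$). However, the spectral-norm part of nearness, $\|\tilA^{s+1}-\trA\|\le 2\|\trA\|$, is not automatic from column-wise contraction and is the one genuinely delicate point: a single gradient step could in principle blow up the spectrum through correlations across columns. This is exactly what Lemma~\ref{lem:nowhiten} is designed to handle; it shows that the systematic bias terms $\epsilon_i^s$, viewed as a matrix, have controlled spectral norm (essentially because $\|\trA_{-i}^s\,\mathrm{diag}(q_{i,j})\,(\trA_{-i}^s)^T\| = O(k/m)\cdot\|\trA^s\|^2$), so that one step of the update cannot move $\tilA^s$ far in operator norm.

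\textbf{Assembly.} Combining the two lemmas by induction on $s$: at step $s$, nearness lets us invoke Lemma~\ref{lem:OF-simplified-update}; the correlation calculation above yields the per-column contraction $\|\tilA_i^{s+1}-\trA_i\|^2 \le (1-\tau)\|\tilA_i^s-\trA_i\|^2 + 2\eta\epsilon_s$; this preserves the column-wise $2\delta$-closeness, while Lemma~\ref{lem:nowhiten} preserves the spectral bound, closing the induction. Unrolling the recursion as in Theorem~\ref{thm:tiger-convergence} gives the stated geometric convergence with floor $O(k^2/n^2)$. The main obstacle, as noted, is the spectral-norm preservation step; all other ingredients reduce to an essentially one-dimensional strongly-convex-type argument once the closed form of $g_i^s$ from Lemma~\ref{lem:OF-simplified-update} is in hand.
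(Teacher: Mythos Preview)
Your proposal is correct and follows essentially the same approach as the paper: induct on $s$, use the closed form from Lemma~\ref{lem:OF-simplified-update} to verify the $(\alpha,\beta,\epsilon)$-correlation condition and apply Theorem~\ref{thm:tiger-convergence} for per-column contraction, then invoke Lemma~\ref{lem:nowhiten} to preserve the spectral part of nearness. The only cosmetic difference is that the paper packages your direct inner-product computation into the reusable Lemma~\ref{lem:closetocorrelated} (and its Corollary~\ref{cor:one-step}), writing $g_i^s = 4\alpha(\tilA_i^s-\trA_i)+v$ and bounding $\|v\|$, rather than expanding $\langle \lambda_i^s\tilA_i^s-\trA_i,\tilA_i^s-\trA_i\rangle$ as you do.
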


The proof is deferred to the end of this section.

\subsection{Making Progress}\label{sec:progress}

In Lemma~\ref{lem:OF-simplified-update} we showed that $g^s_i = p_i q_i ( \lambda_i\tilA_i^s - \trA_i +\epsilon^s_i + \gamma)$ where $\lambda_i = \inner{\tilA_i, \trA_i}$. Here we will prove that $g_i^s$ is $(\alpha,\beta,\epsilon)$-correlated with $\trA_i$. Recall that we fixed $\delta = O^*(1/\log n)$. The main intuition is that $g^s_i$ is mostly equal to $p_iq_i (\tilA_i^s - \trA_i)$ with a small error term.

\begin{lemma}
\label{lem:checking_condtion1}
\label{lem:closetocorrelated}
If a vector $g^s_i$ is equal to $4\alpha (\tilA_i^s - \trA_i) + v$ where $\|v\| \le \alpha \|\tilA_i^s - \trA_i\| + \zeta$, then $g^s_i$ is $(\alpha, 1/100\alpha, \zeta^2/\alpha)$-correlated with $\trA_i$, more specifically,
$$
\langle g^s_i, \tilA^s_i - \trA_i\rangle \ge  \alpha \|\tilA^s_i -\trA_i \|^2 + \frac{1}{100\alpha} \|g_i\|^2 - \zeta^2/\alpha.
$$
\end{lemma}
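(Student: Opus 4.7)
Let $z = \tilA_i^s - \trA_i$ and $g = g_i^s$, so that by hypothesis $g = 4\alpha z + v$ with $\|v\| \le \alpha\|z\| + \zeta$. The whole claim reduces to bounding two quantities in terms of $\|z\|$ and $\zeta$: a lower bound on $\langle g, z\rangle$ and an upper bound on $\|g\|^2$. I will first write $\langle g, z\rangle = 4\alpha\|z\|^2 + \langle v, z\rangle$ and use Cauchy--Schwarz together with the bound on $\|v\|$ to get
\[
\langle g, z\rangle \;\ge\; 4\alpha\|z\|^2 - (\alpha\|z\|+\zeta)\|z\| \;=\; 3\alpha\|z\|^2 - \zeta\|z\|.
\]
Then, by AM--GM ($\zeta\|z\| \le \tfrac{1}{2}\alpha\|z\|^2 + \tfrac{1}{2\alpha}\zeta^2$), this simplifies to
\[
\langle g, z\rangle \;\ge\; \tfrac{5}{2}\alpha\|z\|^2 - \tfrac{1}{2\alpha}\zeta^2.
\]

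Second, I will bound $\|g\|^2$ by the elementary inequality $(a+b)^2 \le 2a^2 + 2b^2$ applied to $\|g\| \le 4\alpha\|z\| + \|v\|$, yielding $\|g\|^2 \le 32\alpha^2\|z\|^2 + 2\|v\|^2$, and then using $\|v\|^2 \le 2\alpha^2\|z\|^2 + 2\zeta^2$ to conclude
\[
\tfrac{1}{100\alpha}\|g\|^2 \;\le\; \tfrac{32\alpha}{100}\|z\|^2 + \tfrac{4\alpha}{100}\|z\|^2 + \tfrac{4}{100\alpha}\zeta^2 \;\le\; \tfrac{2}{5}\alpha\|z\|^2 + \tfrac{1}{20\alpha}\zeta^2.
\]

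Combining the two bounds, the right-hand side of the correlation inequality is at most
\[
\alpha\|z\|^2 + \tfrac{2}{5}\alpha\|z\|^2 + \tfrac{1}{20\alpha}\zeta^2 - \tfrac{1}{\alpha}\zeta^2 \;=\; \tfrac{7}{5}\alpha\|z\|^2 - \tfrac{19}{20\alpha}\zeta^2,
\]
which is clearly dominated by the lower bound $\tfrac{5}{2}\alpha\|z\|^2 - \tfrac{1}{2\alpha}\zeta^2$ on $\langle g, z\rangle$, since the difference is $\tfrac{11}{10}\alpha\|z\|^2 + \tfrac{9}{20\alpha}\zeta^2 \ge 0$. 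This gives exactly the desired $(\alpha, 1/(100\alpha), \zeta^2/\alpha)$-correlation.

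There is no real obstacle here — the statement is a robustness/continuity lemma of the same flavor as the standard characterization of strong convexity plus smoothness in the gradient descent analysis. The only thing to be careful about is to keep track of constants so that the slack in $\langle g,z\rangle$ comfortably absorbs both the $\alpha\|z\|^2$ term, the $\|g\|^2/(100\alpha)$ term, and the sign of the error $\zeta^2/\alpha$. The generous choice of constants ($4\alpha$ in front of $z$, $1/(100\alpha)$ in front of $\|g\|^2$) is precisely what makes these elementary bounds go through without tightness issues.
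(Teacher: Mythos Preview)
Your proof is correct and takes essentially the same approach as the paper: lower bound $\langle g,z\rangle$ via Cauchy--Schwarz on $\langle v,z\rangle$, upper bound $\|g\|^2$ via the triangle inequality and $(a+b)^2\le 2a^2+2b^2$, then combine. The only cosmetic difference is that you absorb the cross term $\zeta\|z\|$ with AM--GM, whereas the paper leaves it in and completes the square $(\sqrt{\alpha}\|z\|-\zeta/(2\sqrt{\alpha}))^2\ge 0$ at the end; the resulting constants differ slightly but both arguments go through with room to spare.
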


\iffalse
\begin{lemma}\label{lem:checking_condtion1} If $\tilA^s$ is $(2\delta,2)$-\near to $\trA$, then 
$$\langle g^s_i, \tilA^s_i - \trA_i\rangle \ge  \frac{1}{p_iq_i(4+2\delta)}\|g_i\|^2 + \frac{(p_iq_i)(1-\delta)}{2}\|\tilA_i -\trA_i \|^2 - O(k^2/mn).$$
\end{lemma}
\fi

\noindent In particular,  $g_i^s$ is 
$(\alpha,\beta,\epsilon)$-correlated with $\trA_i$, where  $\alpha  = \Omega(k/m)$, $\beta \ge  \Omega(m/k)$ and $\epsilon = O(k^3/mn^2)$. We can now apply Theorem~\ref{thm:tiger-convergence} and conclude that the column-wise error gets smaller in the next step:

\begin{corollary}\label{cor:one-step}
If $\tilA^s$ is $(2\delta,2)$-\near to $\trA$ and $\eta \le \min_i (p_iq_i(1-\delta)) = O(m/k)$, then $g_i^s= p_i q_i ( \lambda_i\tilA_i^s - \trA_i +\epsilon^s_i + \gamma)$ is $(\Omega(k/m), \Omega(m/k), O(k^3/mn^2))$-correlated with $\trA_i$, and further
$$ \| \tilA^{s+1}_i-\trA_i\|^2 \leq (1-2\alpha\eta) \|\tilA^s_i-\trA_i\|^2 +O(\eta k^2/n^2)$$
\end{corollary}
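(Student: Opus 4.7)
The plan is to chain together three results already in place: Lemma~\ref{lem:OF-simplified-update} for the closed-form of $g_i^s$, Lemma~\ref{lem:closetocorrelated} for turning that form into a correlation guarantee, and Theorem~\ref{thm:tiger-convergence} for turning correlation into a one-step contraction. Near-ness of $\tilA^s$ to $\trA$ is exactly the hypothesis that lets us invoke Lemma~\ref{lem:OF-simplified-update}, and the choice $\eta = \Theta(m/k)$ is designed to match $2\beta$ for the $\beta$ we will produce.

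To set up Lemma~\ref{lem:closetocorrelated}, I would decompose $\lambda_i \tilA_i^s - \trA_i = (\tilA_i^s - \trA_i) + (\lambda_i - 1)\tilA_i^s$, so that $g_i^s$ has the form $4\alpha(\tilA_i^s - \trA_i) + v$ with $4\alpha = p_iq_i$ (hence $\alpha = \Theta(k/m)$, since $p_i,q_i = \Theta(1)\cdot\Theta(k/m)$ under the model) and $v = p_iq_i[(\lambda_i - 1)\tilA_i^s + \epsilon_i^s + \gamma]$. Near-ness gives $|\lambda_i - 1| = |\langle \trA_i, \tilA_i^s - \trA_i\rangle| \le \|\tilA_i^s - \trA_i\|$ and $\|\tilA_i^s\| \le 1 + 2\delta$, while Lemma~\ref{lem:OF-simplified-update} already supplies $\|\epsilon_i^s\| \le O(k/n)$. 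Putting these bounds together puts $v$ into the required form $\|v\| \le \alpha\|\tilA_i^s - \trA_i\| + \zeta$ with $\zeta = p_iq_i\cdot O(k/n) = O(k^2/(mn))$, and Lemma~\ref{lem:closetocorrelated} then delivers $(\alpha, 1/(100\alpha), \zeta^2/\alpha) = (\Omega(k/m), \Omega(m/k), O(k^3/(mn^2)))$-correlation, which is exactly the first half of the claim. The displayed recursion is then immediate from Theorem~\ref{thm:tiger-convergence}: $\eta \le 2\beta$ is satisfied for the chosen $\eta$, giving $\|\tilA_i^{s+1} - \trA_i\|^2 \le (1 - 2\alpha\eta)\|\tilA_i^s - \trA_i\|^2 + 2\eta\epsilon$, and substituting $\epsilon = O(k^3/(mn^2))$ together with the standing assumption $m = O(n)$ reduces $2\eta\epsilon$ to $O(\eta k^2/n^2)$.

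The only delicate step is the bound on $v$: since the algorithm never renormalizes, the $(\lambda_i - 1)\tilA_i^s$ correction may contribute a term of order $p_iq_i \|\tilA_i^s - \trA_i\|$, which is a constant factor larger than the $\alpha\|\tilA_i^s - \trA_i\| = (p_iq_i/4)\|\tilA_i^s - \trA_i\|$ slack tolerated by Lemma~\ref{lem:closetocorrelated}. I expect this to be the main obstacle. If the constants in Lemma~\ref{lem:closetocorrelated} do not absorb it, the cleanest workaround is to verify Definition~\ref{eqn:con} by direct computation: the identity $\langle \lambda_i \tilA_i^s - \trA_i, z\rangle = \lambda_i \|z\|^2 + (\lambda_i - 1)^2$ (with $z = \tilA_i^s - \trA_i$) shows that the $(\lambda_i - 1)$ correction actually contributes \emph{positively} to the inner product rather than as a loss, so one only needs to bound $\|g_i^s\|^2$ by a constant multiple of $p_i^2 q_i^2(\|z\|^2 + (\lambda_i - 1)^2 + \|\epsilon_i^s\|^2)$ and apply AM-GM on the cross term $p_iq_i\langle \epsilon_i^s, z\rangle$. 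This absorbs the bias term cleanly at the level $\epsilon = O(k^3/(mn^2))$ and recovers the same correlation constants, so the convergence conclusion follows either way.
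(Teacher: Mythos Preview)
Your proposal is correct and follows essentially the same route as the paper: write $g_i^s = p_iq_i(\tilA_i^s-\trA_i) + v$ with $v = p_iq_i((\lambda_i-1)\tilA_i^s + \epsilon_i^s + \gamma)$, bound $\|v\|$, invoke Lemma~\ref{lem:closetocorrelated}, and finish with Theorem~\ref{thm:tiger-convergence}. The paper's own proof also bounds the $(\lambda_i-1)\tilA_i^s$ part of $v$ only by $p_iq_i\|\tilA_i^s-\trA_i\| = 4\alpha\|\tilA_i^s-\trA_i\|$ and then invokes Lemma~\ref{lem:closetocorrelated} without addressing the factor-of-four mismatch you flagged; your direct verification of Definition~\ref{eqn:con} via the identity $\langle \lambda_i\tilA_i^s - \trA_i,\, \tilA_i^s-\trA_i\rangle = \lambda_i\|\tilA_i^s-\trA_i\|^2 + (1-\lambda_i)^2$ is a clean way to close that gap and is in fact the computation an earlier draft of the paper used.
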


\begin{proof}[Proof of Lemma~\ref{lem:checking_condtion1}]
Throughout this proof $s$ is fixed and so we will omit the superscript $s$ to simplify notations. By the assumption, $g_i$ already has a component that is pointing to the correct direction $\tilA_i-\trA_i$, we only need to show that the norm of the extra term $v$ is small enough. First we can bound the norm of $g_i$ by triangle inequality: $\|g_i\| \le \|4\alpha(\tilA_i-\trA_i)\|+\|v\| \le 5\alpha\|(\tilA_i-\trA_i)\| + \zeta$, therefore $\|g_i\|^2 \le 50\alpha^2 \|(\tilA_i-\trA_i)\|^2 + 2\zeta^2$. Also, we can bound the inner-product between $g_i$ and $\tilA_i - \trA_i$ by $\langle g_i, \tilA_i - \trA_i\rangle \ge 4\alpha \|\tilA_i - \trA_i\|^2 - \|v\|\|\tilA_i - \trA_i\|$. 

Using these bounds, we will show $\langle g_i, \tilA_i - \trA_i\rangle -  \alpha \|\tilA_i -\trA_i \|^2 - \frac{1}{100\alpha} \|g_i\|^2 + \zeta^2/\alpha \ge 0$. Indeed we have
\begin{flalign*}
& \langle g_i, \tilA_i - \trA_i\rangle -  \alpha \|\tilA_i -\trA_i \|^2 - \frac{1}{100\alpha} \|g_i\|^2 + \zeta^2/\alpha  \\ &\qquad \ge \quad 4\alpha \|\tilA_i -\trA_i \|^2 - \|v\|\|\tilA_i - \trA_i\| - \alpha \|\tilA_i -\trA_i \|^2-\frac{1}{100\alpha} \|g_i\|^2 + \zeta^2/\alpha\\
&\qquad \ge \quad  3\alpha \|\tilA_i -\trA_i \|^2 - (\alpha \|\tilA_i - \trA_i\|+\zeta)\|\tilA_i - \trA_i\|-\frac{1}{100\alpha} (50\alpha^2\|(\tilA_i-\trA_i)\|^2+2\zeta^2) + \zeta^2/\alpha \\
&\qquad \ge \quad  \alpha \|\tilA_i -\trA_i \|^2 - \zeta \|\tilA_i - \trA_i\| + \frac{1}{4} \zeta^2/\alpha \\ 
&\qquad = \quad(\sqrt{\alpha}\|\tilA_i-\trA_i\|-\zeta/2\sqrt{\alpha})^2\ge 0.
\end{flalign*}
\noindent This completes the proof of the lemma.
\end{proof}

\begin{proof}[Proof of Corollary~\ref{cor:one-step}]
We use the form in Lemma~\ref{lem:OF-simplified-update}, $g^s_i = p_i q_i ( \lambda_i\tilA_i^s - \trA_i +\epsilon^s_i + \gamma)$ where $\lambda_i = \inner{\tilA_i, \trA_i}$. We can write $g^s_i = p_iq_i (\tilA_i^s-\trA_i) + p_iq_i((1-\lambda_i)\tilA_i^s + \epsilon^s_i + \gamma)$, so when applying Lemma~\ref{lem:closetocorrelated} we can use $4\alpha = p_iq_i = \Theta(k/m)$ and $v = p_iq_i((1-\lambda_i)\tilA_i^s + \epsilon^s_i + \gamma)$. The norm of $v$ can be bounded in two terms, the first term $p_iq_i(1-\lambda_i)\tilA_i^s$ has norm $p_iq_i (1-\lambda_i)$ which is smaller than $p_iq_i \|\tilA_i^s-\trA_i\|$, and the second term has norm bounded by $\zeta = O(k^2/mn)$.

By Lemma~\ref{lem:closetocorrelated} we know the vector $g^i_s$ is $(\Omega(k/m), \Omega(m/k), O(k^3/mn^2))$-correlated with $\tilA^s$. Then by Theorem~\ref{thm:tiger-convergence} we have the last part of the corollary.
\end{proof}

\subsection{Maintaining Nearness}\label{sec:nearness}

\begin{lemma}\label{lem:nowhiten}
Suppose that $\tilA^s$ is $(2\delta,2)$-\near to $\trA$. Then $\|\tilA^{s+1} - \trA\|\le 2\|\trA\|$ in Algorithm~\ref{eqn:simplestupdate}.
\end{lemma}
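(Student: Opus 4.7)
The plan is to use the matrix form of $g^s$ given by Lemma~\ref{lem:OF-simplified-update} to decompose the update difference $\tilA^{s+1} - \trA$ into a small number of matrices, each of which can be controlled in spectral norm. Stacking the per-column formula $g_i^s = p_i q_i(\lambda_i^s\,\tilA_i^s - \trA_i + \epsilon_i^s \pm \gamma)$ gives
\begin{equation*}
g^s \;=\; \tilA^s D_1 \;-\; \trA\, D_2 \;+\; E \;+\; \Gamma,
\end{equation*}
where $D_1 = \mathrm{diag}(p_i q_i \lambda_i^s)$, $D_2 = \mathrm{diag}(p_i q_i)$, the $i$th column of $E$ is $p_i\bigl(\tilA^s_{-i}\,\mathrm{diag}(q_{i,j})\,(\tilA^s_{-i})^T\bigr)\trA_i$, and $\Gamma$ has negligible norm. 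Plugging into $\tilA^{s+1} = \tilA^s - \eta g^s$ and adding and subtracting $\trA(I - \eta D_1)$ gives the clean regrouping
\begin{equation*}
\tilA^{s+1} - \trA \;=\; (\tilA^s - \trA)(I - \eta D_1) \;+\; \eta\,\trA (D_2 - D_1) \;-\; \eta E \;-\; \eta \Gamma.
\end{equation*}

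Now I would bound the three non-negligible terms in order. For the first term, the $2\delta$-closeness hypothesis gives $\lambda_i^s = \inner{\tilA_i^s, \trA_i} \in [1 - O(\delta),\, 1 + O(\delta)]$, and $p_i q_i = \Theta(k/m)$ in our model. Choosing the constant in $\eta = \Theta(m/k)$ so that $\eta p_i q_i \in [\tfrac12,\tfrac32]$ for every $i$ makes each diagonal entry of $I - \eta D_1$ bounded in absolute value by $\tfrac12 + O(\delta)$, so $\|I - \eta D_1\| \le \tfrac12 + O(\delta)$ and hence the first term is at most $(1 + O(\delta))\,\|\trA\|$ using the hypothesis $\|\tilA^s - \trA\| \le 2\|\trA\|$. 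For the second term, $(D_2 - D_1)_{ii} = p_i q_i (1 - \lambda_i^s) = O(k\delta/m)$, so $\|\eta\,\trA(D_2 - D_1)\| \le O(\delta)\,\|\trA\|$.

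The main obstacle is controlling $\|\eta E\|$, since $E$ does not have diagonal structure. I would bound it crudely by its Frobenius norm, using the column-wise estimate already established in the proof of Lemma~\ref{lem:OF-simplified-update}: $\|E_i\| = p_i q_i \|\epsilon_i^s\| \le p_i q_i \cdot O(k/n) = O(k^2/(mn))$. This gives $\|E\| \le \|E\|_F \le \sqrt{m}\cdot O(k^2/(mn)) = O(k^2/(\sqrt{m}\, n))$, and so
\begin{equation*}
\eta\,\|E\| \;\le\; \Theta(m/k) \cdot O(k^2/(\sqrt{m}\, n)) \;=\; O(\sqrt{m}\, k / n) \;=\; O(k/\sqrt{n}) \cdot \|\trA\|,
\end{equation*}
using $\|\trA\| = \Theta(\sqrt{m/n})$. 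For $k \le O^*(\sqrt{n}/\mu\log n)$ this is $O^*(1/(\mu\log n))\cdot \|\trA\|$, comfortably small.

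Combining the three bounds yields
\begin{equation*}
\|\tilA^{s+1} - \trA\| \;\le\; \bigl(1 + O(\delta) + O(k/\sqrt{n})\bigr)\,\|\trA\| \;+\; \gamma,
\end{equation*}
which is at most $2\|\trA\|$ once $\delta = O^*(1/\log n)$ and the sparsity bound on $k$ are in force. I expect the only delicate point to be the choice of the constant in $\eta$ so that $\|I - \eta D_1\|$ is bounded away from $1$ uniformly in $i$ (this is where we use that all $p_i q_i$ are of the same order $\Theta(k/m)$); once that is arranged, the $E$ term is the only one requiring a nontrivial estimate, and the Frobenius bound above is wasteful but already sufficient. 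A tighter argument writing $E \approx p q\,\tilA^s(\tilA^s)^T \trA$ and using $\mu$-incoherence to peel off the diagonal would give $\|E\| \le O(k^2/(mn))\,\|\trA\|$ directly, with the same final conclusion.
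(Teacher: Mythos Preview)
Your proposal is correct and follows essentially the same route as the paper. The paper uses the near-identical regrouping
\[
\tilA^{s+1}-\trA = (\tilA^s-\trA)\,\mathrm{diag}(1-\eta p_iq_i) + \eta U - \eta V \pm \gamma,
\qquad U_i = p_iq_i(1-\lambda_i^s)\tilA_i^s,\quad V=E,
\]
and bounds $V$ by writing $V=\tilA^sQ\,\mathrm{diag}(p_i)$ for an auxiliary matrix $Q$ with $\|Q\|_F \le O(k^2/m^2)\|{\trA}^T\tilA^s\|_F$, which lands on the same $O(k^2/(\sqrt{m}\,n))$ order as your direct column-wise Frobenius bound. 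On the ``delicate point'' you flagged: the paper's choice to factor out $\mathrm{diag}(1-\eta p_iq_i)$ rather than $\mathrm{diag}(1-\eta p_iq_i\lambda_i^s)$ is exactly what avoids needing the contraction factor to be $\le \tfrac12$; it only uses $\max_i(1-\eta p_iq_i)\le 1-\Omega(\eta k/m)$ and then shows $\|\eta U\|,\|\eta V\| = o(\eta k/m)\|\trA\|$, so the savings absorb the error terms for any $\eta=\Theta(m/k)$ with $\eta\max_i p_iq_i\le 1$, without requiring $\max_i p_iq_i/\min_i p_iq_i\le 3$.
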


\begin{proof}
%First by Corollary~\ref{cor:one-step}, we have that 
%$$\|\tilA^{s+1}_i-\trA_i\| \le (1-\alpha\eta)\|\tilA^{s+1}_i-\trA_i\|  + O(k/n)\le 2(1-\alpha\eta)\delta + O(k/n) \le 2\delta$$
%First by Theorem~\ref{thm:making-progress} we know the new $\tilA^{s+1}$ is at least $(1-\gamma)^{(s+1)/2}\delta^0+O(\sqrt{k/n}) \le 2\delta_0$ close to $A$, so we only need 
%Therefore it suffices to prove that $\|\tilA^{s+1} - \trA\|\le 2\|\trA\|$.
As in the proof of the previous lemma, we will make crucial use of Lemma~\ref{lem:OF-simplified-update}. Substituting and rearranging terms we have:
\begin{eqnarray*}
\tilA^{s+1}_i - \trA_i &=&  \tilA_i^s - \trA_i - \eta g_i^s \\
& = & (1-\eta p_iq_i) (\tilA_i^s - \trA_i) + \eta p_iq_i (1-\lambda_i^s)\tilA_i^s - \eta p_i \Big( \tilA^s_{-i} \mbox{diag}(q_{i,j}) \left(\tilA_{-i}^s\right)^T \Big)\trA_i \pm \gamma
\end{eqnarray*}

Our first goal is to write this equation in a more convenient form. In particular let $U$ and $V$ be matrices such that $U_i = p_iq_i (1-\lambda_i^s)\tilA_i^s$ and $V_i = p_i \Big( \tilA^s_{-i} \mbox{diag}(q_{i,j}) \left(\tilA_{-i}^s\right)^T \Big)\trA_i$. Then we can re-write the above equation as:
$$
\tilA^{s+1}-\trA =  (\tilA^s - \trA)\mbox{diag}(1-\eta p_iq_i)+ \eta U - \eta V \pm \gamma
$$
where $\mbox{diag}(1-\eta p_iq_i)$ is the $m\times m$ diagonal matrix whose entries along the diagonal are $1-\eta p_iq_i$. 

We will bound the spectral norm of $\tilA^{s+1}-\tilA$ by bounding the spectral norm of each of the matrices of right hand side. The first two terms are straightforward to bound: $$\|(\tilA^s - \trA)\mbox{diag}(1-\eta p_iq_i)\| \le
\|\tilA^s - \trA\|\cdot (1-\eta \min_i p_iq_i) \le  2(1-\Omega(\eta k/m))\|\trA\| $$
where the last inequality uses the assumption that $p_i = \Theta(1)$ and $q_i \le O(k/m)$, and the assumption that $\|\tilA^s - \trA\|\le 2\|\trA\|$.

From the definition of $U$ it follows that $U = \tilA^s \mbox{diag}(p_iq_i(1-\lambda_i^s))$, and therefore 
$$\|U\| \le \delta \max_i p_iq_i \|\tilA^s\| = o(k/m)\cdot \|\trA\|$$
where we have used the fact that $\lambda_i^s \ge 1-\delta$ and $\delta = o(1)$, and $\|\tilA^s\| \le \|\tilA^s-\trA\|+\|\trA\| = O(\|\trA\|)$.

What remains is to bound the third term, and let us first introduce an auxiliary matrix $Q$ which we define as follows: $Q_{ii} = 0$ and $Q_{i,j} = q_{i,j}\inner{\tilA^s_i, \trA_i}$ for $i \neq j$. It is easy to verify that the following claim:
\begin{claim}
The $i$th column of $\tilA^sQ$ is equal to $\Big( \tilA^s_{-i} \mbox{diag}(q_{i,j}) \left(\tilA_{-i}^s\right)^T \Big)\trA_i$
\end{claim} 
Therefore we can write $V = \tilA^s Q\mbox{diag}(p_i)$. We will bound the spectral norm of $Q$ by bounding its Frobenus norm instead. Then from the definition of $A$, we have that:
 $$\|Q\|_F \le \left(\max_{i\neq j}q_{ij} \right) \sum_{i\neq j} \sqrt{\inner{\tilA^s_i,\trA_j}^2} = O(k^2/m^2)\|{\trA}^T\tilA^s\|_F$$
Moreover since ${\trA}^T\tilA^s$ is an $m\times m$ matrix, its Frobenius norm can be at most a $\sqrt{m}$ factor larger than its spectral norm. Hence we have
\begin{eqnarray*}
\|V\| \le \left(\max_i p_i\right)\|\tilA^s\| \|Q\| &\le& O(k^2\sqrt{m}/m^2)\|\tilA^s\|^2\|\trA\| \\
&\le & o(k/m)\|\trA\|
\end{eqnarray*}
where the last inequality uses the fact that $k = O(\sqrt{n}/\log n)$ and $\|\tilA^s\|\le O(\|\trA\|)$.  

%For the term $M_2$, its $i$-th column is equal to $\eta p_i \left(B_{-i}\mbox{diag}(q_{i,j})B^T_{-i}\right)A_i$. Let $Q\in \Re^{m\times m}$ be a matrix where $Q_{i,i} = 0$, and $Q_{i,j} = q_{i,j} \inner{B_j,A_i}$, then the term $M_2$ is
%$$
%M_2 = \mbox{diag}(\eta p_i) B Q.
%%$$
%To bound the spectral norm of $Q$, we use the fact that $q_{i,j}$'s are all small, and the Frobenius norm of $B^TA$ is at most $\sqrt{n}$ times its spectral norm (it is a matrix of rank at most $n$). Therefore:
%$$\|Q\|\le \|Q\|_F \le \max_{i\ne j} q_{i,j} \|B^TA\|_F \le \max_{i\ne j} q_{i,j} \sqrt{n} \|B^TA\| \le \max_{i\ne j} q_{i,j}6\|\tilA\|^2 = O(k^2\sqrt{n}/m^2)\|\tilA\|^2.$$
%Now the spectral norm of $M_2$ can be bounded by $\max_i\eta p_i \|B\|\|Q\| \le O(\eta k^2\sqrt{n}/m^2)\|\tilA\|^3 \le O(\eta k^2/m\sqrt{n})\|\tilA\| \le o(\eta k/m)\|\tilA\|$, where the steps are using the spectral norm bound on $A$ and the assumption that $k \le O(n/\log n)$.
%
%The term $M_3$ is negligible. Therefore we can bound the spectral norm of $\tilA^{s+1} - \trA$ as
Therefore, putting the pieces together we have: 
\begin{eqnarray*}
\|\tilA^{s+1} - \trA\| &\le & \|(\tilA^s - \trA)\mbox{diag}(1-\eta p_iq_i)\|+ \|\eta U\|+ \|\eta V\| \pm \gamma \\
&\le &  2(1-\Omega(\eta k/m))\|\tilA\|+ o(\eta k/m)\|\trA\| + o(\eta k/m)\|\trA\| \pm \gamma\\
&\le & 2\|\trA\|
\end{eqnarray*}
and this completes the proof of the lemma.
\end{proof}

\subsection{Proof of Theorem~\ref{thm:simplestupdate-infinitesample}}
We prove by induction on $s$. Our induction hypothesis is that the theorem is true at each step $s$ and $A^s$ is $(2\delta,2)$-near to $\trA$. The hypothesis is trivially true for $s = 0$. Now assuming the inductive hypothesis is true. Recall that Corollary~\ref{cor:one-step} of Section~\ref{sec:progress} says that if $\tilA^s$ is $(2\delta,2)$-\near to $\trA$, which is guaranteed by the inductive hypothesis, by then $g_i^s$ is indeed $(\Omega(k/m), \Omega(m/k), O(k^3/mn^2))$-correlated with $\trA_i$. Invoking our framework of analysis (Theorem~\ref{thm:tiger-convergence}),  we have that 
$$\|\tilA_i^{s+1} - \trA_i\|^2\le (1-\tau) \|\tilA^{s}_i - \trA_i\|^2 + O(k^2/n^2) \le (1-\tau)^{s+1}\|\tilA^0_i - \trA_i\|^2 + O(k^2/n^2) $$
Therefore it also follows that $\tilA^{s+1}$ is $2\delta$-close to $\trA$. Then we invoke Lemma~\ref{lem:nowhiten} to prove $\tilA^{s+1}$ has not too large spectral norm $\|\tilA^{s+1}-\trA\| \le 2\|\trA\|$, which completes the induction. 
%In Section~\ref{sec:hypothesis} we complete the proof of our main theorem by proving various auxiliary lemmas we need in order to fit into the framework we developed in the previous section. In particular, 
%, we proved that if . Finally we prove that if $\tilA^s$ is $(2\delta,2)$-\near to $\trA$ then $\|\tilA^{s+1} - \trA\|\le 2\|\trA\|$ (Lemma~\ref{lem:nowhiten}). This yields our main theorem. 
%$\|\tilA_i - \trA_i\| \le \delta$, $p = \Omega(m\log m)$, new samples $y^{(1)}, y^{(2)}, ..., y^{(p)}$
%\ENSURE Matrix $H$, $\|H_i - \trA_i\| \le \delta/2$
%\STATE Let $B = ${\sc Whiten}$(\tilA, O(\sqrt{m/n}))$.
%\FOR{$i = 1$ TO $m$}
%\STATE Let $B^{(i)}$ be a matrix whose columns $B^{(i)}_j = \Proj_{\tilA_i^\perp} B_j = (I-\tilA_i\tilA_i^T)B_j$
%\STATE Initialize $Q_i\in \R^{n\times n}$ to be the all zeros matrix
%\FOR{$j = 1$ TO $p$}
%\STATE Let $S$ be the support of $y^{(j)}$
%\IF{$i \in S$}
%\STATE Let $M_S$ be the projection onto the column span of $B^{(i)}_{S\backslash\{i\}}$
%\STATE Let $r^{(j)} = (I-M_S)y^{(j)}$ be the projection of $y^{(j)}$ onto the orthogonal space of $M_S$.
%\STATE {\bf if} $\delta < 1/100k$ and $\|r^{(j)}\|> k\delta$ {\bf then} $r^{(j)} = \vec{0}$.
%\STATE Let $Q_i = Q_i + r^{(j)}(r^{(j)})^T$.%(I-M_S)y^{(j)}(y^{(j)})^T(I-M_S)$.
%\ENDIF
%\ENDFOR
%\STATE Let $H_i$ be the top singular vector of $Q_i$
%\ENDFOR
%\RETURN $H$

\section{Initialization}\label{sec:init}

There is a large gap between theory and practice in terms of how to initialize alternating minimization. The usual approach is to set $\tilA$ randomly or to populate its columns with samples $y^{(i)}$. These often work
but we do not know how to analyse them. % there no known provable guarantees for these methods. 
Here we give a novel method for initialization which we show succeeds with very high probability. Our algorithm works by pairwise reweighting. Let $u = \trA\alpha$ and $v = \trA\alpha'$ be two samples from our model whose supports are $U$ and $V$ respectively. The main idea is that if we reweight fresh samples $y$ with a factor $\inner{y,u}\inner{y,v}$ and compute $$\widehat{M}_{u, v} = \frac{1}{p_2}  \sum_{i=1}^{p_2} \inner{y^{(i)},u}\inner{y^{(i)},v}y^{(i)}(y^{(i)})^T$$ then the top singular vectors will  correspond to columns $\trA_j$ where $j \in U \cap V$. (This is reminiscent of ideas in recent papers on dictionary learning, but more sample efficient.)

Throughout this section we will assume that the algorithm is given two sets of samples of size $p_1$ and $p_2$ respectively. Let $p = p_1 + p_2$. We use the first set of samples for the pairs $u, v$ that are used in reweighting and we use the second set to compute $\widehat{M}_{u, v}$ (that is, the same set of $p_2$ samples is used for each $u, v$ throughout the execution of the algorithm). Our main theorem is:

\begin{theorem}
\label{thm:initmain}
Suppose that Algorithm~\ref{alg:init} is given $p_1 = \widetilde{\Omega}(m)$ and $p_2 = \widetilde{\Omega}(mk)$ fresh samples and moreover $(a)$ $\trA$ is $\mu$-incoherent with $\mu= O^*(\frac{\sqrt{n}}{k \log^3 n})$, $(b)$ $m = O(n)$ and $(c)$ $\|\trA\| \leq O(\sqrt{\frac{m}{n}})$.
Then with high probability $\tilA$ is $(\delta, 2)$-near to $\trA$ where $\delta = O^*(1/\log n)$. 
\end{theorem}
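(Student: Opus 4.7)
The plan is to analyze $\widehat{M}_{u,v}$ in three stages---its population counterpart, concentration to it, and the downstream clustering---and then justify the final $(\delta,2)$-nearness. I would first compute
\begin{equation*}
M_{u,v} \;:=\; \E_y\bigl[\inner{y,u}\inner{y,v}\,yy^T\bigr],
\end{equation*}
conditioning on the supports $U = \supp(\alpha)$ and $V = \supp(\alpha')$ and treating $\alpha,\alpha'$ as fixed. Substituting $y=\trA\trx$ turns this into the expectation of a degree-four polynomial in $\trx$; by the pairwise independence, conditional mean-zero, and unit-variance assumptions on the non-zero entries, only index patterns of the form $\{j=k=a=b\}$, $\{j=k,a=b\}$, $\{j=a,k=b\}$, and $\{j=b,k=a\}$ survive. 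The ``diagonal'' pattern, together with the approximation $\inner{\trA_j,u} = \alpha_j \pm O(\mu k/\sqrt{n})$ obtained from $\mu$-incoherence, yields
\begin{equation*}
M_{u,v} \;=\; \sum_{j\in U\cap V} c_j\,\alpha_j\alpha'_j\,\trA_j(\trA_j)^T \;+\; E,
\end{equation*}
with $c_j = q_j\,\E[\trx_j^4\mid j\in S] = \Theta(k/m)$. The remaining three patterns, after being re-expressed as $\trA\,\mathrm{diag}(q)\,\trA^T$-type factors and bounded using $\|\trA\|=O(\sqrt{m/n})$ and $q_j=\Theta(k/m)$, contribute to $E$ a spectral perturbation of size $o(k/m)$. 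Thus when $|U\cap V|=1$, say $U\cap V=\{i\}$, $M_{u,v}$ is approximately rank-one with dominant eigenvalue $\Theta(k/m)$ and spectral gap of the same order; a Davis--Kahan bound then places its top singular vector within $O^*(1/\log n)$ of $\pm\trA_i$.

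Next I would handle concentration. Each summand $\inner{y^{(i)},u}\inner{y^{(i)},v}y^{(i)}(y^{(i)})^T$ in $\widehat{M}_{u,v}$ is a rank-one random matrix whose operator norm is controlled by the subgaussian tails of $\trx^{(i)}$ together with $\|\trA\|=O(\sqrt{m/n})$; a truncation-plus-matrix-Bernstein argument shows that $\|\widehat{M}_{u,v}-M_{u,v}\| = o(k/m)$ with very high probability once $p_2 = \widetilde{\Omega}(mk)$ fresh samples are used, which preserves the $O^*(1/\log n)$ bound on the top singular vector. The algorithm then tests for $|U\cap V|=1$ by demanding that the empirical top eigenvalue be $\Omega(k/m)$ and the second be $o(k/m)$; this rejects both $|U\cap V|=0$ (small top eigenvalue) and $|U\cap V|\ge 2$ (no spectral gap).

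Third, a pair-counting argument ensures that all columns are found. Under our model $\Pr[|U\cap V|=1]=\Theta(k^2/m)$ while $\Pr[|U\cap V|\ge 2]=O(k^4/m^2)$, so among accepted pairs the common index is almost uniform over $[m]$. With $p_1 = \widetilde{\Omega}(m)$ initial samples one can form $\widetilde{\Omega}(m^2)$ pairs, enough by a standard coupon-collector bound to witness every column $\trA_i$ at least once, and a greedy clustering that merges estimates within distance $1/2$ (after a possible sign flip) recovers each $\trA_i$ up to sign and permutation with column-wise error $\delta = O^*(1/\log n)$. Finally, to upgrade $\delta$-closeness to $(\delta,2)$-nearness I would, following the paper's remark, project the assembled matrix onto the convex set $\{A : \|A\|\le 2\|\trA\|\}$; this projection does not increase column-wise distances by more than a constant factor and enforces the spectral bound required to invoke the iterative algorithms.

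The main technical obstacle is bookkeeping the spectral size of $E$: there are $O(m^2)$ cross-index contributions to aggregate, and showing $\|E\|=o(k/m)$ is precisely what forces the strengthened incoherence $\mu = O^*(\sqrt{n}/(k\log^3 n))$ hypothesized in the theorem (rather than the weaker $\mu = O^*(\sqrt{n}/\log n)$ used elsewhere in the paper), and drives the sample complexity to $p_2 = \widetilde{\Omega}(mk)$ through the heavy tails produced by the quartic reweighting.
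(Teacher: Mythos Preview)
Your plan matches the paper's proof closely: the decomposition of $M_{u,v}$ into a dominant term indexed by $U\cap V$ plus a spectrally small remainder (the paper's Lemma~\ref{lem:initform}), the Wedin/Davis--Kahan step for the top singular vector when $|U\cap V|=1$ (Lemma~\ref{lem:init}), the two-singular-value test to detect that event (Lemma~\ref{lem:initverify}), matrix Bernstein for $\widehat{M}_{u,v}\to M_{u,v}$ (Lemma~\ref{lem:initconcentration}), and a union/coupon-collector argument over pairs are exactly what the paper does.

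There is one genuine gap in your final step. You propose to project onto $\{A:\|A\|\le 2\|\trA\|\}$ and assert this ``does not increase column-wise distances by more than a constant factor.'' That is not true in general: the Frobenius projection onto a spectral-norm ball is singular-value truncation, and an individual column of $\widetilde{A}-\mbox{Proj}(\widetilde{A})$ can have norm as large as $\sigma_1(\widetilde{A})-2\|\trA\|$. Column-wise $\delta$-closeness only yields $\|\widetilde{A}-\trA\|\le\sqrt{m}\,\delta$ (nothing prevents the per-column errors from aligning), so this excess can be of order $\sqrt{m}\,\delta\gg\delta$, destroying the column-wise guarantee you need. The paper sidesteps this by projecting onto the set $\mathcal{B}$ of Definition~\ref{def:whiten}, which \emph{builds in} the column-wise closeness constraint alongside the spectral bound; after projection, membership in $\mathcal{B}$ delivers both properties directly, and the only thing to verify is that $\trA\in\mathcal{B}$ so that the projection does not move you away from $\trA$ in Frobenius norm.
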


\begin{algorithm}\caption{Pairwise Initialization}\label{alg:init}
\vspace{0.1in}

\textbf{Set} $L = \emptyset$

\textbf{While} $|L| < m$ choose samples $u$ and $v$

\vspace{0.1in}

$\qquad$ Set $\widehat{M}_{u, v} = \frac{1}{p_2}  \sum_{i=1}^{p_2} \inner{y^{(i)},u}\inner{y^{(i)},v}y^{(i)}(y^{(i)})^T$

\vspace{0.1in}

$\qquad$ Compute the top two singular values $\sigma_1,\sigma_2$ and top singular vector $z$ of $\widehat{M}_{u,v}$

\vspace{0.1in}

$\qquad$ \textbf{If} $\sigma_1 \ge \Omega(k/m)$ and $\sigma_2 < O^*(k/m\log m)$ 

\vspace{0.1in}

$\qquad$ $\qquad$ \textbf{If} $z$ is not within distance $1/\log m$ of any vector in $L$ (even after sign flip),  add $z$ to $L$

\vspace{0.1in}

\textbf{Set} $\widetilde{\tilA}$ such that its columns are $z \in L$ and output $A = \mbox{Proj}_{\mathcal{B}} \widetilde{\tilA}$ where $\mathcal{B}$ is the convex set defined in Definition~\ref{def:whiten}

\vspace{0.1in}
%\textbf{Set} $\widetilde{\tilA}$ such that its columns are $z \in L$ and $\tilA = \mbox{Proj}_{\mathcal{B}} \widetilde{\tilA}$ (where $\mathcal{B}$ is defined in Definition~\ref{def:whiten})

\vspace{0.1in}

\end{algorithm}

We will defer the proof of this theorem %and the following main lemma to 
to Appendix~\ref{sec:initapp}. The key idea is the following main lemma.  We will invoke this lemma several times in order to analyze Algorithm~\ref{alg:init} to verify whether or not the supports of $u$ and $v$ share a common element, and again to show that if they do we can approximately recover the corresponding column of $\trA$ from the top singular vector of $M_{u, v}$. 

\begin{lemma}\label{lem:initform}
Suppose $u = \trA\alpha$ and $v = \trA\alpha'$ are two random samples with supports $U$, $V$ respectively. Let $\beta = {\trA}^T u$ and $\beta' = {\trA}^T v$. Let $y = \trA\trx$ be random sample that is independent of $u$,$v$, then 
\begin{equation}
M_{u,v} := \E[\inner{u,y}\inner{v,y} yy^T] = \sum_{i\in U\cap V} q_ic_i\beta_i\beta'_i\trA_i{\trA_i}^T + E_1+E_2+E_3,\label{eq:initform}
\end{equation} where $q_i = \Pr[i\in S]$, $c_i = \E[{\trx_i}^4|i\ne S]$, and the error terms are:
\[
\begin{array}{rll}
E_1 & = & \sum_{i\not\in U\cap V} q_ic_i\beta_i\beta'_i \trA_i{\trA_i}^T\\
E_2 & = & \sum_{i,j\in[m],i\ne j} q_{i,j}\beta_i\beta_i' \trA_j{\trA_j}^T\\
E_3 & = & \sum_{i,j\in[m],i\ne j} q_{i,j}(\beta_i\trA_i \beta'_j{\trA_j}^T+\beta'_i\trA_i\beta_j {\trA_j}^T).
\end{array}
\]
Moreover the error terms $E_1+E_2+E_3$ has spectral norm bounded by $O^*(k/m\log n)$, $|\beta_i|\ge \Omega(1)$ for all $i\in \mbox{{\em supp}}(\alpha)$ and $|\beta'_i|\ge \Omega(1)$ for all $i\in \mbox{{\em supp}}(\alpha')$.
\end{lemma}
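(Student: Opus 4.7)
The plan is to first derive the identity by direct algebraic expansion, then use incoherence to bound the magnitudes of the $\beta_i$'s and the spectral norms of the error terms.

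First, I would substitute $y = \trA \trx$ into $\langle u,y\rangle\langle v,y\rangle yy^T$, obtaining
\[
M_{u,v} = \trA \,\E\!\left[(\beta^T \trx)((\beta')^T \trx)\, \trx (\trx)^T \right]\trA^T
=\sum_{i,j,k,\ell}\beta_i\beta'_j\,\E[\trx_i\trx_j\trx_k\trx_\ell]\;\trA_k\trA_\ell^T.
\]
I would then classify quadruples $(i,j,k,\ell)$ by the multiset pattern of indices. Conditioning on the support $S$ and using that non-zero coordinates are pairwise independent with mean $0$ and variance $1$, any quadruple in which some index appears an odd number of times contributes $0$ (e.g. $E[(\trx_i)^3\trx_\ell|S]=E[(\trx_i)^3|S]E[\trx_\ell|S]=0$ by pairwise independence), and the genuinely distinct case vanishes similarly. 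The surviving patterns are $i=j=k=\ell$, which contributes $q_i c_i$; $i=j\neq k=\ell$, contributing $q_{i,k}$; and $i=k\neq j=\ell$ or $i=\ell\neq j=k$, each contributing $q_{i,j}$. Collecting these and rewriting in the outer basis produces exactly $\sum_i q_ic_i\beta_i\beta'_i\trA_i\trA_i^T$ on the diagonal and the three off-diagonal/off-support pieces displayed as $E_1$, $E_2$, $E_3$. Splitting the diagonal sum into $i\in U\cap V$ and its complement yields the stated identity.

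For the $|\beta_i|$ bound I would write $\beta_i = \alpha_i + \sum_{j\in U,\,j\neq i}\langle\trA_i,\trA_j\rangle\alpha_j$. For $i\in U$ the first term has magnitude $\geq C$, and by $\mu$-incoherence each summand is at most $(\mu/\sqrt n)|\alpha_j|$. Since the $\alpha_j$ are independent subgaussian conditioned on $U$, Hoeffding gives $|\sum_{j\in U,\,j\neq i}\langle\trA_i,\trA_j\rangle\alpha_j| = O(\mu\sqrt{k\log m}/\sqrt n)$ with very high probability; under $\mu=O^*(\sqrt{n}/(k\log^3 n))$ this is $o(1)$, so $|\beta_i|\geq\Omega(1)$ on $U$, and symmetrically for $\beta'_i$ on $V$. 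The same bound also shows $|\beta_i|= O^*(1/\log^{3/2} n)$ whenever $i\notin U$.

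The main obstacle is the spectral norm bound $\|E_1+E_2+E_3\|\le O^*(k/(m\log n))$. For $E_1$, each index $i\notin U\cap V$ has at least one of $\beta_i,\beta'_i$ small by the previous paragraph, so $|\beta_i\beta'_i|\leq O^*(1/\log^{3/2} n)\cdot O(1)$; combining with $q_ic_i=O(k/m)$ and $\|\sum_i \trA_i\trA_i^T\|=\|\trA\|^2=O(m/n)$ gives $\|E_1\|\le O^*(k/(n\log^{3/2} n))$, within the target. For $E_2$, I would factor it as $\trA^s\,\mathrm{diag}\!\big(\sum_{j\neq i}q_{i,j}\beta_j\beta'_j\big)\trA^T$; using $q_{i,j}=O(k^2/m^2)$ and $\sum_j|\beta_j\beta'_j|\le \|\beta\|\|\beta'\|=O(k)$ (where $\|\beta\|^2=O(k)$ follows from separating $j\in U$ and $j\notin U$ and using the incoherence bound above), combined with $\|\trA\|^2=O(m/n)$, yields $\|E_2\|\le O(k^3/(mn))$, again within the target under $k^2\leq O^*(n/\log^2 n)$. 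For $E_3$, I would write it as $\trA D\trA^T$ with $D_{ij}=q_{i,j}\beta_i\beta'_j+q_{i,j}\beta_j\beta'_i$ off-diagonal, bound $\|D\|\le \|D\|_F\le O(k^2/m^2)\|\beta\|\|\beta'\|=O(k^3/m^2)$, and obtain $\|E_3\|\le O(k^3/(mn))$. Summing the three bounds gives the claimed $O^*(k/(m\log n))$.

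The delicate points, which I would execute most carefully, are: (i) confirming that pairwise independence together with mean-zero conditioning is enough to eliminate the odd and all-distinct contributions in the quadruple expansion, so that only the four listed patterns survive; and (ii) tightening the spectral-norm bound on $E_1$, which is the largest of the three error matrices, by exploiting the crucial observation that for $i\notin U\cap V$ at least one of $\beta_i,\beta'_i$ is the sum of $k$ near-cancelling incoherent inner products rather than a constant of order one.
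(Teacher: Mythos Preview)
Your approach matches the paper's: expand the fourth-moment expectation into the surviving index patterns, control $|\beta_i-\alpha_i|$ by subgaussian concentration over the incoherent cross-terms, and bound each of $E_1,E_2,E_3$ by writing it as $\trA D\trA^T$ for an appropriate $D$. The identity derivation and the $E_1,E_3$ arguments are essentially identical to the paper's; for $E_2$ the paper uses positive semidefiniteness plus Cauchy--Schwarz where you diagonalize explicitly, but the resulting estimate is the same.

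One gap: your justification that $\|\beta\|^2=O(k)$ by ``separating $j\in U$ and $j\notin U$ and using the incoherence bound above'' does not work as stated. Summing the per-coordinate high-probability bound $\beta_j^2=O(\mu^2k\log m/n)$ over the $m-k$ indices $j\notin U$ gives $O(m\mu^2k\log m/n)$, which under $\mu=O^*(\sqrt{n}/(k\log^3 n))$ is $O^*(m/(k\log^5 n))$; this can be $\gg k$ when $k\ll\sqrt{n}$. The paper instead uses the spectral-norm assumption on $\trA$ directly: $\|\beta\|=\|(\trA)^Tu\|\le\|\trA\|\,\|\trA_U\|\,\|\alpha\|=O(\sqrt{m/n})\cdot O(1)\cdot O(\sqrt k)=O(\sqrt{mk/n})$. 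With this one-line fix (and $m=O(n)$) your $E_2$ and $E_3$ bounds go through as written.

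On your delicate point (i): pairwise independence alone does \emph{not} force $\E[\trx_i\trx_j\trx_k\trx_\ell\mid S]=0$ for four distinct indices, so ``vanishes similarly'' is not justified as stated. The paper's own computation at this step invokes full independence of the nonzero entries conditioned on $S$; you should assume the same (the paper's casual reader remark and its proof both treat the coordinates as independent, even though the formal model statement says pairwise).
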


%\noindent We will invoke this lemma several times in order to analyze Algorithm~\ref{alg:init} to verify whether or not the supports of $u$ and $v$ share a common element, and again to show that if they do we can approximately recover the corresponding column of $\trA$ from the top singular vector of $M_{u, v}$. 

\begin{proof}[Proof of Lemma~\ref{lem:initform}]
	We will prove this lemma in three parts. First we compute the expectation and show it has the desired form. Recall that $y = \trA\trx$, and so:
	\begin{eqnarray*}
		M_{u,v} &=& \E_{S} \Big [\E_{\trx_S}[\inner{u, \trA_S \trx_S}\inner{v, \trA_S \trx_S}\trA_S\trx_S{\trx_S}^T{\trA_S}^T|S]\Big ]\\
		&=& \E_{S} \Big [\E_{\trx_S}[\inner{\beta,\trx_S}\inner{\beta',\trx_S}\trA_S\trx_S{\trx_S}^T{\trA_S}^T|S]\Big ] \\
		&=& \E_{S} \Big [ \sum_{i\in S} c_i\beta_i\beta'_i \trA_i {\trA_i}^T + \sum_{i,j\in S, i\ne j}\Big (\beta_i\beta_i' \trA_j{\trA_j}^T+\beta_i \beta'_j \trA_i {\trA_j}^T+\beta'_i \beta_j\trA_i {\trA_j}^T \Big ) \Big ] \\
		&=& \sum_{i\in [m]}q_ic_i\beta_i\beta'_i \trA_i {\trA_i}^T + \sum_{i,j\in [m], i\ne j}q_{i,j}\Big (\beta_i\beta_i' \trA_j{\trA_j}^T+\beta_i \beta'_j \trA_i {\trA_j}^T+\beta'_i \beta_j \trA_i {\trA_j}^T \Big )
	\end{eqnarray*}
	where the second-to-last line follows because  the entries in $\trx_S$ are independent and have mean zero, and the only non-zero terms come from ${\trx_i}^4$ (whose expectation is $c_i$) and ${\trx_i}^2{\trx_j}^2$ (whose expectation is one). Equation (\ref{eq:initform}) now follows by rearranging the terms in the last line. What remains is to bound the spectral norm of $E_1, E_2$ and $E_3$. 
	
	\vspace{0.5pc}

	Next we establish some useful properties of $\beta$ and $\beta'$:
	
	\begin{claim}
		\label{claim:betabound}
		With high probability it holds that $(a)$ for each $i$ we have $|\beta_i - \alpha_i| \le \frac{\mu k \log m} {\sqrt{n}}$ and $(b)$ $\|\beta\| \le O(\sqrt{mk/n})$.
	\end{claim}
	
	\noindent In particular since the difference between $\beta_i$ an $\alpha_i$ is $o(1)$ for our setting of parameters, we conclude that if $\alpha_i \neq 0$ then $C-o(1) \le |\beta_i| \le O(\log m)$  and if $\alpha_i = 0$ then $|\beta_i| \le \frac{\mu k \log m}{\sqrt{n}}$. 
	
	\begin{proof}
		Recall that $U$ is the support of $\alpha$ and let $R = U\backslash\{i\}$. Then:
		$$\beta_i -\alpha_i  =  {\trA_i}^T \trA_U \alpha_U - \alpha_i =  {\trA_i}^T \trA_R \alpha_R$$
		and since $\trA$ is incoherent we have that $\|{\trA_i}^T\trA_R\| \le \mu\sqrt{k/n}$. Moreover the entries in $\alpha_R$ are independent and subgaussian random variables, and so with high probability $|\inner{{\trA_i}^T \trA_R, \alpha_R}| \le \frac{\mu k \log m}{\sqrt{n}}$ and this implies the first part of the claim. 
		
		For the second part, we can bound $\|\beta\| \leq \|\trA\|\|\trA_U\|\|\alpha\|$.  Since $\alpha$ is a $k$-sparse vector with independent and subgaussian entries, with high probability $\|\alpha\| \le O(\sqrt{k})$ in which case it follows that $\|\beta\| \le O(\sqrt{mk/n})$.
	\end{proof}
	
	%{\sc Rong: Here we proved something stronger than we need. This would imply that the algorithm works even when $m$ is a bigger polynomial of $n$ (with some extra works in the claim below). Shall we do that?}
	
	Now we are ready to bound the error terms.
	
	\begin{claim}
		\label{claim:errorterms}
		With high probability each of the error terms $E_1, E_2$ and $E_3$ in (\ref{eq:initform}) has spectral norm bounded at most $O^*(k/m\log m)$.
	\end{claim}
	
	\begin{proof}
		Let $S = [m]\backslash (U\cap V)$, then $E_1 = \trA_S D_1 {\trA_S}^T$ where $D_1$ is a diagonal matrix whose entries are $q_ic_i\beta_i\beta'_i$. We first bound $\|D_1\|$. To this end, we can invoke the first part of Claim~\ref{claim:betabound} to conclude that $|\beta_i\beta'_i| \le \frac{\mu^2 k^2 \log^2 m}{n}$. Also  $q_ic_i = \Theta(k/m)$ and so $$\|D_1\| \le O\left(\frac{\mu^2 k^3\log^2 m}{m n}\right) = O\left(\frac{\mu k^2\log^2 n}{m\sqrt{n}}\right) = O^*(k/m\log m)$$ Finally $\|A_S\| \leq \|A\| \leq O(1)$ (where we have used the assumption that $m = O(n)$), and this yields the desired bound on $\|E_1\|$. 
		
		The second term $E_2$ is a sum of positive semidefinite matrices and we will make crucial use of this fact below:
		$$E_2 = \sum_{i\ne j} q_{i,j}\beta_i\beta_i' \trA_j{\trA_j}^T \preceq O(k^2/m^2) \Big (\sum_{i} \beta_i\beta_i' \Big) \Big(\sum_{j} \trA_j{\trA_j}^T\Big)  \preceq O(k^2/m^2) \|\beta\|\|\beta'\| \trA{\trA}^T.$$
		Here the first inequality follows by using bounds on $q_{i,j}$ and then completing the square.  The second inequality uses Cauchy-Schwartz. We can now  invoke the second part of Claim~\ref{claim:betabound} and conclude that $\|E_2\| \le O(k^2/m^2)\|\beta\|\|\beta'\| \|\trA\|^2 \le O^*(k/m\log m)$ (where we have used the assumption that $m = O(n)$). 
		
		For the third error term $E_3$, by symmetry we need only consider terms of the form $q_{i,j}\beta_i \beta_j' \trA_i{\trA_j}^T$. We can collect these terms and write them as $\trA Q{\trA}^T$, where $Q_{i,j} = 0$ if $i=j$ and $Q_{i,j} = q_{i,j}\beta_i\beta_j'$ if $i\ne j$. First, we bound the Frobenius norm of $Q$:
		$$
		\|Q\|_F = \sqrt{\sum_{i\ne j,i,j\in [m]}q_{i,j}^2\beta_i^2(\beta'_j)^2} \le \sqrt{O(k^4/m^4)(\sum_{i\in [m]}\beta_i^2)(\sum_{j\in[m]}(\beta'_j)^2)} \le O(k^2/m^2)\|\beta\|\|\beta'\|.
		$$
		Finally $\|E_3\| \le 2\|\trA\|^2\|Q\| \le O(m/n \cdot k^2/m^2)\|\beta\|\|\beta'\| \le O^*(k/m\log m)$, and this completes the proof of the claim. 
	\end{proof}
	
	The proof of the main lemma is now complete.
\end{proof}

%\newpage
\vspace{-0.1in}
\section*{Conclusions}
Going beyond $\sqrt{n}$ sparsity requires new ideas as alternating minimization appears to break down.  Mysterious properties of alternating minimization are also left to explore, such as why  a random initialization works. %Also, can we prove convergence bounds without needing fresh samples in each iteration? 
Are these heuristics information theoretically optimal in terms of their sample complexity?  Finally, can we analyse energy minimization in other contexts as well?
% There are many other applications of it that pertain to learning a ``nice" representation of a given set of data, where no provable guarantees are known. 

%Several mysteries about the descent heuristics also defy explanation. Why do they work starting with
%trivial initializations (e.g., using a random set of samples as the initial code matrix)? Can we prove convergence
%without needing fresh samples at each iteration (this is trivially possible in principle if we increase the initial sample size, but that defeats the purpose). Are these heuristics information theoretically optimal
%in sample complexity? 

%Finally, energy minimization heuristics (including alternating minimization) are ubiquitous in variety of disciplines, and it would be nice to analyse some others. In particular, sparse coding has the form
%of \textquotedblleft finding a better basis for the data\textquotedblright and many other problems of this
%form would be a good next target. 

%\section*{Acknowledgements} 
%We are grateful to  Dmitri Chklovskii and Sebastian Seung for useful discussions about neural computation. 

\bibliography{dictionary}

\newpage

\appendix

%\input{convexityflat10} 

%\section{Computing the Representation}
%
%Most popular methods for dictionary learning alternate between finding a representation of the samples (with respect to the current estimate for the dictionary) and using this representation to update the dictionary. Here we prove that indeed any dictionary $B$ that is column-wise close to $A$ and also has bounded spectral norm is good enough for us to recover both the support of each sample and a good approximation to its non-zero coefficients. 

\section{Threshold Decoding}
\label{subsec:support}

Here we show that
a simple thresholding method recovers the support of each sample with high probability (over the randomness of $\trx$). This corresponds to the fact that sparse recovery for incoherent dictionaries is much easier when the non-zero coefficients do not take on a wide range of values; in particular, one does not need iterative pursuit algorithms in this case. As usual let $y = \trA\trx$ be a sample from the model, and let $S$ be the support of $\trx$. 
Moreover suppose that $\trA$ is $\mu$-incoherent and let $\tilA$ be column-wise $\delta$-close to $A$.
Then

\begin{lemma}\label{thm:supp}
If $\frac{\mu}{\sqrt{n}} \le \frac{1}{2k}$ and $k  = \Omega^*(\log m)$ and $\delta = O^*(1/\sqrt{\log m}) $, then with high probability (over the choice of $\trx$) we have 
$ S= \{i~ \mbox{:}~| \langle \tilA_i, y \rangle |  > C/2\}$. Also for all $i\in S$ $\sgn(\inner{\tilA_i,y}) = \sgn(\trx_i)$.
\end{lemma}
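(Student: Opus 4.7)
The plan is to expand $\langle \tilA_i, y\rangle$ along the support $S$ and split the contribution into a ``signal'' term (when $i\in S$) and a ``cross'' term involving the other active columns. Writing $y = \trA \trx = \sum_{j\in S}\trA_j \trx_j$, we have
\[
\langle \tilA_i, y\rangle \;=\; \langle \tilA_i, \trA_i\rangle\,\trx_i \cdot \mathbf{1}[i\in S] \;+\; \sum_{j\in S,\, j\ne i} \langle \tilA_i, \trA_j\rangle\,\trx_j.
\]
Since $\tilA_i$ is $\delta$-close to the unit vector $\trA_i$, we get $\langle \tilA_i, \trA_i\rangle \ge 1-\delta^2/2$, so whenever $i\in S$ the signal term has magnitude at least $(1-\delta^2/2)C$ and sign equal to $\sgn(\trx_i)$.

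To control the cross term, I would decompose $\langle \tilA_i, \trA_j\rangle = \langle \trA_i, \trA_j\rangle + \langle \tilA_i-\trA_i, \trA_j\rangle$. Incoherence gives $|\langle \trA_i, \trA_j\rangle|\le \mu/\sqrt{n}$, and a Gershgorin bound on the restricted Gram matrix $(\trA_S)^T\trA_S$ together with the hypothesis $k\mu/\sqrt n \le 1/2$ yields $\|\trA_S\|=O(1)$. Hence
\[
\sum_{j\in S,\, j\ne i}|\langle \tilA_i, \trA_j\rangle|^2 \;\le\; 2k\mu^2/n \;+\; 2\|\trA_S\|^2\,\|\tilA_i-\trA_i\|^2 \;=\; O\!\left(k\mu^2/n + \delta^2\right).
\]
Conditioning on $S$, the cross term is a sum of independent mean-zero subgaussian random variables (by the model assumption) with the above variance proxy, so standard subgaussian tail bounds, combined with $k = \Omega^*(\log m)$, show that with probability at least $1-m^{-\omega(1)}$ the cross term is bounded in absolute value by $O\!\left(\sqrt{(k\mu^2/n + \delta^2)\log m}\right)$. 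The hypotheses $\mu/\sqrt n\le 1/(2k)$ and $\delta = O^*(1/\sqrt{\log m})$ force both $\mu\sqrt{k\log m/n}$ and $\delta\sqrt{\log m}$ to be as small as we like; in particular, smaller than $C/4$. A union bound over the $m$ coordinates $i$ handles all $i$ simultaneously.

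Assembling the two estimates: for $i\in S$ we have $|\langle \tilA_i, y\rangle| \ge (1-\delta^2/2)C - C/4 > C/2$ with sign $\sgn(\trx_i)$, while for $i\notin S$ we have $|\langle \tilA_i, y\rangle|\le C/4 < C/2$. Thus thresholding at level $C/2$ recovers $S$ exactly and preserves the signs along $S$. The main technical step will be the variance proxy bound for the cross term, which hinges on the Gershgorin estimate for $\trA_S$; everything else reduces to elementary subgaussian concentration combined with the quantitative hypotheses on $\mu$, $k$, and $\delta$.
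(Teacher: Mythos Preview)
Your proposal is correct and follows essentially the same approach as the paper: expand $\langle \tilA_i, y\rangle$ into signal plus cross term, bound the cross-term variance via the decomposition $\langle \tilA_i,\trA_j\rangle = \langle \trA_i,\trA_j\rangle + \langle \tilA_i-\trA_i,\trA_j\rangle$ together with the Gershgorin bound $\|\trA_{S\setminus\{i\}}\|\le 2$, apply subgaussian concentration conditioned on $S$, and union bound over $i$. The paper's argument is identical up to notation (and in fact your $k\mu^2/n$ corrects a missing $1/n$ in the paper's displayed bound).
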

%\Tnote{May define $O^*(1/\sqrt{\log m})$ to be $\kappa$.}

Consider $\langle \tilA_i, y \rangle = \langle \tilA_i, \trA_i \rangle \trx_i + Z_i$ where $Z_i = \sum_{j \neq i} \langle 
\tilA_i,  \trA_j\rangle \trx_j$ is a mean zero random variable which measures the contribution of the cross-terms. Note that $|\langle \tilA_i, \trA_i \rangle| \ge (1-\delta^2/2)$, so $|\langle \tilA_i, \trA_i \rangle \trx_i| $ is either larger than $(1-\delta^2/2)\xlbd$ or equal to zero depending on whether or not $i\in S$. Our main goal is to show that the variable $Z_i$ is much smaller than $C$ with high probability, and this follows by standard concentration bounds. 

\begin{proof}
Intuitively, $Z_i$ has two source of randomness: the support $S$ of $\trx$, and the random values of $\trx$ conditioned on the support. 
We prove a stronger statement that only requires second source of randomness. Namely, even conditioned on the support $S$, with high probability $S= \{i~ \mbox{:}~| \langle \tilA_i, y \rangle |  > \xlbd/2\}$.

%For any index $i\in [m]$, we know $\langle B_i, y \rangle = \langle B_i, A_i \rangle x_i + Z_i$ where $Z_i = \sum_{j \neq i} \langle B_i,  A_j\rangle x_j$. As explained before the decoding is correct at index $i$ whenever $Z_i$ is small. 
We remark that $Z_i$ is a sum of independent subgaussian random variables and the variance of $Z_i$ is equal to $\sum_{j\in S\backslash\{i\}} \langle \tilA_i,  \trA_j\rangle^2$. Next we bound each term in the sum as $$\langle \tilA_i,  \trA_j\rangle^2 \le 2(\langle \trA_i,  \trA_j\rangle^2+\langle \tilA_i-\trA_i,  \trA_j\rangle^2)
\le 2\mu^2 + 2\langle \tilA_i-\trA_i,  \trA_j\rangle^2.$$
On the other hand, we know $\|\trA_{S\backslash\{i\}}\| \le 2$ by Gershgorin's Disk Theorem. Therefore the second term can be bounded as $\sum_{j\in  S\backslash\{i\}}\langle \tilA_i-\trA_i,  \trA_j\rangle^2 = \|{\trA_{S\backslash\{i\}}}^T (\tilA_i-\trA_i)\|^2 \le O^*(1/\log m)$. Using this bound, we know the variance is at most $O^*(1/\log m)$:
$$\sum_{j\in S\backslash\{i\}} \langle \tilA_i,  \trA_j\rangle^2\le 2\mu^2 k + 2\sum_{j\in  S\backslash\{i\}}\langle \tilA_i-A_i,  \trA_j\rangle^2 \le O^*(1/\log m).$$ 
Hence we have that $Z_i$ is a subgaussian random variable with variance at most $O^*(1/\log m)$ and so we conclude that $Z_i\le C/4$ with high probability. Finally we can take a union bound over all indices $i \in [m]$ and this completes the proof of the lemma. 
%Now concentration property of subgaussian variables implies with high probability $Z_i\le C/4$. Hence the the decoding is correct at index $i$ with high probability. Taking the union bound over all indices gives the result.
\end{proof}

In fact, even if $k$ is much larger than $\sqrt{n}$, as long as the spectral norm of $\trA$ is small and the support of $\trx$ is random enough, the support recovery is still correct.

\begin{lemma}\label{thm:supp2}
If $k  = O(n/\log n)$, $\mu/\sqrt{n} < 1/\log^2 n$ and $\delta < O^*(1/\sqrt{\log m}) $, the support of $\trx$ is a uniformly $k$-sparse set, then with high probability (over the choice of $x$) we have 
$ S= \{i~ \mbox{:}~| \langle \tilA_i, y \rangle |  > \xlbd/2\}$. Also for all $i\in S$ $\sgn(\inner{\tilA_i,y}) = \sgn(\trx_i)$.
\end{lemma}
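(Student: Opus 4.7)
The plan is to follow the structure of Lemma~\ref{thm:supp}: write $\langle \tilA_i, y\rangle = \langle \tilA_i, \trA_i\rangle \trx_i + Z_i$ where $Z_i = \sum_{j \in S\setminus\{i\}} \langle \tilA_i, \trA_j\rangle \trx_j$, and show that $|Z_i| \le C/4$ for every $i$ with very high probability. Since the nonzero entries of $\trx$ are pairwise independent and subgaussian conditional on $S$, the variable $Z_i$ is subgaussian conditional on $S$ with variance $\sigma_i^2(S) := \sum_{j\in S\setminus\{i\}} \langle \tilA_i, \trA_j\rangle^2$; once we show that $\sigma_i^2(S) = O^*(1/\log m)$ with very high probability over the uniformly $k$-sparse support $S$, the subgaussian tail bound, the union bound over $i \in [m]$, and the support and sign recovery argument all proceed exactly as in Lemma~\ref{thm:supp}.

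To bound $\sigma_i^2(S)$, I would split $\langle \tilA_i, \trA_j\rangle^2 \le 2\langle \trA_i, \trA_j\rangle^2 + 2\langle \tilA_i - \trA_i, \trA_j\rangle^2$. The ``perturbation'' piece is handled deterministically just as in Lemma~\ref{thm:supp}: $\sum_{j\in S\setminus\{i\}} \langle \tilA_i - \trA_i, \trA_j\rangle^2 \le \|\trA_S\|^2 \|\tilA_i - \trA_i\|^2 \le \|\trA\|^2 \delta^2 = O(m/n) \cdot O^*(1/\log m) = O^*(1/\log m)$, using $m = O(n)$. The main new difficulty is the ``incoherence'' piece $T_i := \sum_{j\in S\setminus\{i\}} \langle \trA_i, \trA_j\rangle^2$: the worst-case bound $k\mu^2/n$ that sufficed in Lemma~\ref{thm:supp} is now vacuous, since $k$ may be as large as $n/\log n$. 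Instead I would exploit the uniform randomness of $S$: in expectation $\E[T_i] \le (k/m)\sum_{j\ne i} \langle \trA_i, \trA_j\rangle^2 \le (k/m)\|\trA\|^2 = O(k/n) = O(1/\log n)$.

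For the concentration of $T_i$ around its mean I would apply a Bernstein inequality for sampling without replacement (equivalently, reduce to the Bernoulli model using negative association of the indicators $\mathbf{1}[j \in S]$): each summand is bounded by $\mu^2/n$, and the variance of the sum is at most $(k/m)(\mu^2/n)\|\trA\|^2 = O(k\mu^2/n^2)$. Under the hypotheses $k \le O(n/\log n)$ and $\mu^2/n < 1/\log^4 n$, both the sub-Gaussian factor $t^2/\mathrm{Var}$ and the sub-exponential factor $t/\max$ in the Bernstein exponent are $\Omega(\log m)$ at deviation $t = \Theta^*(1/\log m)$, giving $T_i = O^*(1/\log m)$ with probability $1 - n^{-\omega(1)}$. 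Combining the two parts gives $\sigma_i^2(S) = O^*(1/\log m)$ with very high probability, completing the proof after a union bound over $i \in [m]$.

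The main obstacle is just this concentration step; a clean alternative that bypasses Bernstein for sampling without replacement would be to apply the matrix Chernoff inequality to the random PSD matrix $\sum_{j\in S\setminus\{i\}} \trA_j \trA_j^T$ (whose expectation is $(k/m)\sum_{j \ne i}\trA_j\trA_j^T \preceq (k/m)\trA\trA^T$) and then quote $\tilA_i^T(\sum_{j\in S\setminus\{i\}}\trA_j\trA_j^T)\tilA_i$ as a bilinear form, but the scalar Bernstein argument outlined above is the most direct route.
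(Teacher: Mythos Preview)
Your proposal is correct and follows essentially the same strategy as the paper: use the randomness of the support $S$ (not just of $\trx_S$) to show that $\sigma_i^2(S)=\sum_{j\in S\setminus\{i\}}\langle \tilA_i,\trA_j\rangle^2$ is $O^*(1/\log m)$ with high probability, then finish exactly as in Lemma~\ref{thm:supp}. The paper's argument is organized slightly differently: rather than splitting $\langle \tilA_i,\trA_j\rangle^2$ into an ``incoherence'' piece and a ``perturbation'' piece, it works with $a_j=\langle \tilA_i,\trA_j\rangle^2$ directly, observing via Claim~\ref{claim:spectral} that each $a_j\le (\mu/\sqrt{n}+\delta)^2=O^*(1/\log m)$ and that $\sum_{j\ne i}a_j\le \|\trA^T\tilA_i\|^2=O(m/n)$; a single Chernoff/Bernstein bound on $\sum_j a_j\mathbf{1}[j\in S]$ then gives Lemma~\ref{lem:varZR}. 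Your split buys you a tighter per-term cap $\mu^2/n$ on the incoherence part (hence a cleaner Bernstein exponent), at the cost of handling the perturbation part separately; the paper's unsplit version is a touch shorter but uses the looser per-term bound $O^*(1/\log m)$. Either way the key inputs are the same---the spectral-norm control $\|\trA\|=O(\sqrt{m/n})$ and the uniform randomness of $S$---so the two arguments are really the same proof up to bookkeeping.
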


\noindent The proof of this lemma is very similar to the previous one. However, in the previous case we only used the randomness after conditioning on the support, but to prove this stronger lemma we need to use the randomness of the support.

First we will need the following elementary claim:

\begin{claim}\label{claim:spectral}
$\|{\trA}^T \tilA_i\| \leq O(\sqrt{m/n})$ and $|{\trA}^T_{j} \tilA_i| \leq O^*(1/\sqrt{\log m})$ for all $j\ne i$.
\end{claim}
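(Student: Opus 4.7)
My plan is to prove both bounds by direct manipulation using the triangle inequality, the incoherence of $\trA$, the spectral norm bound on $\trA$, and the closeness of $\tilA$ to $\trA$. Neither part should require more than a few lines; the assumption $\|\trA\| = O(\sqrt{m/n})$ is doing all the heavy lifting for the first inequality and $\mu$-incoherence is doing it for the second.

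For the first bound, I would write
\[
\|\trA^T \tilA_i\| \;\le\; \|\trA^T\|\cdot\|\tilA_i\| \;=\; \|\trA\|\cdot\|\tilA_i\|.
\]
Since $\tilA$ is $\delta$-close to $\trA$ and $\trA_i$ is a unit vector, $\|\tilA_i\| \le 1 + \delta = O(1)$. Combining with $\|\trA\| \le O(\sqrt{m/n})$ yields $\|\trA^T \tilA_i\| \le O(\sqrt{m/n})$, as desired.

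For the second bound, I would split $\tilA_i = \trA_i + (\tilA_i - \trA_i)$ and apply the triangle inequality:
\[
|\trA_j^T \tilA_i| \;\le\; |\trA_j^T \trA_i| + |\trA_j^T (\tilA_i - \trA_i)| \;\le\; \mu/\sqrt{n} + \|\trA_j\|\cdot\|\tilA_i - \trA_i\| \;\le\; \mu/\sqrt{n} + \delta.
\]
Under the lemma's assumptions $\mu/\sqrt{n} < 1/\log^2 n$ and $\delta < O^*(1/\sqrt{\log m})$, both terms are $O^*(1/\sqrt{\log m})$ (the first is actually much smaller), giving the claimed bound.

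I do not anticipate any real obstacle here — this is essentially a bookkeeping claim that repackages the hypotheses on $\trA$ and $\tilA$ in a form convenient for the subsequent use in proving Lemma~\ref{thm:supp2}. The only minor subtlety is confirming that $\|\tilA_i\|$ is bounded by a constant (which follows immediately from closeness since $\trA_i$ is unit-norm), and keeping track of the fact that the $O^*$ notation absorbs the small constant factors coming from the triangle inequality.
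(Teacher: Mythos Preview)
Your proposal is correct and follows essentially the same approach as the paper: for the first bound the paper also uses the submultiplicativity $\|\trA^T\tilA_i\|\le \|\trA\|\cdot\|\tilA_i\|$ together with $\|\tilA_i\|=O(1)$, and for the second bound it uses the identical triangle-inequality split $|\trA_j^T\tilA_i|\le |\trA_j^T\trA_i|+|\trA_j^T(\tilA_i-\trA_i)|$ bounded by incoherence plus $\delta$. Your write-up is in fact slightly more explicit than the paper's one-line version, but the argument is the same.
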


\begin{proof}
The first part follows immediately from the assumption that $\trA$ and $\tilA$ are column-wise close and that $\|\trA\| = O(\sqrt{m/n})$. The second part follows %since the columns of $A$ are unit vectors. 
because $|{\trA}^T_{j} \tilA_i| \le |{\trA}^T_{j} \trA_i|+|{\trA}^T_{j} (\trA_i-\tilA_i)| \le O^*(1/\sqrt{\log m})$.
\end{proof}

\noindent Let $R = S\backslash\{i\}$. 
Recall that conditioned on choice of $S$, we have $\mbox{var}(Z_i) = \sum_{j \in R} \langle \tilA_i, \trA_j \rangle^2.$ We will bound this term with high probability over the choice of $R$. First we bound its expectation: 

\begin{lemma}\label{lemma:evar}
$\E_R[\sum_{j \in R}\langle \tilA_i, \trA_j \rangle^2] \le O(k/n)$
%\leq \frac{k \delta^2}{n}$
\end{lemma}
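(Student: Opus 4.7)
The plan is to use linearity of expectation together with the spectral bound from Claim~\ref{claim:spectral}. Since $S$ is a uniformly random $k$-subset of $[m]$ (and $R = S \setminus \{i\}$), for each $j \neq i$ we have $\Pr[j \in R] \leq k/m$. So by linearity of expectation,
\[
\E_R\Big[\sum_{j\in R} \langle \tilA_i, \trA_j\rangle^2\Big] = \sum_{j \neq i} \Pr[j \in R]\, \langle \tilA_i, \trA_j\rangle^2 \leq \frac{k}{m}\sum_{j \neq i}\langle \tilA_i, \trA_j\rangle^2 \leq \frac{k}{m}\|{\trA}^T \tilA_i\|^2.
\]

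Next I would invoke the first part of Claim~\ref{claim:spectral}, which gives $\|{\trA}^T \tilA_i\| \leq O(\sqrt{m/n})$, i.e. $\|{\trA}^T \tilA_i\|^2 \leq O(m/n)$. Plugging this in yields
\[
\E_R\Big[\sum_{j \in R} \langle \tilA_i, \trA_j\rangle^2\Big] \leq \frac{k}{m}\cdot O(m/n) = O(k/n),
\]
which is the claimed bound.

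There is no real obstacle here; the lemma is a one-line calculation once one recognizes that summing $\langle \tilA_i, \trA_j\rangle^2$ over all $j$ equals $\|{\trA}^T \tilA_i\|^2$, and that each index $j$ lands in the random support $R$ with probability at most $k/m$. The stronger concentration statement needed for Lemma~\ref{thm:supp2} will presumably come in subsequent lemmas, where one shows that the sum concentrates around its expectation with high probability over the random choice of $S$.
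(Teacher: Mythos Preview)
Your proof is correct and essentially identical to the paper's: both use linearity of expectation, the fact that each $j\neq i$ lies in $R$ with probability $\Theta(k/m)$, and then invoke Claim~\ref{claim:spectral} to bound $\|{\trA}^T\tilA_i\|^2$ by $O(m/n)$. The only cosmetic difference is that the paper writes the exact probability $|R|/(m-1)$ instead of your inequality $\Pr[j\in R]\le k/m$.
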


\begin{proof}
By assumption $R$ is a uniformly random subset of $[m]\setminus\{i\}$ of size $|R|$ (this is either $k$ or $k-1$). Then 
$$\E[ \sum_{j \in R} \langle \tilA_i - \trA_i, \trA_j \rangle^2] 
%\leq \E[ \sum_{j \in S} \langle \tilA_i, \trA_j \rangle^2]  
= \frac{|R|}{m-1} \|{\trA}^T \tilA_i\|^2 = O(k/n),$$ 
where the last step uses Claim~\ref{claim:spectral}.
\end{proof}

%Hence we have that the expected variance satisfies: $$\E_R[\mbox{var}(Z_i) ] \leq 2 \mu^2 k/n + O\Big (\frac{k \delta^2}{n}\Big )$$
%$\mu = \frac{1}{\sqrt{n}}$
% and since $k \leq \sqrt{n}/\mu$, we get $\Exp_R[\mbox{var}(Z_R) ] \leq \frac{4}{k}$. Our goal is to prove that $Z_i$ is close to zero with high probability, since in this case $\langle B_i, y \rangle$ will be close to  $\langle B_i, \trA_i \rangle x_i $ and the largest coordinates of $B^T y$ will indeed be the set $S$. 

However bounding the expected variance of $Z_i$ is not enough; we need a bound that holds with high probability over the choice of the support. Intuitively, we should expect to get bounds on the variance that hold with high probability because each term in the sum above (that bounds $\mbox{var}(Z_i) $) is itself at most $O^*(1/\log m)$, which easily implies  Theorem~\ref{thm:supp}.

\begin{lemma}\label{lem:varZR}
$\sum_{j \in R}\langle \tilA_i, \trA_j \rangle^2  \leq O^*(1/\log m)$ with high probability over the choice of $R$.
%With high probability over the choice of $S$, the variance $\mbox{var}(Z)  \leq \delta^2 \log n$. 
\end{lemma}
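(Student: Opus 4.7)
The plan is to split the sum via $\langle \tilA_i,\trA_j\rangle = \langle \trA_i,\trA_j\rangle + \langle \tilA_i-\trA_i,\trA_j\rangle$ and $(a+b)^2 \le 2a^2 + 2b^2$:
$$
\sum_{j\in R}\langle \tilA_i,\trA_j\rangle^2 \;\le\; 2\sum_{j\in R}\langle \trA_i,\trA_j\rangle^2 \;+\; 2\sum_{j\in R}\langle \tilA_i-\trA_i,\trA_j\rangle^2,
$$
and then to treat the two pieces by different means. The perturbation part admits a purely deterministic spectral-norm bound, whereas the incoherent part requires a Bernstein-type concentration over the random support.

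For the perturbation part I would enlarge the summation range to all of $[m]$ and collapse to a spectral norm,
$$
\sum_{j\in R}\langle \tilA_i-\trA_i,\trA_j\rangle^2 \;\le\; \|\trA^\top(\tilA_i-\trA_i)\|^2 \;\le\; \|\trA\|^2\,\delta^2 \;=\; O(m/n)\cdot\delta^2,
$$
which under $m = O(n)$ and $\delta < O^*(1/\sqrt{\log m})$ is $O^*(1/\log m)$ for \emph{every} choice of $R$.

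For the incoherent part, the worst-case bound $k\mu^2/n$ is of order $n/\log^5 n$ and the spectral bound $\|\trA\|^2$ only yields $O(1)$, so I would exploit the randomness of the support. Over a uniformly $k$-sparse $S$ with $R = S\setminus\{i\}$,
$$
\mathbb{E}\!\sum_{j\in R}\langle \trA_i,\trA_j\rangle^2 \;=\; \tfrac{k}{m}\bigl(\|\trA^\top \trA_i\|^2 - 1\bigr) \;\le\; O(k/n) \;=\; O(1/\log n).
$$
Since each summand is bounded by $b = \mu^2/n \le 1/\log^4 n$, the Bernstein variance proxy satisfies $V \le b\cdot\mathbb{E}[\text{sum}] \le O(1/\log^5 n)$. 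Reducing sampling without replacement to independent Bernoulli$(k/m)$ indicators (which only weakens concentration, by Hoeffding), Bernstein's inequality applied at deviation $t = \Theta(1/\log m)$ gives both $t^2/V$ and $t/b$ of order $\log^3 n$, hence failure probability $\exp(-\Omega(\log^3 n))$—far stronger than high probability.

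Adding the two bounds yields the lemma. The main point of care is quantitative bookkeeping: the $O^*$ slack in $\mu/\sqrt{n}$, $\delta$, and $m/n$ must be set so that both pieces combine into a single $O^*(1/\log m)$ with a sufficiently small hidden constant. I do not foresee a deeper obstacle, since the two bounds above have completely separate sources. Observe that the ``with high probability'' qualifier is actually needed only for the incoherent piece; the perturbation piece is controlled deterministically, which is what lets the argument go through even though individual summands $\langle \tilA_i,\trA_j\rangle^2$ can be as large as the target itself.
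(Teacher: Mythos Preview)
Your argument is correct, and it differs from the paper's in a useful way. The paper works directly with $a_j = \langle \tilA_i, \trA_j\rangle^2$: it invokes the pointwise bound $a_j = O^*(1/\log m)$ (from Claim~\ref{claim:spectral}) together with $\sum_{j\ne i} a_j = \|\trA^T\tilA_i\|^2 = O(m/n)$, and then applies a Chernoff bound to the random sum $\sum_j a_j X_j$ in one shot, without splitting. Your decomposition instead isolates the incoherent part $\langle \trA_i,\trA_j\rangle^2$, where each summand is bounded by $\mu^2/n \le 1/\log^4 n$ rather than merely $O^*(1/\log m)$, and handles the perturbation part $\langle \tilA_i-\trA_i,\trA_j\rangle^2$ deterministically via the spectral-norm bound. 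The payoff is a quantitatively much stronger tail: your Bernstein calculation gives failure probability $\exp(-\Omega(\log^3 n))$, whereas the paper's direct route, with individual bound $O^*(1/\log m)$ and variance proxy $O^*(1/\log^2 m)$, yields an exponent that is only $\Theta(1)$ and therefore leans entirely on the tunability of the $O^*$ constants to survive the union bound over $i$. Your observation that the perturbation piece is controlled for \emph{every} $R$, so that randomness is needed only where each summand is genuinely tiny, is exactly what makes the splitting worthwhile.
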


\begin{proof}
Let  $a_j = \langle \tilA_i, \trA_j \rangle^2$, then $a_j = O^*(1/\log m)$ and moreover $\sum_{j\neq i} a_j = O(m/n)$ using the same idea as in the proof of Lemma~\ref{lemma:evar}. Hence we can apply Chernoff bounds %Corollary~\ref{cor:chernoffforsupport} 
and conclude that with high probability $\sum_{j \neq i} a_j X_j = \sum_{j \in R}\langle \tilA_i, \trA_j \rangle^2  \leq O^*(1/\log m)$ where $X_j$ is an indicator variable for whether or not $j \in R$. 
%\Rnote{The Chernoff bound is nontrivial and is in the appendix of previous version.}
%$\bar{Z}_R := \sum_{j\neq i} \langle B_i - \trA_i, \trA_j \rangle^2\indicator{i\in R}$ be a sum of independent variables. Since each $ B_i - \trA_i, \trA_j \rangle^2$ is bounded by $O(\delta^2)$ and the $\Exp[\bar{Z}_R] < O(k/n \cdot \delta^2)$. By applying the Corollary~\ref{cor:chernoffforsupport} of Weighted Chernoff Bound, we have that with high probability $\bar{Z}_R \le O(\delta^2)$. Together with the fact that $\sum_{j\in R}\langle B_i, \trA_j \rangle^2 \le \mu^2k/n$, we get the bound for $\Exp[Z_R]$. 
\end{proof}

%\begin{lemma}\label{lem:Zi}
%With high probability over the choice of $x$, $Z_i =  \langle B_i, \sum_{j \neq i} x_j \trA_j\rangle$ has absolute value bounded by $O(\delta\sqrt{\log n})$. 
%\end{lemma}

%\begin{proof}
%By Lemma~\ref{lem:varZR}, with high probability over the choice of $R$, $var(Z_R)\le O(\delta^2)$. Conditioned on $R$, $Z_i$ is a sum of independent subgussian variables with variance $O(\delta^2)$. Thus with high probability over the choice of $x$, $Z_i \le O(\delta\sqrt{\log n})$. 
%\end{proof}

\begin{proof}[Proof of Lemma~\ref{thm:supp2}]
Using Lemma~\ref{lem:varZR} 
we have that with high probability over the choice of $R$, $var(Z_i) \leq O^*{1/\log m} $.  In particular, conditioned on the support $R$, $Z_i$ is the sum of independent subgaussian variables and so with high probability (using Theorem~\ref{thm:subgaussian}) $$|Z_i| \leq O(\sqrt{var{Z_i}\log n}) =O^*(1).$$ %Then using the assumptions that $\delta = O^*{1/\sqrt{\log m}}$ and that $k = O^*{\sqrt{n}/\mu \log n}$, we conclude that $$|Z_i| = |\langle B_i, y \rangle -  \langle B_i, \trA_i \rangle x_i|  = O^*{1}$$ 
Also as we saw before that $|\langle \tilA_i, \trA_i \rangle x_i |> (1-\delta^2/2)C$ if $i \in S$  and is zero otherwise. So we conclude that $|\langle \tilA_i, y\rangle| > C/2$ if and only if $i\in S$
which completes the proof. 
\end{proof}

\paragraph{Remark:} In the above lemma we only needs the support of $x$ satisfy concentration inequality in Lemma~\ref{lem:varZR}. This does not really require $S$ to be uniformly random.
%{\sc Rong: Actually, now that I think about this proof, it actually shows (with more complicated proof) that as long as the largest correlation ($\langle \trA_i,B_i\rangle$) is $\log m$ factor larger than the second largest correlation, the decoding is correct with high probability (of course we need to choose a different threshold). This can be true even when $B_i$ is very far from $A_i$.}

%\input{togetherincludeflat10}

\section{More Alternating Minimization}

Here we prove Theorem~\ref{thm:OFupdate} and Theorem~\ref{thm:nonoiserule}. Note that in Algorithm~\ref{eqn:ofrule} and Algorithm~\ref{eqn:nonoiserule}, we use the expectation of the gradient over the samples instead of the empirical average. We can show that these algorithms would maintain the same guarantees if we used $p = \widetilde{\Omega}(mk)$ to estimate $g^s$ as we did in Algorithm~\ref{eqn:simplestupdate}. However these proofs would require repeating very similar calculations to those that we performed in Appendix~\ref{sec:sample_complexity}, and so we only claim that these algorithms maintain their guarantees if they use a polynomial number of samples to approximate the expectation. 

\subsection{Proof of Theorem~\ref{thm:OFupdate}}\label{sec:otherrulesapp}

We give a variant of the Olshausen-Field update rule in Algorithm~\ref{eqn:ofrule}. Our first goal is to prove that  each column of $g^s$ is $(\alpha,\beta,\epsilon)$-correlated with $\trA_i$. The main step is to prove an analogue of Lemma~\ref{lem:OF-simplified-update} that holds for the new update rule. %For simplicity, we also 

\begin{lemma}\label{lem:OFupdate}
Suppose that $\tilA^s$ is $(2\delta,5)$-\near to $\trA$ Then each column of $g^s$ in Algorithm~\ref{eqn:ofrule} takes the form
%$$\tilA_i^{s+1} = \tilA_i^s - \eta g_i^s,$$
%where 
$$g_i^s = q_i\left((\lambda_i^s)^2\tilA_i^s - \lambda_iA_i^s +\epsilon_i^s\right)$$
where $\lambda_i = \langle \tilA_i, \trA_i \rangle$.  Moreover the norm of $\epsilon_i^s$ can be bounded as $\|\epsilon_i^s\|\le O(k^2/mn)$. 
\end{lemma}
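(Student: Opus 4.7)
The plan is to follow the proof of Lemma~\ref{lem:OF-simplified-update} closely, adapting the calculation to the Olshausen--Field-type update, which uses $\tilx$ rather than $\sgn(\tilx)$. Writing $B = \tilA^s$ for brevity, Lemma~\ref{lem:sign-correct} applies because $B$ is $2\delta$-close to $\trA$, so with very high probability the decoded $\tilx = \mathrm{threshold}_{C/2}(B^T y)$ has $\mathrm{supp}(\tilx) = S$, admits the closed form $\tilx_S = B_S^T y$, and satisfies $B\tilx = B_S B_S^T y$ as well as $\tilx_i = (B_i^T \trA\trx)\mathbf{1}_{i\in S}$. The contribution from the complementary event is absorbed into $\gamma$.

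First, I would express the gradient as a bilinear form in $\trx$ and apply subconditioning:
$$
g_i^s \;=\; \E\bigl[(I - B_S B_S^T)\trA \trx \trx^T \trA^T B_i \cdot \mathbf{1}_{i\in S}\bigr] \pm \gamma.
$$
Conditional on $S$, pairwise independence and $\E[\trx_j^2 \mid j\in S]=1$ imply that $\E[\trx \trx^T \mid S]$ is the diagonal projection onto $S$, so $\E[\,\cdot\mid S] = (I - B_S B_S^T)\trA_S \trA_S^T B_i$. Hence
$$
g_i^s \;=\; \E_S\bigl[(I - B_S B_S^T)\trA_S \trA_S^T B_i \cdot \mathbf{1}_{i\in S}\bigr] \pm \gamma.
$$

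Next I would expand $\trA_S\trA_S^T = \sum_j \trA_j\trA_j^T \mathbf{1}_{j\in S}$ and $B_SB_S^T = \sum_k B_kB_k^T\mathbf{1}_{k\in S}$ and push expectations inside. This yields a sum of terms indexed by tuples $(j)$ or $(j,k)$, each weighted by an intersection probability over $\{i,j,k\}$, namely $q_i$, $q_{i,j}$, or $q_{i,j,k}$ according to the distinctness pattern. The ``diagonal'' contribution $j=k=i$ produces $q_i \lambda_i^s \trA_i - q_i (\lambda_i^s)^2 B_i$, which (after the same sign convention used in Lemma~\ref{lem:OF-simplified-update}) matches the leading form $q_i((\lambda_i^s)^2 \tilA_i^s - \lambda_i \trA_i)$ asserted in the lemma. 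Everything else is packaged into $q_i \epsilon_i^s$.

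The main technical obstacle is bounding $\|\epsilon_i^s\|$. The error decomposes into (a) an ``$j\neq i$'' piece $\trA_{-i}\diag(q_{i,j})\trA_{-i}^T B_i$ from the first term; (b) several bilinears weighted by $q_{i,j}$, such as $\sum_{j\neq i}q_{i,j}(\trA_j^T B_i)(B_j^T\trA_j)B_j$ and $\lambda_i^s\sum_{k\neq i}q_{i,k}(B_k^T\trA_i)B_k$; and (c) a triple-index term $\sum_{j,k\neq i,\;j\neq k} q_{i,j,k}(\trA_j^T B_i)(B_k^T\trA_j)B_k$, which is precisely why we now require $q_{i,j,k}=O(k^3/m^3)$. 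I would bound each using three ingredients: $(i)$ the spectral estimates $\|\trA\|,\|B\|=O(\sqrt{m/n})$ implied by $(2\delta,5)$-nearness; $(ii)$ the moment bounds $q_i=\Theta(k/m)$, $q_{i,j}=\Theta(k^2/m^2)$, and $q_{i,j,k}=O(k^3/m^3)$; and $(iii)$ incoherence combined with column-closeness to control off-diagonal inner products $|\trA_j^T B_i|,|B_k^T\trA_j|\le O(\mu/\sqrt n + \delta)$ whenever the relevant indices are distinct. The triple-index term is the most delicate: rewriting it as $\trA Q\trA^T B_i$ with $Q_{jk}=q_{i,j,k}(\trA_j^T B_i)$ (zeroed whenever any pair among $\{i,j,k\}$ coincides) and bounding $\|Q\|_F$ by Cauchy--Schwarz together with the above estimates should deliver the claimed bound $\|\epsilon_i^s\|\le O(k^2/mn)$.
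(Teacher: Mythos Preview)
Your proposal is correct and follows essentially the same route as the paper: both reduce to $g_i^s=\E_S[(I-B_SB_S^T)\trA_S\trA_S^TB_i\mathbf{1}_{i\in S}]\pm\gamma$, split off the leading $j=k=i$ contribution, and bound the remaining pieces using the moment bounds on $q_{i,j},q_{i,j,k}$ together with the spectral estimates $\|\trA\|,\|B\|=O(\sqrt{m/n})$. The paper organizes the expansion via $R=S\setminus\{i\}$ into four blocks and handles the triple-index piece by writing it as $B_{-i}v$ and bounding $|v_{j_1}|$ entrywise with AM--GM; your direct double-sum expansion is equivalent, though note your matrix form for (c) is off (the term is $B_{-i}v$ with $v_k=\sum_j q_{i,j,k}(\trA_j^TB_i)(B_k^T\trA_j)$, not $\trA Q\trA^TB_i$), and the crucial input for all the error terms is the $\ell_2$ bound $\|\trA_{-i}^TB_i\|\le O(\sqrt{m/n})$ from (i) rather than the pointwise bound in (iii).
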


%\Tnote{This is only about infinite number of samples}

\noindent We remark that unlike the statement of Lemma~\ref{lem:checking_condtion1}, here we will not explicitly state the functional form of $\epsilon_i^s$ because we will not need it. 

\begin{proof} 
The proof parallels that of Lemma~\ref{lem:OF-simplified-update}, although we will use slightly different conditioning arguments as needed. Again, we define $\mathcal{F}_{\trx}$ as the event that $\sgn(\trx) = \sgn(\tilx)$, and let $\mathbf{1}_{\calF_{\trx}}$ be the indicator function of this event. We can invoke Lemma~\ref{thm:supp} and conclude that this event happens with high probability. Moreover let $\mathcal{F}_i$ be the event that $i$ is in the set $S = \mbox{supp}(\trx)$ and let $\mathbf{1}_{\mathcal{F}_i}$ be its indicator function. 

When event $\mathcal{F}_{\trx}$ happens, the decoding satisfies $\tilx_S = \tilA_S^T \trA_S \trx_S$ and all the other entries are zero. Throughout this proof $s$ is fixed and so we will omit the superscript $s$ for notational convenience. We can now rewrite $g_i$ as 
\begin{eqnarray*}
g_i &=&  \E[(y - \tilA \tilx) \tilx^T] =  \E[(y - \tilA \tilx) \tilx_i^T  \mathbf{1}_{\calF_{\trx}}  ] +  \E[(y - \tilA \tilx) \tilx_i^T (1- \mathbf{1}_{\calF_{\trx}}) ]   \\
&=& \E\left[(I-\tilA_S^T\tilA_S)\trA_S\trx_S{\trx_S}^T{\trA_S}^T\tilA_i \mathbf{1}_{\calF_{\trx}} \mathbf{1}_{\mathcal{F}_i} \right]  \pm \gamma \\
&=& \E\left[(I-\tilA_S^T\tilA_S)\trA_S\trx_S{\trx_S}^T{\trA_S}^T\tilA_i \mathbf{1}_{\mathcal{F}_i} \right] \\
%&& \qquad - \E\left[(I-\tilA_S^T\tilA_S)\trA_S\trx_S{\trx_S}^T{\trA_S}^T\tilA_i (1 - \mathbf{1}_{\calF_{\trx}} ) \mathbf{1}_{\mathcal{F}_i} \right]  \pm \gamma \\
&=& \E\left[(I-\tilA_S^T\tilA_S)\trA_S\trx_S{\trx_S}^T{\trA_S}^T\tilA_i \mathbf{1}_{\mathcal{F}_i} \right]  \pm \gamma
\end{eqnarray*}

\begin{algorithm}\caption{Olshausen-Field Update Rule}\label{eqn:ofrule}
\vspace{0.1in}
\textbf{Initialize} $\tilA^0$ that is $(\delta_0, 2)$-near to $\trA$

\textbf{Repeat}  for $s = 0, 1, ..., T$
\vspace{-0.1in}
\begin{align*}
\textbf{\textbf{Decode: }} & \tilx = \textrm{threshold}_{C/2}((\tilA^s)^Ty)  \textrm{ for each sample $y$}\nonumber\\
%\textrm{ for $i = 1, 2, ... , p$}\nonumber\\
\textbf{\textbf{Update: }} & \tilA^{s+1} = \tilA^s - \eta g^{s} \mbox{ where } g^{s} = \Exp[(y - \tilA^s \tilx) \tilx^T] \quad \nonumber\\
%= \frac{1}{p}\sum_{i=1}^p (y^{(i)} - \tilA^s \tilx^{(i)}) (\tilx^{(i)})^T \quad \nonumber\\
\textbf{\textbf{Project: }}& \tilA^{s+1} = \mbox{Proj}_{\mathcal{B}} \tilA^{s+1} \mbox{(where $\mathcal{B}$ is defined in Definition~\ref{def:whiten})}\nonumber
\end{align*}
\end{algorithm}

Once again our strategy is to rewrite the expectation above using  subconditioning where we first choose the support $S$ of $\trx$, and then we choose the nonzero values $\trx_S$. 
\begin{eqnarray*}
g_i &=& \E_S \Big[ \E_{\trx_S}[ (I-\tilA_S^T\tilA_S)\trA_S\trx_S{\trx_S}^T{\trA_S}^T\tilA_i \mathbf{1}_{\mathcal{F}_i} | S ] \Big] \pm \gamma \\
&=& \E\left[(I-\tilA_S\tilA_S^T)\trA_S{\trA_S}^T\tilA_i \mathbf{1}_{\mathcal{F}_i} \right] \pm \gamma\\
&=& \E\left[(I-\tilA_i\tilA_i^T - \tilA_R{\tilA_R}^T)(\trA_i{\trA_i}^T + \trA_R{\trA_R}^T)\tilA_i  \mathbf{1}_{\mathcal{F}_i} \right] \pm \gamma\\
&=& \E\left[(I-\tilA_i\tilA_i^T)(\trA_i{\trA_i}^T)\tilA_i  \mathbf{1}_{\mathcal{F}_i} \right]
+\E\left[(I-\tilA_i\tilA_i^T)\trA_R{\trA_R}^T\tilA_i  \mathbf{1}_{\mathcal{F}_i} \right]\\
&& \qquad - \E\left[\tilA_R{\tilA_R}^T \trA_i{\trA_i}^T\tilA_i \mathbf{1}_{\mathcal{F}_i} \right]
- \E\left[\tilA_R\tilA_R^T \trA_R {\trA_R}^T\tilA_i  \mathbf{1}_{\mathcal{F}_i} \right] \pm \gamma
\end{eqnarray*}

Next we will compute the expectation of each of the terms on the right hand side. This part of the proof will be somewhat more involved than the proof of Lemma~\ref{lem:OF-simplified-update}, because the terms above are quadratic instead of linear. The leading term is equal to $q_i (\lambda_i \trA_i - \lambda_i^2 \tilA_i)$ and the remaining terms contribute to $\epsilon_i$. The second term is equal to $(I-\tilA_i\tilA_i^T) \trA_{-i}\mbox{diag}(q_{i,j}){\trA_{-i}}^T \tilA_i$ which has spectral norm bounded by $O(k^2/mn)$. The third term is equal to $\lambda_i \tilA_{-i}\mbox{diag}(q_{i,j}){\trA_{-i}}^T \trA_i$ which again has spectral norm bounded by $O(k^2/mn)$.
The final term is equal to 
\begin{align*}
\E\left[\tilA_R\tilA_R^T \trA_R {\trA_R}^T\tilA_i \mathbf{1}_{\mathcal{F}_i}\right] & = \sum_{j_1,j_2\ne i} \E[(\tilA_{j_1}\tilA_{j_1}^T)(\trA_{j_2}{\trA_{j_2}}^T)\tilA_i \mathbf{1}_{\mathcal{F}_i} \mathbf{1}_{\mathcal{F}_{j_1}} \mathbf{1}_{\mathcal{F}_{j_2}}]\\
& = \sum_{j_1\ne i} \left(\sum_{j_2\ne i} q_{i,j_1,j_2} \langle \trA_{j_2},\tilA_i \rangle\langle \trA_{j_2},\tilA_{j_1} \rangle\right) \tilA_{j_1} \\
& = \tilA_{-i} v.
\end{align*}
where $v$ is a vector whose $j_2$-th component is equal to $\sum_{j_2\ne i} q_{i,j_1,j_2} \langle \trA_{j_2},\tilA_i \rangle\langle \trA_{j_2},\tilA_{j_1} \rangle$. The absolute value of $v_{j_2}$ is bounded by 
\begin{align*}
|v_{j_2}| & \le O(k^2/m^2) |\langle \trA_{j_2},\tilA_i \rangle| + O(k^3/m^3) (\sum_{j_2\ne j_1,i} (\langle \trA_{j_2},\tilA_i \rangle^2 + \langle \trA_{j_2},\tilA_{j_1} \rangle^2))\\
& \le O(k^2/m^2) |\langle \trA_{j_2},\tilA_i \rangle| + O(k^3/m^3)\|\trA\|^2 = O(k^2/m^2)(|\langle \trA_{j_2},\tilA_i \rangle|+k/n).
\end{align*}
The first inequality uses bounds for $q$'s and the AM-GM inequality, the second inequality uses the spectral norm of $\trA$. We can now bound the norm of $v$ as follows $$\|v\| \le O(k^2/m^2 \cdot \sqrt{m/n})$$ and this implies that the last term satisfies $\|\tilA_{-i}\|\|v\| \le O(k^2/mn)$. Combining all these bounds completes the proof of the lemma.
\end{proof}

We are now ready to prove that the update rule satisfies Definition~\ref{eqn:con}. This again uses Lemma~\ref{lem:closetocorrelated}, % The proof of this lemma is just a calculation and is in fact an identical calculation to the one given in the proof of Lemma~\ref{lem:checking_condtion1}, 
except that we invoke Lemma~\ref{lem:OFupdate} instead. Combining these lemmas we obtain:

\begin{lemma}\label{lem:OFtiger}
Suppose that $\tilA^s$ is $(2\delta,5)$-\near to $\trA$. Then for each $i$, $g^s_i$ as defined in Algorithm~\ref{eqn:ofrule} is $(\alpha,\beta,\epsilon)$-correlated with $\trA_i$, where $\alpha = \Omega(k/m)$, $\beta \ge \Omega(m/k)$ and $\epsilon = O(k^3/mn^2)$.
\end{lemma}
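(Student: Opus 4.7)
The plan is to combine Lemma~\ref{lem:OFupdate} with a direct calculation (in the spirit of, but slightly sharper than, a literal invocation of Lemma~\ref{lem:closetocorrelated}) to verify Definition~\ref{eqn:con}. The main technical point is that the component of $g_i^s$ parallel to $\trA_i$ has norm only linear in $\|\tilA_i^s - \trA_i\|$, but it interacts favorably with the inner product against $h := \tilA_i^s - \trA_i$, contributing a quantity of order $\|h\|^2$ rather than $\|h\|$, and moreover nonnegative.

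First, Lemma~\ref{lem:OFupdate} gives $g_i^s = q_i\bigl((\lambda_i^s)^2 \tilA_i^s - \lambda_i^s \trA_i + \epsilon_i^s\bigr)$ with $\|\epsilon_i^s\| \le O(k^2/mn)$. Set $h = \tilA_i^s - \trA_i$ and $\rho = \langle \trA_i, h\rangle$, so that $\tilA_i^s = \trA_i + h$ and $\lambda_i^s = 1+\rho$; the $(2\delta,5)$-nearness assumption supplies $\|h\|\le 2\delta$, hence $|\rho| \le \|h\| \le 2\delta$ by Cauchy-Schwarz. Substituting and regrouping yields the key identity
$$ g_i^s \;=\; q_i(1+\rho)^2\, h \;+\; q_i\rho(1+\rho)\,\trA_i \;+\; q_i \epsilon_i^s. $$

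Next, I form the inner product directly. Since $\langle \trA_i, h\rangle = \rho$,
$$ \langle g_i^s, h\rangle \;=\; q_i(1+\rho)^2\|h\|^2 + q_i\rho^2(1+\rho) + q_i\langle \epsilon_i^s, h\rangle \;\ge\; q_i\bigl(1 - O(\delta)\bigr)\|h\|^2 - q_i\|\epsilon_i^s\|\|h\|, $$
where $\rho^2(1+\rho) \ge 0$ because $|\rho| < 1$. An AM-GM split $q_i \|\epsilon_i^s\|\|h\| \le \tfrac{\alpha}{4}\|h\|^2 + q_i^2\|\epsilon_i^s\|^2/\alpha$ with $\alpha = c_1 k/m$ turns the last term into $O(k^5/m^3 n^2) = o(k^3/mn^2)$. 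A parallel application of the triangle inequality gives $\|g_i^s\| \le q_i(1+\rho)^2 \|h\| + q_i|\rho|(1+|\rho|) + q_i\|\epsilon_i^s\| \le O(q_i)\|h\| + O(q_i k^2/mn)$, so $\|g_i^s\|^2 \le O(q_i^2)\|h\|^2 + O(q_i^2 k^4/m^2n^2)$. Choosing $\beta = c_2 m/k$ with a small enough constant $c_2$ makes $\beta\|g_i^s\|^2 \le \tfrac14 q_i\|h\|^2 + O(k^3/mn^2)$. Combining all pieces yields $\langle g_i^s, h\rangle \ge \alpha\|h\|^2 + \beta\|g_i^s\|^2 - O(k^3/mn^2)$ with $\alpha = \Omega(k/m)$ and $\beta = \Omega(m/k)$, which is exactly Definition~\ref{eqn:con} with bias $\epsilon = O(k^3/mn^2)$.

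The main obstacle is the residual $q_i\rho(1+\rho)\trA_i$: its norm is only $O(\|h\|)$, so a naive application of Lemma~\ref{lem:closetocorrelated} (which only tolerates residuals of size $\alpha\|h\|$ with $\alpha = O(q_i)$) fails to close the argument. The saving observation is that this residual is exactly aligned with $\trA_i$, and the projection of $h$ along $\trA_i$ is itself $\rho$; consequently its contribution to $\langle g_i^s, h\rangle$ is $q_i \rho^2 (1+\rho)$, which is both $O(\|h\|^2)$ in magnitude and nonnegative. Once this cancellation is made explicit via the parametrization by $\rho$, the rest of the argument follows by essentially the same algebraic manipulations (norm bounds on $g_i^s$ plus AM-GM) used in the proof of Corollary~\ref{cor:one-step}.
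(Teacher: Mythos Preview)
Your proof is correct. The paper's own proof is a single sentence: it says to combine Lemma~\ref{lem:OFupdate} with Lemma~\ref{lem:closetocorrelated}, exactly as was done for the simpler rule in Corollary~\ref{cor:one-step}. You use the same input (Lemma~\ref{lem:OFupdate}) but replace the black-box appeal to Lemma~\ref{lem:closetocorrelated} with a direct verification of Definition~\ref{eqn:con} via the parametrization $\lambda_i^s = 1+\rho$. As you correctly observe in your final paragraph, the residual $q_i\rho(1+\rho)\trA_i$ has norm of order $q_i\|h\|$ rather than $\alpha\|h\|$ with $\alpha = q_i/4$, so the literal hypothesis of Lemma~\ref{lem:closetocorrelated} is not satisfied; the paper is loose about constants here (the same issue already appears in the proof of Corollary~\ref{cor:one-step}). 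Your direct computation sidesteps this by noting that this residual contributes $q_i\rho^2(1+\rho)\ge 0$ to $\langle g_i^s, h\rangle$, which is precisely what is needed to close the argument rigorously. So your route is the same in spirit but more careful in execution.
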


%\Tnote{infinite number of samples}

Notice that in the third step in Algorithm~\ref{eqn:ofrule} we project back (with respect to Frobenius norm of the matrices) into a convex set $\mathcal{B}$ which we define below. Viewed as minimizing a convex function with convex constraints, this projection can be computed by various convex optimization algorithm, e.g. subgradient method (see Theorem 3.2.3 of Section 3.2.4 of Nesterov's seminal Book~\cite{Nesterov} for more detail).  Without this modification, it seems that the update rule given in Algorithm~\ref{eqn:ofrule} does not necessarily preserve nearness. 

\begin{definition}\label{def:whiten}
Let $\mathcal{B} = \{A | A \mbox{ is } \delta_0 \mbox{ close to } A^0 \mbox{ and } \|A\| \leq 2 \|\trA \|\}$
\end{definition}

\noindent The crucial properties of this set are summarized in the following claim:

\begin{claim}
$(a)$ $\trA \in \mathcal{B}$ and $(b)$ for each $A \in \mathcal{B}$, $A$ is $(2\delta_0, 5)$-near to $\trA$
\end{claim}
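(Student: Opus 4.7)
The plan is to verify each of parts (a) and (b) by simply unpacking the definitions, using the hypothesis that $A^0$ is $(\delta_0,2)$-near to $\trA$ together with the triangle inequality. Before starting, I would note the bookkeeping convention used throughout the paper: after the WLOG identification of columns between $A^0$ and $\trA$ (identity permutation, all signs $+1$), the set $\mathcal{B}$ uses this same fixed identification, so closeness chains cleanly and $\mathcal{B}$ is genuinely convex.

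For part (a), I would unpack $(\delta_0,2)$-nearness of $A^0$ to $\trA$ into the two conditions $\|A^0_i - \trA_i\|\le\delta_0$ for every $i$ and $\|A^0 - \trA\|\le 2\|\trA\|$. Column-wise closeness is symmetric in its two arguments, so the first condition immediately gives that $\trA$ is $\delta_0$-close to $A^0$. The second defining condition of $\mathcal{B}$, namely $\|\trA\|\le 2\|\trA\|$, is trivial. Hence $\trA\in\mathcal{B}$.

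For part (b), take any $A\in\mathcal{B}$. Column-wise closeness to $\trA$ follows from the triangle inequality:
\[
\|A_i - \trA_i\|\;\le\;\|A_i - A^0_i\| + \|A^0_i - \trA_i\|\;\le\;\delta_0 + \delta_0 \;=\; 2\delta_0.
\]
For the spectral bound, the direct route is
\[
\|A - \trA\| \;\le\; \|A\| + \|\trA\| \;\le\; 2\|\trA\| + \|\trA\| \;=\; 3\|\trA\| \;\le\; 5\|\trA\|,
\]
which comfortably satisfies the constant $5$ demanded by the definition of $(2\delta_0,5)$-nearness. Combining these two bounds yields the claim.

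There is no real obstacle here; the only subtlety worth flagging is that one must use a consistent permutation and sign identification across $A^0$, $\trA$, and elements of $\mathcal{B}$ so that the column-wise triangle inequality applies term by term. Once this is fixed (as it is throughout the paper), the claim is an immediate consequence of the definitions.
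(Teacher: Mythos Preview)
Your proof is correct and follows essentially the same approach as the paper: unpack the definitions and apply the triangle inequality. The only cosmetic difference is in the spectral-norm step of part (b): you bound $\|A - \trA\| \le \|A\| + \|\trA\| \le 3\|\trA\|$ directly from the constraint $\|A\|\le 2\|\trA\|$, whereas the paper routes through $A^0$ via $\|A - \trA\| \le \|A - A^0\| + \|A^0 - \trA\| \le 4\|\trA\|$; both comfortably sit under the required constant $5$.
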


\begin{proof}
The first part of the claim follows because by assumption $\trA$ is $\delta_0$-close to $A^0$ and $\|\trA - A^0\| \leq 2 \|\trA\|$. Also the second part follows because $\|A - \trA\| \leq \|A - A^0\| + \|A^0 - \trA\| \leq 4\|\trA\|$. This completes the proof of the claim.
\end{proof}

By the convexity of $\mathcal{B}$ and the fact that $\trA \in \mathcal{B}$, we have that projection doesn't increase the error in Frobenius norm.

\begin{claim}For any matrix $\tilA$, $\|\mbox{Proj}_{\mathcal{B}} \tilA - \trA \|_F \le \|\tilA - \trA\|_F$. 
\end{claim}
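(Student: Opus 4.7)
The plan is to invoke the standard non-expansiveness property of orthogonal projection onto a closed convex set, applied to the matrix space equipped with the Frobenius inner product $\langle A, B \rangle_F = \mathrm{tr}(A^T B)$, which makes it a (finite-dimensional) Hilbert space. The previous claim has already established the two ingredients we need about $\mathcal{B}$: that $\trA \in \mathcal{B}$, and that $\mathcal{B}$ is nonempty (so the projection is well-defined). The only remaining structural fact to check is that $\mathcal{B}$ is itself convex: fixing the natural permutation and sign alignment inherited from $A^0$, the ``$\delta_0$-close to $A^0$'' condition is the intersection over $i$ of the column-wise constraints $\|A_i - A^0_i\| \leq \delta_0$, each of which is a Euclidean ball; and the operator-norm ball $\{A : \|A\| \leq 2\|\trA\|\}$ is convex as the sublevel set of a norm. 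Hence $\mathcal{B}$ is an intersection of convex sets and therefore convex.

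Given convexity, I would let $Z = \mathrm{Proj}_{\mathcal{B}}\, \tilA$. By the first-order optimality condition for minimizing $\|A - \tilA\|_F^2$ over the convex set $\mathcal{B}$, we have
\[
\langle \tilA - Z,\; A - Z \rangle_F \;\le\; 0 \qquad \text{for every } A \in \mathcal{B}.
\]
Applying this with $A = \trA$, which lies in $\mathcal{B}$ by the previous claim, gives $\langle \tilA - Z, \trA - Z \rangle_F \le 0$. I would then expand
\[
\|\tilA - \trA\|_F^2 \;=\; \|(\tilA - Z) - (\trA - Z)\|_F^2 \;=\; \|\tilA - Z\|_F^2 + \|Z - \trA\|_F^2 - 2\langle \tilA - Z, \trA - Z\rangle_F,
\]
and drop the last two terms using the optimality inequality and the nonnegativity of $\|\tilA - Z\|_F^2$ to conclude $\|Z - \trA\|_F^2 \le \|\tilA - \trA\|_F^2$, which is the desired inequality.

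There is essentially no obstacle beyond checking convexity of $\mathcal{B}$; the rest is the textbook proof that projection onto a closed convex set in a Hilbert space is $1$-Lipschitz (and in particular contractive toward any point in the set). The only subtlety worth flagging is that the definition of ``$\delta_0$-close'' in Definition~\ref{def:near} is stated up to a permutation and sign pattern, which \emph{a priori} is not convex; however, for the purposes of defining $\mathcal{B}$ around the fixed anchor $A^0$, one implicitly fixes the alignment $(\pi,\sigma)$ to be the identity (which is consistent with the convention stated right after Definition~\ref{def:near}), so the resulting constraint set really is the convex one described above.
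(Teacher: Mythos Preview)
Your proposal is correct and follows the same approach the paper intends: the paper itself simply states (just before the claim) that ``by the convexity of $\mathcal{B}$ and the fact that $\trA \in \mathcal{B}$, projection doesn't increase the error in Frobenius norm,'' without spelling out the optimality-condition argument or the verification that $\mathcal{B}$ is convex. Your write-up supplies exactly those missing details, and your flag about fixing the permutation/sign in the definition of $\delta_0$-closeness (so that $\mathcal{B}$ is genuinely convex rather than a union) is the right reading of the paper's convention.
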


We now have the tools to analyze Algorithm~\ref{eqn:ofrule} by fitting it into the framework of Corollary~\ref{cor:tiger-convexity}. In particular, we prove that it converges to a globally optimal solution by connecting it to an approximate form of {\em projected} gradient descent:

\begin{proof} [Proof of Theorem~\ref{thm:OFupdate}]
We note that projecting into $\mathcal{B}$ ensures that at the start of each step $\|\tilA^s - \trA\| \leq 5 \|\trA\|$. Hence   $g^s_i$ is $(\Omega(k/m), \Omega(m/k), O(k^3/mn^2))$-correlated with $\trA_i$ for each $i$, which follows from  Lemma~\ref{lem:OFtiger}. This implies that $g^s$ is $(\Omega(k/m), \Omega(m/k), O(k^3/n^2))$-correlated with $\trA$ in Frobenius norm. Finally we can apply Corollary~\ref{cor:tiger-convexity} (on the matrices with Frobenius) to complete the proof of the theorem. 
\end{proof}

%Hence by Theorem~\ref{thm:tiger-convergence} after $O(m\log n/k\eta) = O^*(\log n)$ steps the $\tilA$ matrix satisfy $\|\tilA - A\|_F \le O(\sqrt{m/n})$. In particular, since the Frobenius norm of $\tilA-A$ is not going to be larger than $O(\sqrt{m/n})$ in future iterations, the constraint set $\mathcal{B}$ will not have any effect in later iterations.

%Now we can apply Theorem~\ref{thm:tiger-convergence} on each column, and conclude that each column converges after another $O(m\log n/k\eta) = O^*(\log n)$ number of iterations.

\subsection{Proof of Theorem~\ref{thm:nonoiserule}}\label{sec:nonoiseruleapp}

The proof of Theorem~\ref{thm:nonoiserule} is parallel to that of Theorem~\ref{thm:simplestupdate-infinitesample} and Theorem~\ref{thm:OFupdate}. As usual, our first step is to show that $g_s$ is correlated with $\trA$:

%where $\tilA^{(i)}$ is a slight twist of $\tilA$ that depends on the index $i$: 
%
%$$\tilA^{(i)}_i = \trA_i,  \quad \textrm{ and for $j\neq i$ }, \tilA^{(i)}_j  =  \Pi_{\tilA_i^{\bot}} \tilA_j = \tilA_j - \inner{\tilA_j,\tilA_i}\tilA_i/\|\tilA_i\|^2$$

%\noindent It turns out that given suitable starting conditions, this event happens with arbitrarily high probability and we will use this fact crucially in our analysis. 
%Throughout this section, we will use $\gamma$ to denote a vector whose norm can be taken to be any inverse polynomial. 

%\subsubsection{Functional Form of Update}

%Using similar analysis as Lemma~\ref{lem:OF-simplified-update}, we can show this update rule is correlated with $\trA$ with negligible error:

\begin{lemma}\label{lem:OFupdate-noerror}
Suppose that $\tilA^s$ is $(\delta,5)$-\near to $\trA$. Then for each $i$, $g^s_i$ as defined in Algorithm~\ref{eqn:nonoiserule} is $(\alpha,\beta,\epsilon)$-correlated with $\trA_i$, where $\alpha = \Omega(k/m)$, $\beta \ge \Omega(m/k)$ and $\epsilon \le n^{-\omega(1)}$.
\end{lemma}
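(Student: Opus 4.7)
The plan is to reuse the general expression for $g_i^s$ that the proof of Lemma~\ref{lem:OF-simplified-update} produces for an arbitrary decoding matrix $B$, and then instantiate $B = B^{(s,i)}$. That proof shows, for any $B$ sufficiently close to $\trA$ for thresholding to recover signs correctly,
\[
g_i^s \;=\; p_iq_i\bigl(I - B_iB_i^T\bigr)\trA_i \;+\; p_i\bigl(B_{-i}\,\mathrm{diag}(q_{i,j})\,B_{-i}^T\bigr)\trA_i \;\pm\; \gamma,
\]
where $\gamma$ is a vector of negligible norm ($n^{-\omega(1)}$) coming from the tiny failure probability of the support decoder. So first I would check that this applies here: since each column $B^{(s,i)}_j = \tilA_j^s - (\inner{\tilA_j^s,\tilA_i^s}/\|\tilA_i^s\|^2)\tilA_i^s$ differs from $\tilA_j^s$ by only $O(\mu/\sqrt{n} + \delta)$ (using $\mu$-incoherence of $\trA$ together with $\delta$-closeness of $\tilA^s$), the matrix $B^{(s,i)}$ remains $O^*(1/\log n)$-close to $\trA$, and Lemma~\ref{thm:supp} guarantees correct sign/support recovery with very high probability.

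The crucial structural property of $B^{(s,i)}$ is orthogonality: by construction $(B_{-i}^{(s,i)})^T \tilA_i^s = 0$. Writing $\trA_i = \pi_i\tilA_i^s + \trA_i^\perp$ for the decomposition into components parallel and orthogonal to $\tilA_i^s$, this orthogonality gives $(B_{-i})^T\trA_i = (B_{-i})^T\trA_i^\perp$, and since $\trA_i^\perp$ is the residual of projecting $\trA_i$ onto $\mathrm{span}(\tilA_i^s)$ we have $\|\trA_i^\perp\|\le \|\trA_i - \tilA_i^s\|$. Combined with $\|B_{-i}^{(s,i)}\| \le \|\tilA^s\| = O(\sqrt{m/n})$ (orthogonal projection is a contraction, so projecting each column does not grow the spectral norm) and $\max_{j\ne i} q_{i,j} = O(k^2/m^2)$, the second term is bounded by
\[
\Bigl\|\, p_i\bigl(B_{-i}\,\mathrm{diag}(q_{i,j})\,B_{-i}^T\bigr)\trA_i\,\Bigr\| \;\le\; O\!\left(\tfrac{k^2}{mn}\right)\cdot\|\tilA_i^s-\trA_i\|.
\]
This is the crucial improvement over the rule of Algorithm~\ref{eqn:simplestupdate}: the bias term now scales with the current column-wise distance rather than being a constant of order $k/n$, which is exactly what eliminates the systemic error in Theorem~\ref{thm:simplestupdate}.

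From here the argument proceeds along the lines of Corollary~\ref{cor:one-step}. Expanding $p_iq_i(I - B_iB_i^T)\trA_i$ as $p_iq_i(\trA_i - \lambda_i^s\tilA_i^s)$ and regrouping, $g_i^s$ takes the form $c\cdot(\tilA_i^s-\trA_i) + v$ with $c = \Theta(k/m)$; here $v$ consists of a term $p_iq_i(1-\lambda_i^s)\tilA_i^s$ (bounded using $|1-\lambda_i^s|\le\|\tilA_i^s-\trA_i\|$), the projection-based error term bounded above by $o(c)\cdot\|\tilA_i^s-\trA_i\|$, and the $\pm\gamma$ of norm $n^{-\omega(1)}$. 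Every nonzero contribution to $\|v\|$ is therefore either of the form $O(k/m)\cdot\|\tilA_i^s-\trA_i\|$ or $n^{-\omega(1)}$, so after choosing the split between the leading $c$ and $v$ exactly as in the proof of Corollary~\ref{cor:one-step}, the hypothesis $\|v\|\le \alpha\|\tilA_i^s-\trA_i\| + n^{-\omega(1)}$ of Lemma~\ref{lem:closetocorrelated} holds for $\alpha = \Theta(k/m)$. That lemma then yields $(\Omega(k/m),\Omega(m/k),n^{-\omega(1)})$-correlation of $g_i^s$ with $\trA_i$, which is the conclusion.

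The main obstacle is the joint bookkeeping on $B^{(s,i)}$: one must maintain, uniformly in $s$, both the column-wise closeness that powers Lemma~\ref{thm:supp} and the spectral-norm bound $\|B_{-i}^{(s,i)}\|\le O(\sqrt{m/n})$ that powers the error estimate, and these must be carried across iterations together with an analogue of Lemma~\ref{lem:nowhiten} so that the argument can be iterated. Once that is done, the punchline is short: the projection onto $(\tilA_i^s)^\perp$ converts the bias term from a constant-order perturbation into one that vanishes as $\tilA^s\to\trA$, which is what lets Theorem~\ref{thm:nonoiserule} drive the column-wise error all the way down to $n^{-\omega(1)}$.
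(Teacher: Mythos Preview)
Your proposal is correct and follows essentially the same route as the paper: reuse the general-$B$ formula from Lemma~\ref{lem:OF-simplified-update}, exploit $(B^{(s,i)}_{-i})^T\tilA_i^s = 0$ to replace $(B^{(s,i)}_{-i})^T\trA_i$ by $(B^{(s,i)}_{-i})^T(\trA_i-\tilA_i^s)$, bound the resulting error by $O(k^2/mn)\|\tilA_i^s-\trA_i\|$, and invoke Lemma~\ref{lem:closetocorrelated} with $\zeta = n^{-\omega(1)}$. You supply a couple of checks the paper leaves implicit (that $B^{(s,i)}$ stays $O^*(1/\log n)$-close to $\trA$ so Lemma~\ref{thm:supp} applies, and the spectral-norm control on $B^{(s,i)}_{-i}$), but the key idea and the structure of the argument are identical.
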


\begin{proof} 
We chose to write the proof of Lemma~\ref{lem:OF-simplified-update} so that we can reuse the calculation here. In particular, instead of substituting $B$ for $\tilA^s$ in the calculation we can substitute $B^{(s,i)}$ instead and we get:
$$
g^{(s,i)} = p_iq_i (\lambda_i^s \tilA^s_i - \trA_i + B^{(s,i)}_{-i}\mbox{diag}(q_{i,j})B^{(s,i)T}_{-i} \trA_i) + \gamma.
$$
Recall that $\lambda^s_i = \langle \tilA_i^s, \trA_i\rangle $. Now we can write $g^{(s,i)} = p_iq_i (\tilA^s_i - \trA_i) + v$, where $$v = p_iq_i(\lambda_i^s -1)\tilA^s_i + p_iq_i B^{(s,i)}_{-i}\mbox{diag}(q_{i,j})B^{(s,i)T}_{-i} \trA_i + \gamma$$ Indeed the norm of the first term $ p_iq_i(\lambda_i^s -1)\tilA^s_i$ is smaller than $p_iq_i\|\tilA^s_i - \trA_i\|$.

Recall that the second term was the main contribution to the systemic error, when we analyzed earlier update rules. However in this case we can use the fact that $B^{(s,i)T}_{-i}\tilA^s_i = 0$ to rewrite the second term above as $$p_iq_i B^{(s,i)}_{-i}\mbox{diag}(q_{i,j})B^{(s,i)T}_{-i} (\trA_i-\tilA^s_i)$$
Hence we can bound the norm of the second term by $O(k^2/mn)\|\trA_i-\tilA^s_i\|$, which is also much smaller than $p_iq_i\|\tilA^s_i - \trA_i\|$. 

Combining these two bounds we have that $\|v\| \le p_iq_i\|\tilA^s_i - \trA_i\|/4 + \gamma$, so we can take $\zeta = \gamma = n^{-\omega(1)}$ in Lemma~\ref{lem:closetocorrelated}. We can complete the proof by invoking Lemma~\ref{lem:closetocorrelated} which implies that the $g^{(s, i)}$ is $(\Omega(k/m),\Omega(m/k), n^{-\omega(1)})$-correlated with $\tilA_i$.
\end{proof}

\begin{algorithm}\caption{Unbiased Update Rule}\label{eqn:nonoiserule}
\vspace{0.1in}
\textbf{Initialize} $\tilA^0$ that is $(\delta_0, 2)$-near to $\trA$

\textbf{Repeat} for $s = 0, 1, ..., T$
\vspace{-0.1in}
\begin{align*}
\textbf{\textbf{Decode: }}  &  x = \textrm{threshold}_{C/2}((\tilA^s)^Ty)  \textrm{ for each sample $y$}\nonumber\\
& \bar{x}^i = \textrm{threshold}_{C/2}((B^{(s,i)})^Ty)  \textrm{ for each sample $y$, and each $i\in [m]$}\nonumber\\
\textbf{\textbf{Update: }} &  \tilA^{s+1}_i = \tilA^s_i - \eta g^{s}_i \mbox{ where } g^{s}_i =  \Exp[ (y - {B^{(s,i)}}\bar{x}^i) \sgn(\tilx)_i^T ] \textrm{ for each $i\in [m]$}\nonumber
%\tilA^{s+1}_j = \tilA^s_j - \eta g^{s}_j \mbox{ where } g^{s}_j =  \sum_{i =1}^p (y - {B^{(s,j)}}\tilx^{(i)}) \sgn(\tilx^{(i)})_j^T \nonumber
\end{align*}
\end{algorithm}

This lemma would be all we would need, if we added a third step that projects onto $\mathcal{B}$ as we did in Algorithm~\ref{eqn:ofrule}. However here we do not need to project at all, because the update rule maintains nearness and thus we can avoid this computationally intensive step.

\begin{lemma}\label{lem:noerrornowhiten}
Suppose that $\tilA^s$ is $(\delta,2)$-\near to $\trA$. Then $\|\tilA^{s+1} - \trA\| \leq 2 \|\trA\|$ in Algorithm~\ref{eqn:nonoiserule}.
\end{lemma}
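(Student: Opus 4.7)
The strategy is to follow the template of Lemma~\ref{lem:nowhiten}, writing $\tilA^{s+1}-\trA$ column-by-column as a sum of matrices and bounding the spectral norm of each piece. The new ingredient is that the bias-like term here scales with $\|\trA_i-\tilA^s_i\|$ rather than being systemic, because of the orthogonality relation $B^{(s,i)T}_{-i}\tilA^s_i=0$ that we already exploited in the proof of Lemma~\ref{lem:OFupdate-noerror}.

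First I would take the closed-form expression obtained in that proof,
$$g^s_i \;=\; p_i q_i(\tilA^s_i-\trA_i) \;+\; p_iq_i(\lambda^s_i-1)\tilA^s_i \;+\; p_iq_i\, B^{(s,i)}_{-i}\mbox{diag}(q_{i,j})B^{(s,i)T}_{-i}(\trA_i-\tilA^s_i)\;\pm\;\gamma,$$
and substitute into $\tilA^{s+1}_i-\trA_i = (\tilA^s_i-\trA_i)-\eta g^s_i$. This writes the matrix difference as
$$\tilA^{s+1}-\trA \;=\; (\tilA^s-\trA)\,\mbox{diag}(1-\eta p_iq_i) \;-\; \eta U \;-\; \eta W \;\pm\;\gamma,$$
where $U$ has columns $U_i = p_iq_i(\lambda^s_i-1)\tilA^s_i$ and $W$ has columns $W_i = p_iq_i B^{(s,i)}_{-i}\mbox{diag}(q_{i,j})B^{(s,i)T}_{-i}(\trA_i-\tilA^s_i)$.

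Next I would bound each piece in spectral norm, just as in Lemma~\ref{lem:nowhiten}. The first term is straightforward: since $p_iq_i=\Theta(k/m)$, it contracts by a factor of $1-\Omega(\eta k/m)$, yielding $2(1-\Omega(\eta k/m))\|\trA\|$ from the inductive hypothesis $\|\tilA^s-\trA\|\le 2\|\trA\|$. For $U=\tilA^s\,\mbox{diag}(p_iq_i(\lambda^s_i-1))$, use that $|1-\lambda^s_i| = O(\delta)$ (from $\|\tilA^s_i-\trA_i\|\le\delta$ and $\|\tilA^s_i\|\le 1+\delta$) together with $\|\tilA^s\|=O(\|\trA\|)$ to get $\|U\|\le o(k/m)\|\trA\|$.

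The one step needing attention is $W$. Here the orthogonality $B^{(s,i)T}_{-i}\tilA^s_i=0$ is critical: it lets us replace $\trA_i$ by $\trA_i-\tilA^s_i$ inside the product (which we already did to derive the formula above). Using $\|B^{(s,i)}_{-i}\|\le \|\tilA^s\|=O(\|\trA\|)$ (projection is non-expansive) and $\|\mbox{diag}(q_{i,j})\|=O(k^2/m^2)$, the $i$th column satisfies $\|W_i\| \le O(k/m)\cdot O(k^2/mn)\cdot \|\trA_i-\tilA^s_i\|$. Summing via the Frobenius norm bound $\|W\|\le \|W\|_F$, and using $\|\tilA^s-\trA\|_F\le\sqrt{m}\|\tilA^s-\trA\|\le 2\sqrt{m}\|\trA\|$, gives $\|W\|\le o(k/m)\|\trA\|$ under the assumed sparsity $k\le O^*(\sqrt{n}/\mu\log n)$ and $m=O(n)$. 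Putting these three bounds together yields $\|\tilA^{s+1}-\trA\|\le 2\|\trA\|$.

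The main obstacle I anticipate is the $W$ bound, since $W$ is not of the form $\tilA^s Q$ for a single matrix $Q$ (the decoding matrix $B^{(s,i)}$ changes with $i$). However, because every column of $W$ is already proportional to $\|\trA_i-\tilA^s_i\|$, a crude column-by-column / Frobenius-norm argument suffices, which is weaker than what was needed in Lemma~\ref{lem:nowhiten} (there the bias had no such proportionality and required introducing the auxiliary matrix $Q$). So the bookkeeping is actually simpler than in Lemma~\ref{lem:nowhiten}, and no projection step into $\mathcal{B}$ is needed.
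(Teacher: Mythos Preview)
Your proposal is correct, and it takes a genuinely different route from the paper's proof of this lemma. Both arguments start by isolating the contracting piece $(\tilA^s-\trA)\,\mbox{diag}(1-\eta p_iq_i)$ and the small matrix $U$ exactly as in Lemma~\ref{lem:nowhiten}, but they diverge on how the remaining ``bias'' matrix is handled.

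The paper does \emph{not} use the orthogonality $B^{(s,i)T}_{-i}\tilA^s_i=0$ here at all. Instead it expands $B^{(s,i)}_{-i}=(I-\bar A_i\bar A_i^{T})\tilA_{-i}$ (with $\bar A_i=\tilA^s_i/\|\tilA^s_i\|$) and writes the error column as
\[
(I-\bar A_i\bar A_i^{T})\,\tilA_{-i}\,\mbox{diag}(q_{i,j})\,\tilA_{-i}^{T}\,(I-\bar A_i\bar A_i^{T})\trA_i .
\]
Defining $C_i=(I-\bar A_i\bar A_i^{T})\trA_i$, it observes $\|C\|=O(\sqrt{m/n})$ and then splits into $\tilA_{-i}\mbox{diag}(q_{i,j})\tilA_{-i}^{T}C_i$ minus $\bar A_i\bar A_i^{T}\tilA_{-i}\mbox{diag}(q_{i,j})\tilA_{-i}^{T}C_i$. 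The first summand is handled by reusing the auxiliary $Q$-matrix argument from Lemma~\ref{lem:nowhiten} verbatim with $C$ in place of $\trA$; the second is bounded columnwise and then by $\|\bar A\|$. So the paper's point is that the machinery of Lemma~\ref{lem:nowhiten} transfers once one notices $\|C\|=O(\sqrt{m/n})$.

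Your argument instead leverages $B^{(s,i)T}_{-i}\tilA^s_i=0$ to turn the error into something proportional to $\|\trA_i-\tilA^s_i\|$, exactly the trick used in Lemma~\ref{lem:OFupdate-noerror}. This makes each column of $W$ small enough that a crude $\|W\|\le\|W\|_F$ bound suffices, so you avoid the $Q$-matrix detour entirely. The price is a slightly looser estimate, but under $k=O^*(\sqrt{n}/\log n)$ and $m=O(n)$ it still gives $\|W\|=o(k/m)\|\trA\|$, which is all that is needed. One minor point: be careful about the scalar in front of the error term (the derivation in Lemma~\ref{lem:OF-simplified-update} gives $p_i$, not $p_iq_i$, multiplying $B_{-i}\mbox{diag}(q_{i,j})B_{-i}^{T}\trA_i$); your Frobenius bound goes through either way.
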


This proof of the above lemma parallels that of Lemma~\ref{lem:nowhiten}. We will focus on highlighting the differences in bounding the error term, to avoid repeating the same calculation. 

\begin{proof} [sketch]
We will use $\tilA$ to denote $\tilA^s$ and $B^{(i)}$ to denote $B^{(s,i)}$ to simplify the notation. 
Also let $\bar{A}_i$ be normalized so that $\bar{A}_i = \tilA_i/\|\tilA_i\|$ and then we can write $B^{(i)}_{-i} = (I-\bar{A}_i \bar{A}_i^T) \tilA_{-i}$. Hence the error term is given by $$(I-\bar{A}_i \bar{A}_i^T) \tilA_{-i} \mbox{diag}(q_{i,j}) \tilA_{-i}^T (I-\bar{A}_i \bar{A}_i^T) \trA_i$$ Let $C$ be a matrix whose columns are $C_i = (I-\bar{A}_i \bar{A}_i^T) \trA_i = A_i - \langle \bar{A}_i, \trA_i\rangle\bar{A}_i $. This implies that $\|C\| \le O(\sqrt{m/n})$. We can now rewrite the error term above as $$\tilA_{-i} \mbox{diag}(q_{i,j}) \tilA_{-i}^T C_i - (\bar{A}_i\bar{A}_i)^T\tilA_{-i}\mbox{diag}(q_{i,j}) \tilA_{-i}^T C_i$$

It follows from the proof of Lemma~\ref{lem:nowhiten} that the first term above has spectral norm bounded by $O(k/m \cdot \sqrt{m/n})$. This is because in Lemma~\ref{lem:nowhiten} we bounded the term $\tilA_{-i} \mbox{diag}(q_{i,j}) \tilA_{-i}^T \trA_i$ and in fact it is easily verified that all we used in that proof was the fact that $\|\trA\| = O(\sqrt{m/n})$, which also holds for $C$. 

All that remains is to bound the second term. We note that its columns are scalar multiples of $\bar{A}_i$, where the coefficient can be bounded as follows: $\|\bar{A}_i\|\|\tilA_{-i}\|^2 \|\mbox{diag}(q_{i,j})\| \|\trA_i\| \le O(k^2/mn)$. Hence we can bound the spectral norm of the second term iby $O(k^2/mn)\|\bar{A}\| = O^*(k/m\cdot \sqrt{m/n})$. We can now combine these two bounds, which together with the calculation in Lemma~\ref{lem:nowhiten} completes the proof. 
\end{proof}

These two lemmas directly imply Theorem~\ref{thm:nonoiserule}.

\section{Analysis of Initialization}\label{sec:initapp}

Here we prove an infinite sample version of Theorem~\ref{thm:initmain} by repeatedly invoking Lemma~\ref{lem:initform}. We give sample complexity bounds for it in Appendix~\ref{sec:init_sample_complexity} where we complete the proof of Theorem~\ref{thm:initmain}. 

\begin{theorem}\label{thm:initmain_infinite}Under the assumption of Theorem~\ref{thm:initmain}, if Algorithm~\ref{alg:init} has access to $M_{u,v}$ (defined in Lemma~\ref{lem:initform}) instead of the empirical average $\widehat{M}_{u,v}$, then with high probability $\tilA$ is $(\delta, 2)$-near to $\trA$ where $\delta = O^*(1/\log n)$. 
\end{theorem}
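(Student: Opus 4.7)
The plan is to exploit Lemma~\ref{lem:initform}, which writes $M_{u,v}$ as $\sum_{i \in U\cap V} q_i c_i \beta_i \beta'_i \trA_i \trA_i^T$ plus an error of spectral norm $O^*(k/m \log n)$, and to do a case analysis on $|U\cap V|$. First I would show that the acceptance test in Algorithm~\ref{alg:init} fires \emph{if and only if} $|U \cap V| = 1$: when $U\cap V=\emptyset$ the main term vanishes and $\sigma_1(M_{u,v}) \leq O^*(k/m\log n) \ll k/m$, so the $\sigma_1 \geq \Omega(k/m)$ test fails; when $|U\cap V| \geq 2$ the main term is a PSD sum of at least two terms, each with coefficient $q_i c_i \beta_i\beta'_i = \Theta(k/m)$ (using $|\beta_i|,|\beta'_i|=\Omega(1)$ from Lemma~\ref{lem:initform}) and the columns of $\trA$ are $\mu$-incoherent, so the second singular value of the main term is $\Omega(k/m)$, and since the error is only $O^*(k/m\log n)$, Weyl's inequality forces $\sigma_2(M_{u,v}) = \Omega(k/m) \gg O^*(k/m\log m)$, violating the small-$\sigma_2$ test.

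Next I would handle the remaining case $U\cap V = \{i\}$. Then the main term is rank one with singular value $\tau_i := q_i c_i \beta_i \beta'_i = \Theta(k/m)$ and the rest is an $O^*(k/m\log n)$-norm perturbation; by Wedin's $\sin\Theta$ theorem the top singular vector $z$ of $M_{u,v}$ satisfies $\min_{\sigma\in\{\pm 1\}} \|\sigma z - \trA_i\| \leq O(\|E\|/\tau_i) = O^*(1/\log n)$. In particular this gap is comfortably larger than the separation threshold $1/\log m$ used by the greedy clustering step, so two vectors $z,z'$ added to $L$ that correspond to different columns $\trA_i \ne \trA_j$ cannot land within $1/\log m$ of each other (after a sign flip), while any vector corresponding to a column already represented in $L$ is rejected. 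Hence $L$ is always a set of $(O^*(1/\log n))$-close representatives of distinct columns of $\trA$.

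For the ``coverage'' side, I would then invoke the probability estimates from our generative model: $\Pr[i \in U\cap V] = \Theta(k^2/m^2)$ and $\Pr[|U\cap V|\geq 2 \mid i\in U\cap V] \leq \sum_{j\neq i} \Pr[j\in U\cap V\mid i\in U\cap V] = O(k^2/m) = o(1)$ under our sparsity assumption. Therefore for each fixed $i$, a random pair satisfies $U\cap V=\{i\}$ with probability $\Omega(k^2/m^2)$; over enough random pairs (to be sized so the failure probability unions out), a standard coupon-collector argument guarantees that every column is ``hit'' at least once with high probability, at which point $|L|=m$ and the while loop terminates with $L$ column-wise $O^*(1/\log n)$-close to $\trA$. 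Assembling these columns into $\widetilde{A}$ gives $\|\widetilde{A}-\trA\|_F = O^*(\sqrt{m}/\log n)$, hence $\|\widetilde{A}-\trA\| = O^*(\sqrt{m}/\log n) = O(\|\trA\|)$; the final projection onto the convex set $\mathcal{B}$ (which contains $\trA$) only improves this spectral-norm distance, yielding $\|A - \trA\| \leq 2\|\trA\|$ together with the column-wise $\delta$-closeness, i.e.\ $(\delta, 2)$-nearness.

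The main technical obstacle is the transition from ``spectral properties of $M_{u,v}$'' to ``closeness of the top singular vector'': one must verify that the singular-value \emph{gap} between the unique large singular value $\tau_i = \Theta(k/m)$ of the main term and the second singular value is large enough compared to the noise $O^*(k/m\log n)$ for Wedin to deliver the $O^*(1/\log n)$ bound. This rests on the bounds $|\beta_i|=\Omega(1)$, $c_i = \Theta(1)$, and the incoherence of $\trA$, all of which are supplied by the generative-model assumptions and Claim~\ref{claim:betabound}; care is needed because the same Lemma~\ref{lem:initform} error is used to control \emph{both} the rejection test when $|U\cap V|\ne 1$ and the perturbation when $|U\cap V|=1$, so the constants must be chosen consistently.
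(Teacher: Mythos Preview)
Your approach is essentially the paper's own: the case analysis on $|U\cap V|$ (packaged there as Lemma~\ref{lem:init} and Lemma~\ref{lem:initverify}), Wedin for the $|U\cap V|=1$ case, Weyl for $|U\cap V|\ge 2$, and the $\Omega(k^2/m^2)$ probability estimate for coverage.

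Two issues worth flagging. First, a minor one: when $|U\cap V|\ge 2$ the main term $\sum_{i\in U\cap V} q_i c_i \beta_i\beta'_i\,\trA_i\trA_i^T$ is \emph{not} PSD in general, since $\beta_i\beta'_i$ can be negative. The paper handles this by writing the main term as $\trA_S D_S \trA_S^T$ with $|D_S|_{ii}=\Omega(k/m)$ and using $\sigma_{\min}(\trA_S)\ge 1/2$ from incoherence to get $\sigma_2(\trA_S D_S \trA_S^T)\ge \Omega(k/m)$; your Weyl step is fine once you argue this way.

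Second, a genuine quantitative slip in your last paragraph: you assert $\|\widetilde A-\trA\|_F=O^*(\sqrt{m}/\log n)$ and conclude $\|\widetilde A-\trA\|=O(\|\trA\|)$. But $\|\trA\|=O(\sqrt{m/n})$, and $\sqrt{m}/\log n \gg \sqrt{m/n}$, so the Frobenius bound does not give the required spectral bound. The column-wise $\delta$-closeness by itself is too weak to control $\|\widetilde A-\trA\|$ at the level of $\|\trA\|$; this is exactly why the algorithm projects onto $\mathcal{B}$, whose definition directly enforces $\|A\|\le 2\|\trA\|$. So the $(\delta,2)$-nearness comes from membership in $\mathcal{B}$ after projection (together with $\trA\in\mathcal{B}$ preserving column-wise closeness), not from your Frobenius-to-spectral step.
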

%That is, we assume Algorithm~\ref{alg:init} have access to $M_{u,v}$ instead of $\widehat{M}_{u,v}$. The fluctuation of $\widehat{M}_{u, v}$ around $M_{u,v}$ is given in Section~\ref{lem:initconcentration}, and in the proof of Theorem~\ref{thm:initmain}, we explained 

Our first step is to use Lemma~\ref{lem:initform} to show that when $u$ and $v$ share a unique dictionary element, there is only one large term in $M_{u,v}$ and the error terms are small. Hence the top singular vector of $M_{u,v}$ must be close to the corresponding dictionary element $A_i$.

\begin{lemma}
\label{lem:init}
Under the assumptions of Theorem~\ref{thm:initmain},
suppose $u = \trA\alpha$ and $v = \trA\alpha'$ are two random samples with supports $U$, $V$ respectively. When $U\cap V = \{i\}$ the top singular vector of $M_{u,v}$ is $O^*(1/\log n)$-close to $\trA_i$.
\end{lemma}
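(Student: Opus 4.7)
The plan is to apply Lemma~\ref{lem:initform} and observe that when $U \cap V = \{i\}$, the leading sum $\sum_{j \in U\cap V} q_j c_j \beta_j \beta_j' A^*_j (A^*_j)^T$ collapses to the single rank-one term $q_i c_i \beta_i \beta_i' A^*_i (A^*_i)^T$. So I would write
\[
M_{u,v} = \sigma\, A^*_i (A^*_i)^T + E, \qquad \sigma := q_i c_i \beta_i \beta_i',\qquad E := E_1 + E_2 + E_3,
\]
and then estimate $\sigma$ from below and $\|E\|$ from above.

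For the lower bound on $\sigma$: since $i \in \mathrm{supp}(\alpha) \cap \mathrm{supp}(\alpha')$, Claim~\ref{claim:betabound} (applied to each of $u$ and $v$) gives $|\beta_i|, |\beta_i'| \geq C - o(1) = \Omega(1)$ with high probability. By the model assumptions $q_i = \Theta(k/m)$, and $c_i = \mathbf{E}[{x^*_i}^4 \mid i \in S] = \Theta(1)$ because the nonzero entries are subgaussian with unit conditional second moment and magnitude at least $C$. Therefore $\sigma = \Omega(k/m)$. For the upper bound on the perturbation, Lemma~\ref{lem:initform} directly asserts $\|E_1 + E_2 + E_3\| \leq O^*(k/(m\log n))$, so the signal-to-noise ratio $\|E\|/\sigma$ is $O^*(1/\log n)$.

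Now I would invoke a standard spectral perturbation bound. Since $\sigma A^*_i (A^*_i)^T$ is a rank-one PSD matrix with largest eigenvalue $\sigma \|A^*_i\|^2 = \sigma$ and second eigenvalue $0$, Weyl's inequality gives that the top singular value of $M_{u,v}$ is at least $\sigma - \|E\|$ and the second singular value is at most $\|E\|$, so the spectral gap is at least $\sigma - 2\|E\| = \Omega(k/m)$. Applying Wedin's $\sin\theta$ theorem (or, equivalently, Davis--Kahan) to compare the top singular vectors of $\sigma A^*_i (A^*_i)^T$ and $M_{u,v}$ yields
\[
\sin\theta(z, A^*_i) \;\leq\; \frac{\|E\|}{\sigma - \|E\|} \;\leq\; O^*(1/\log n),
\]
which translates (after an appropriate sign flip) into $\|\pm z - A^*_i\| \leq O^*(1/\log n)$ because $A^*_i$ is a unit vector.

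The only mildly delicate point is that all of the probabilistic guarantees in Claim~\ref{claim:betabound} hold ``with high probability over $\alpha$ and $\alpha'$,'' so I would make the qualifier explicit in the statement, conditioning on the joint high-probability event that both $|\beta_i|, |\beta_i'| = \Omega(1)$ and $\|E\| = O^*(k/(m\log n))$. Nothing else should cause trouble: the computation is a one-shot application of Lemma~\ref{lem:initform} combined with an off-the-shelf matrix perturbation inequality, and the main obstacle—separating the signal scale from the aggregate noise scale—is already resolved by the bounds already proved in Lemma~\ref{lem:initform}.
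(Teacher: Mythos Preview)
Your proposal is correct and follows essentially the same approach as the paper: decompose $M_{u,v}$ via Lemma~\ref{lem:initform} into the single rank-one term $q_ic_i\beta_i\beta'_i\trA_i{\trA_i}^T$ plus the perturbation $E_1+E_2+E_3$, lower bound the coefficient by $\Omega(k/m)$ (the paper cites $c_i\ge 1$ and the $|\beta_i|=\Omega(1)$ conclusion of Lemma~\ref{lem:initform}), upper bound $\|E\|$ by $O^*(k/(m\log m))$, and apply Wedin's theorem. Your write-up is simply a bit more explicit about the spectral gap via Weyl's inequality and the sign ambiguity, but the argument is the same.
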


%Finally, Lemma~\ref{lem:init} follows from Wedin's Theorem:
\begin{proof} %[Proof of Lemma~\ref{lem:init}]
When $u$ and $v$ share a unique dictionary element $i$, the contribution of the first term in (\ref{eq:initform}) is just $q_ic_i\beta_i\beta'_i \trA_i{\trA_i}^T$. Moreover the coefficient $q_ic_i\beta_i\beta'_i$ is at least $\Omega(k/m)$ which follows from Lemma~\ref{lem:initform} and from the assumptions that $c_i\ge 1$ and $q_i =\Omega(k/m)$. 

On the other hand, the error terms are bounded by $\|E_1+E_2+E_3\|\le O^*(k/m\log m)$ which again by Lemma~\ref{lem:initform}. We can now apply Wedin's Theorem (see e.g. \cite{HJ}) to $$M_{u,v} = q_ic_i\beta_i\beta'_i\trA_i{\trA_i}^T + \underbrace{(E_1+E_2+E_3)}_{\mbox{perturbation}}$$ and conclude that its top singular vector must be $O^*(k/m\log m)/\Omega(k/m) = O^*(1/\log m)$-close to $\trA_i$, and this completes the proof of the lemma. 
\end{proof}

Using (\ref{eq:initform}) again, we can verify whether or not the supports of $u$ and $v$  share a unique element.

\begin{lemma}
\label{lem:initverify}
Suppose $u = \trA\alpha$ and $v = \trA\alpha'$ are two random samples with supports $U$, $V$ respectively. Under the assumption of Theorem~\ref{thm:initmain}, if the top singular value of $M_{u,v}$ is at least $\Omega(k/m)$ and the second largest singular value is at most $O^*(k/m\log m)$, then with high probability $u$ and $v$ share a unique dictionary element. 
\end{lemma}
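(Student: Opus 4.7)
The plan is to use the decomposition of $M_{u,v}$ from Lemma~\ref{lem:initform} and argue by contrapositive: I will show that if $|U \cap V| \ne 1$, then the pair of spectral conditions fails, which forces $|U \cap V| = 1$ with high probability. Write $M_{u,v} = M_{\mathrm{main}} + E$ where $M_{\mathrm{main}} = \sum_{i\in U\cap V} q_ic_i\beta_i\beta'_i \trA_i{\trA_i}^T$ and $E = E_1+E_2+E_3$ has spectral norm $O^*(k/m\log n)$ by the last part of Lemma~\ref{lem:initform}.

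First I would handle the case $|U \cap V| = 0$. Here $M_{\mathrm{main}} = 0$, so the top singular value of $M_{u,v}$ equals $\|E\| = O^*(k/m\log n)$, which for a sufficiently small hidden constant is less than the $\Omega(k/m)$ threshold on $\sigma_1$ assumed in the hypothesis. Contradiction.

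Next I would handle the case $|U \cap V| \ge 2$. Pick any two indices $i_1,i_2 \in U\cap V$. From Lemma~\ref{lem:initform} we have $|\beta_{i_j}|, |\beta'_{i_j}| = \Omega(1)$ and the assumed lower bounds give $q_{i_j} = \Omega(k/m)$, $c_{i_j} \ge C^2 = \Omega(1)$, so each coefficient $q_{i_j}c_{i_j}\beta_{i_j}\beta'_{i_j}$ has absolute value $\Omega(k/m)$. The key geometric step is showing that $M_{\mathrm{main}}$ itself has a second singular value of order $k/m$. Writing $M_{\mathrm{main}} = V D V^T$ with $V = [\trA_{i_1}~ \trA_{i_2}~ \cdots]$ restricted to $U\cap V$ and $D$ diagonal, incoherence gives $\|V^T V - I\| \le \mu|U\cap V|/\sqrt{n} = o(1)$ (using $k = O^*(\sqrt{n}/\mu\log n)$), so $\sigma_{\min}(V)^2 \ge 1/2$. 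Hence $\sigma_2(M_{\mathrm{main}}) \ge \sigma_{\min}(V)^2 \cdot \min_j |D_{jj}| = \Omega(k/m)$. Weyl's inequality then yields
\[
\sigma_2(M_{u,v}) \ge \sigma_2(M_{\mathrm{main}}) - \|E\| \ge \Omega(k/m) - O^*(k/m\log n) = \Omega(k/m),
\]
which violates the assumption $\sigma_2(M_{u,v}) \le O^*(k/m\log m)$ (since $\Omega(k/m)$ dominates $k/(m\log m)$).

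The main obstacle will be the lower bound on $\sigma_2(M_{\mathrm{main}})$ in the second case: the diagonal entries of $D$ are only given up to sign, so I cannot simply cite Gershgorin on $D$; I need the $VDV^T$ reduction and an incoherence-based near-orthogonality bound on $V$. A minor technical point is that the high-probability qualifiers from Lemma~\ref{lem:initform} (for the bounds on $|\beta_i|$, $|\beta'_i|$ and on $\|E\|$) should be collected over the randomness in $u,v$ and the independent samples defining $M_{u,v}$, so the final conclusion holds with high probability over all these sources. Once both cases give contradictions, $|U\cap V| = 1$ is the only remaining possibility, completing the proof.
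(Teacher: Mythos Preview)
Your proposal is correct and follows essentially the same approach as the paper's proof: both argue by cases on $|U\cap V|$, use Lemma~\ref{lem:initform} to split $M_{u,v}$ into a main term $\trA_S D_S \trA_S^T$ plus an error of norm $O^*(k/m\log m)$, note that the empty-intersection case forces $\sigma_1$ to be too small, and in the $|U\cap V|\ge 2$ case combine the incoherence-based bound $\sigma_{\min}(\trA_S)\ge 1/2$ with the $\Omega(k/m)$ lower bound on the diagonal entries of $D_S$ and Weyl's inequality to force $\sigma_2(M_{u,v}) = \Omega(k/m)$. The only cosmetic difference is that the paper writes the key inequality as $\sigma_2(\trA_S D_S \trA_S^T)\ge \sigma_{\min}(\trA_S)^2\,\sigma_2(D_S)$ whereas you use the (slightly weaker but equally sufficient) bound with $\min_j|D_{jj}|$.
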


\begin{proof}
By Lemma~\ref{lem:initform} we know with high probability the error terms have spectral norm $O^*(k/m\log m)$. Here we show when that happens, and the top singular value is at least $\Omega(k/m)$, second largest singular value is at most $O^*(k/m\log m)$, then $u$ and $v$ must share a unique dictionary element.

If $u$ and $v$ share no dictionary element, then the main part in Equation (\ref{eq:initform}) is empty, and the error term has spectral norm $O^*(k/m\log m)$. In this case the top singular value of $M_{u,v}$ cannot be as large as $\Omega(k/m)$.

If $u$ and $v$ share more than one dictionary element, there are more than one terms in the main part of (\ref{eq:initform}). Let $S = U\cap V$, we know $M_{u,v} = \trA_S D_S {\trA_S}^T+E_1+E_2+E_3$ where $D_S$ is a diagonal matrix whose entries are equal to $q_ic_i\beta_i\beta'_i$. All diagonal entries in $D_S$ have magnitude at least $\Omega(k/m)$. By incoherence we know $\trA_S$ have smallest singular value at least $1/2$, therefore the second largest singular value of $\trA_SD_S{\trA_S}^T$ is at least:
$$
\sigma_2(\trA_SD_S{\trA_S}^T) \ge \sigma_{min}(\trA_S)^2\sigma_2(D_S) \ge \Omega(k/m).
$$
Finally by Weyl's theorem (see e.g. \cite{HJ}) we know $\sigma_2(M_{u,v}) \ge \sigma_2(\trA_SD_S{\trA_S}^T) - \|E_1+E_2+E_3\| \ge \Omega(k/m)$. Therefore in this case the second largest singular value cannot be as small as $O^*(k/m\log m)$.

Combining the above two cases, we know when the top two singular values satisfy the conditions in the lemma, and the error terms are small, $u$ and $v$ share a unique dictionary element.
\end{proof}

Finally, we are ready to prove Theorem~\ref{thm:initmain}. The idea is every vector added to the list $L$ will be close to one of the dictionary elements (by Lemma~\ref{lem:initverify}), and for every dictionary element the list $L$ contains at least one close vector because we have enough random samples.

\begin{proof}[Proof of Theorem~\ref{thm:initmain_infinite}]
By Lemma~\ref{lem:initverify} we know every vector added into $L$ must be close to one of the dictionary elements.
On the other hand, for any dictionary element $\trA_i$, by the bounded moment condition of $\mathcal{D}$ we know 
\[
\begin{array}{rl}
\Pr[|U\cap V| = \{i\}] & = \Pr[i\in U]\Pr[i\in V] \Pr[(U\cap V)\backslash\{i\} =\emptyset|i\in U, j\in U] \\
&\ge \Pr[i\in U]\Pr[i\in V](1 - \sum_{j\ne i, j\in [m]} \Pr[j\in U\cap V|i\in U,j\in V])\\
& = \Omega(k^2/m^2)\cdot (1-m\cdot O(k^2/m^2)) \\&= \Omega(k^2/m^2).
\end{array}
\]
Here the inequality uses union bound. Therefore given $O(m^2\log^2 n/k^2)$ trials, with high probability there is a pair of $u$,$v$ that intersect uniquely at $i$ for all $i\in [m]$. By Lemma~\ref{lem:init} this implies there must be at least one vector that is close to $\trA_i$ for all dictionary elements.

Finally, since all the dictionary elements have distance at least $1/2$ (by incoherence), the connected components in $L$ correctly identifies different dictionary elements. The output $\tilA$ must be $O^*(1/\log m)$ close to $\trA$.
\end{proof}

%We now come to the proof of the main lemma:

\section{Sample Complexity}\label{sec:sample_complexity}

In the previous sections, we analyzed various update rules assuming that the algorithm was given the exact expectation of some matrix-valued random variable. Here we show that these algorithms can just as well use approximations to the expectation (computed by taking a small number of samples). We will focus on analyzing the sample complexity of Algorithm~\ref{eqn:simplestupdate}, but a similar analysis extends to the other update rules as well. 

\subsection{Generalizing the $(\alpha,\beta,\epsilon)$-correlated Condition}

We first give a generalization of the framework we presented in Section~\ref{sec:schema} that handles random update direction $g^s$. 

\begin{definition}\label{eqn:whptiger}
A random vector $g^s$ is {\em $(\alpha, \beta,\epsilon_s)$-correlated-whp} with a desired solution $z^*$ if with probability at least $1-n^{-\omega(1)}$,
$$ \langle g^s, z^s-z^*\rangle \ge \alpha \|z^s-z^*\|^2 + \beta \|g^s\|^2 - \epsilon_s.$$
\end{definition}

This is a strong condition as it requires the random vector is well-correlated with the desired solution with very high probability. In some cases we can further relax the definition as the following:

\begin{definition}\label{eqn:randomtiger}
A random vector $g^s$ is {\em $(\alpha, \beta,\epsilon_s)$-correlated-in-expectation} with a desired solution $z^*$ if
$$ \E[\langle g^s, z^s-z^*\rangle] \ge \alpha \|z^s-z^*\|^2 + \beta\E[\|g^s\|^2] - \epsilon_s.$$
\end{definition}

We remark that $\E[\|g^s\|^2]$ can be much larger than $\|\E[g^s]\|^2$, and so the above notion is still stronger than requiring (say) that the expected vector $\E[g^s]$ is $(\alpha, \beta,\epsilon_s)$-correlated with $z^*$. 

\begin{theorem}\label{thm:random-tiger-convergence}
Suppose random vector $g^s$ is $(\alpha,\beta,\epsilon_s)$-correlated-whp with $z^*$ for $s = 1,2,\dots,T$ where $T \le \poly(n)$, and $\eta$ satisfies  $0< \eta \le 2\beta$, then  for any $s = 1,\dots,T$, 
$$ \E[\| z^{s+1}-z^*\|^2] \leq (1-2\alpha\eta) \|z^s-z^*\|^2 + 2\eta \epsilon_s$$
In particular, if $\|z^0-z^*\| \le \delta_0$ and $\epsilon_s \le \alpha \cdot o((1-2\alpha\eta)^s)\delta_0^2 + \epsilon$, 
 then the updates converge to $z^*$ geometrically with systematic error $\epsilon/\alpha$ in the sense that 
%$$\|z^s-z^*\|^2 \le (1-2\alpha\eta)^s \|z^0-z^*\| + \epsilon/\alpha.$$
$$\E[\|z^s-z^*\|^2] \le (1-2\alpha\eta)^s \delta_0^2 + \epsilon/\alpha.$$
%If $g^s$ is in addition $(\alpha,\beta,\epsilon_s)$-correlated-whp, then we know for $s \le \poly(n)$, with high probability
%$$\|z^s-z^*\|^2 \le (1-2\alpha\eta)^s \|z^0-z^*\|^2 + \epsilon/\alpha.$$
%$$\|x_s-x^*\|^2 \le (1-4\alpha\beta)^s \|z^0-z^*\| + \epsilon/(\alpha).$$
%Furthermore, if $\epsilon_s < \frac{\alpha}{2} \|z^s-z^*\|^2$ for $s = 1,\dots,T$, then 
%$$\E[\|z^s-z^*\|^2] \le (1-\alpha\eta)^s \|z^0-z^*\|^2.$$
%$$\|x_s-x^*\|^2 \le (1-2\alpha\beta)^s R^2$$
\end{theorem}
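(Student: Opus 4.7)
The plan is to mimic the proof of Theorem~\ref{thm:tiger-convergence} but working conditionally on $z^s$ and accounting for the low-probability ``bad'' event on which the correlation guarantee fails.

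First I would prove the one-step inequality $\E[\|z^{s+1}-z^*\|^2 \mid z^s] \le (1-2\alpha\eta)\|z^s-z^*\|^2 + 2\eta\epsilon_s$. Let $\calE_s$ be the event that $\langle g^s, z^s-z^*\rangle \ge \alpha\|z^s-z^*\|^2 + \beta\|g^s\|^2 - \epsilon_s$; by Definition~\ref{eqn:whptiger} we have $\Pr[\calE_s^c \mid z^s] \le n^{-\omega(1)}$. On $\calE_s$ the deterministic algebra from the proof of Theorem~\ref{thm:tiger-convergence} gives $\|z^{s+1}-z^*\|^2 \le (1-2\alpha\eta)\|z^s-z^*\|^2 + 2\eta\epsilon_s$ using $\eta \le 2\beta$. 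On $\calE_s^c$ we only need a crude bound: since in all our applications $g^s$ is a bounded statistic of polynomially many samples with subgaussian entries, $\|g^s\|$ is deterministically at most $\poly(n)$ (and so is $\|z^s-z^*\|$), hence $\|z^{s+1}-z^*\|^2 \le \poly(n)$. Multiplying by $\Pr[\calE_s^c] = n^{-\omega(1)}$ yields a contribution that is negligible (and absorbable into any positive bias term, or listed separately when $\epsilon_s = 0$). This gives the first displayed inequality.

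Next I would iterate. Taking total expectation of the one-step bound and applying it recursively yields
\[
\E[\|z^s-z^*\|^2] \;\le\; (1-2\alpha\eta)^s \|z^0-z^*\|^2 \;+\; 2\eta \sum_{t=0}^{s-1} (1-2\alpha\eta)^{s-1-t} \epsilon_t,
\]
up to an additive $n^{-\omega(1)}$ term from summing over $T \le \poly(n)$ bad events. Under the hypothesis $\epsilon_t \le \alpha\cdot o((1-2\alpha\eta)^t)\delta_0^2 + \epsilon$, I split the sum into two pieces. The geometric-tail piece contributes $2\eta \epsilon \sum_{t=0}^{s-1}(1-2\alpha\eta)^{s-1-t} \le \epsilon/\alpha$. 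For the decaying piece, each term is $o((1-2\alpha\eta)^{s-1})\alpha\delta_0^2$, so summing over $s \le T = \poly(n)$ terms still yields $s \cdot \alpha \cdot o((1-2\alpha\eta)^{s-1})\delta_0^2$; interpreting the $o(\cdot)$ as a sufficiently rapidly decaying quantity (which is the intended meaning in the hypothesis), this is dominated by $(1-2\alpha\eta)^s \delta_0^2$ for all $s \le T$. Combining the two pieces and the initial-error term gives $\E[\|z^s-z^*\|^2] \le (1-2\alpha\eta)^s \delta_0^2 + \epsilon/\alpha$, as required.

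The main obstacle is not the algebra—which is a direct conditional-expectation version of Theorem~\ref{thm:tiger-convergence}—but bookkeeping the randomness: one has to make sure the bad-event contribution really is $n^{-\omega(1)}$ uniformly across iterations (so a polynomial deterministic bound on $\|g^s\|$ and $\|z^s\|$ is needed) and that the $o((1-2\alpha\eta)^s)$ terms in $\epsilon_s$, once summed over polynomially many steps, do not swamp the geometric decay $(1-2\alpha\eta)^s \delta_0^2$. Both points hold for the update rules in Sections~\ref{sec:update_rule}--\ref{sec:remove} because $\|g^s\|$ is polynomially bounded and the sampling error $\epsilon_s$ shrinks with the desired target accuracy.
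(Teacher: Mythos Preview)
Your approach is essentially the same as the paper's: the paper's entire proof reads ``identical to that of Theorem~\ref{thm:tiger-convergence} except that we take the expectation of both sides.'' You flesh this out by explicitly conditioning on $z^s$, splitting into the good event $\calE_s$ and its complement, and then unrolling the recursion---which is exactly the conditional-expectation version of the deterministic argument.

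If anything, your writeup is more careful than the paper's one-line proof: the paper does not explicitly account for the $n^{-\omega(1)}$-probability failure event at all (it presumably absorbs it silently), whereas you note that a crude $\poly(n)$ bound on $\|g^s\|$ and $\|z^s\|$ makes the bad-event contribution negligible. You also correctly flag the only real subtlety---that summing the $o((1-2\alpha\eta)^t)$ contributions over $s$ steps must not overwhelm $(1-2\alpha\eta)^s\delta_0^2$---which the paper leaves implicit in its informal $o(\cdot)$ hypothesis.
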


\noindent The proof is identical to that of Theorem~\ref{thm:tiger-convergence} except that we take the expectation of both sides.  
\subsection{Proof of Theorem~\ref{thm:simplestupdate}}

In order to prove Theorem~\ref{thm:simplestupdate}, we proceed in two steps. First we show when $\tilA^s$ is $(\delta_s, 2)$-\near to $\trA$, the approximate gradient is $(\alpha,\beta,\epsilon_s)$-correlated-whp with optimal solution $\trA$, with $\epsilon_s \le O(k^2/mn)+\alpha \cdot o(\delta_s^2)$. This allows us to use Theorem~\ref{thm:random-tiger-convergence} as long as we can guarantee the spectral norm of $\tilA^s-\trA$ is small. Next we show a version of Lemma~\ref{lem:nowhiten} which works even with the random approximate gradient, hence the nearness property is preserved during the iterations. These two steps are formalized in the following two lemmas, and we defer the proofs until the end of the section.

%\begin{corollary}\label{cor:one-step}
%If $\tilA^s$ is $(2\delta,2)$-\near to $\trA$ and $\eta \le \min_i (p_iq_i(1-\delta)) = O(m/k)$, then $\tilA^s$ is $(\Omega(k/m), \Omega(m/k), O(k^3/mn^2))$-correlated with $\tilA^s$, and further
%$$ \| \tilA^{s+1}_i-\tilA_i\|^2 \leq (1-2\alpha\eta) \|\tilA^s_i-\tilA_i\|^2 +O(\eta k^2/n^2)$$
%\end{corollary}

\begin{lemma}
\label{lem:stepconcentration}
Suppose $\tilA^s$ is $(2\delta,2)$-\near to $\trA$ and $\eta \le \min_i (p_iq_i(1-\delta)) = O(m/k)$, then $\widehat{g}^s_i$ as defined in Algorithm~\ref{eqn:simplestupdate} is $(\alpha, \beta, \epsilon_s)$-correlated-whp with $\trA_i$ with $\alpha = \Omega(k/m)$, $\beta = \Omega(m/k)$ and $\epsilon_s \le \alpha \cdot  o(\delta_s^2) + O(k^2/mn)$.
\end{lemma}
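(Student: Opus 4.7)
The plan is to show that $\widehat{g}^s_i$ concentrates tightly around its expectation $g^s_i = \E[\widehat{g}^s_i]$ and then to piggyback on the correlation bound for $g^s_i$ already established in Corollary~\ref{cor:one-step}. Concretely, I would write $\widehat{g}^s_i = g^s_i + W_i$ where $W_i = \frac{1}{p}\sum_{j=1}^p (Y^{(j)} - \E Y^{(j)})$ is the sampling noise and $Y^{(j)} := (y^{(j)} - \tilA^s \tilx^{(j)})\sgn(\tilx^{(j)}_i)$. To apply a vector Bernstein inequality to $W_i$, I would establish two moment bounds on $Y^{(j)}$: (a) an essentially almost-sure bound $\|Y^{(j)}\| \le R = O(\sqrt{mk/n})$, using that $Y^{(j)}$ vanishes unless $i \in \mbox{supp}(\trx^{(j)})$ (by Lemma~\ref{lem:sign-correct}), that $\|y^{(j)}\| \le \|\trA\|\|\trx^{(j)}\| = O(\sqrt{mk/n})$ with very high probability, and the analogous bound for $\|\tilA^s \tilx^{(j)}\|$ obtained from the nearness hypothesis $\|\tilA^s\|\le 2\|\trA\|$; and (b) a second-moment bound $\E[\|Y^{(j)}\|^2] \le \sigma^2 = O(k^2/n)$, since $Y^{(j)}$ is nonzero only with probability $q_i = \Theta(k/m)$ and when nonzero has squared norm $O(mk/n)$.

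With $p = \tilde{\Omega}(mk)$ fresh samples, vector Bernstein then yields $\|W_i\| \le \tilde{O}(\sqrt{\sigma^2/p} + R\log n /p) = \tilde{O}(\sqrt{k/mn})$ with probability $1 - n^{-\omega(1)}$. Combining this with the decomposition extracted from the proof of Corollary~\ref{cor:one-step}, namely $g^s_i = 4\alpha(\tilA^s_i - \trA_i) + v$ where $4\alpha = p_i q_i = \Theta(k/m)$ and $\|v\| \le \alpha\|\tilA^s_i - \trA_i\| + \zeta_0$ with $\zeta_0 = O(k^2/mn)$, we obtain on the good event
\[
\widehat{g}^s_i = 4\alpha(\tilA^s_i - \trA_i) + \hat{v}, \qquad \|\hat{v}\| \le \alpha\|\tilA^s_i - \trA_i\| + \hat{\zeta},
\]
with $\hat{\zeta} = \zeta_0 + \tilde{O}(\sqrt{k/mn})$. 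Invoking the deterministic Lemma~\ref{lem:closetocorrelated} on this event then certifies that $\widehat{g}^s_i$ is $(\alpha,\,1/(100\alpha),\,\hat{\zeta}^2/\alpha)$-correlated with $\trA_i$. Using $(a+b)^2 \le 2a^2 + 2b^2$ and $\alpha = \Omega(k/m)$, one checks $\hat{\zeta}^2/\alpha \le O(k^3/mn^2) + \tilde{O}(1/n) \le O(k^2/mn) + \alpha \cdot o(\delta_s^2)$ in our sparsity regime $k \le O^*(\sqrt{n}/\mu\log n)$, matching the claimed bound on $\epsilon_s$.

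The main technical obstacle is producing tight-enough moment bounds $R$ and $\sigma^2$; in particular, the worst-case bound $R$ is only valid on an event of probability $1 - n^{-\omega(1)}$, so some care is needed to truncate the random variables (replacing $Y^{(j)}$ by its restriction to the typical event) and to verify that the contribution of the atypical event to both $\E[Y^{(j)}]$ and $\E[\|Y^{(j)}\|^2]$ is negligible. This keeps $\E[Y^{(j)}]$ equal to the $g^s_i$ identified in Lemma~\ref{lem:OF-simplified-update} up to a $\gamma$-term, and in particular preserves the decomposition used in step two. A secondary issue is that the same $\tilA^s$ is reused across all $p$ samples in one iteration, so the $Y^{(j)}$ are only conditionally i.i.d.\ given $\tilA^s$; this is exactly why the theorem statement insists that each step use fresh samples, and it is what enables a clean application of Bernstein to the conditional distribution. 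Finally, extending the bound to hold simultaneously for all $i\in[m]$ only costs an extra $\log m$ factor in the tail, which is absorbed into the $\tilde{O}$.
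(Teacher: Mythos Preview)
Your overall architecture is exactly the paper's: write $\widehat g^s_i = g^s_i + W_i$, control $\|W_i\|$ by Bernstein, then feed the perturbed decomposition into Lemma~\ref{lem:closetocorrelated}. The truncation and fresh-samples remarks are also on point. The gap is in the moment bounds you propose for $Y^{(j)}=(y^{(j)}-\tilA^s\tilx^{(j)})\sgn(\tilx^{(j)}_i)$.

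You bound $\|y-\tilA^s\tilx\|$ by the triangle inequality $\|y\|+\|\tilA^s\tilx\|=O(\sqrt{mk/n})$ and then take $\sigma^2=q_i\cdot R^2=O(k^2/n)$. With $p=\widetilde\Omega(mk)$ this yields $\|W_i\|\le\widetilde O(\sqrt{k/(mn)})$, hence $\hat\zeta^2/\alpha=\widetilde O(1/n)$. But the lemma asks for $\epsilon_s\le O(k^2/(mn))+\alpha\cdot o(\delta_s^2)$, and $\widetilde O(1/n)\le O(k^2/(mn))$ would require $k\ge\widetilde\Omega(\sqrt m)$, which fails throughout the regime $k\le O^*(\sqrt n/\mu\log n)$ (and for small $k$ fails badly). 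Your final sentence simply asserts the needed inequality; it does not hold with your bounds. Note also that your variance estimate has no dependence on $\delta_s$, so there is no mechanism in your argument to produce the $\alpha\cdot o(\delta_s^2)$ term at all.

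The missing idea is that $y-\tilA^s\tilx$ is a \emph{residual}, not a generic vector: on the good decoding event one has
\[
y-\tilA^s\tilx=(\trA_S-\tilA^s_S(\tilA^s_S)^T\trA_S)\,\trx_S,
\]
and the operator $\trA_S-\tilA^s_S(\tilA^s_S)^T\trA_S$ is small because $\tilA^s$ is $\delta_s$-close to $\trA$ and $\trA$ is incoherent. The paper (Claims~\ref{claim:stepinfinitynorm} and~\ref{claim:var}) shows, conditionally on $i\in S$, that $\|y-\tilA^s\tilx\|\le\widetilde O(\mu k/\sqrt n+\sqrt k\,\delta_s+k\delta_s^2)$ and $\E[\|y-\tilA^s\tilx\|^2\mid i\in S]\le O(k^2\delta_s^2+k^3/n)$. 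Applying Bernstein to the conditioned variable $Z=(y-\tilA^s\tilx)\sgn(\tilx_i)\mid i\in S$ over the $\widetilde\Omega(k^2)$ samples with $i\in S$ gives $\|\widehat g_i-g_i\|\le O(k/m)\bigl(o(\delta_s)+O(\sqrt{k/n})\bigr)$, which after squaring and dividing by $\alpha$ is exactly $\alpha\cdot o(\delta_s^2)+O(k^2/(mn))$. If you redo your step (a)--(b) with these residual bounds (and, as the paper does, condition on $i\in S$ before applying Bernstein so the sparsity doesn't cost you), the rest of your write-up goes through.
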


%The reason we need to use $\delta_s$ instead of $\|\tilA^s_i-\trA_i\|$ is because different columns may converge at different speeds. 

%\begin{lemma}
%\label{lem:stepconcentration}
%Suppose $\tilA^s$ is $(\delta,2)$-\near to $\trA$ and $\eta \le \min_i (p_iq_i(1-\delta)) = O(m/k)$, then Let $g_i = \sum_{j=1}^p (y^{(j)} - \tilA^s \trxs{j}) \sgn({\trx_i}^{(j)})$, and let $w_i$ be the number of samples $y^{(j)}(j\in[p])$ that have $i\in S$. If $p = \widetilde{\Omega}(m)$ then with high probability $w_i = \widetilde{\Omega}(k)$ and $\|g_i/w_i - \E[(y - \tilA^s \wtilx|i\in S]\| \le (o(\delta_s)+O(\sqrt{k/n}))$. In particular, this implies the vector is $(\alpha, \beta, \epsilon_s)$-correlated-whp with $A_i$ with $\alpha = \Omega(k/m)$, $\beta = \Omega(m/k)$ and $\epsilon_s \le \alpha \cdot  o(\delta_s^2) + O(k^2/mn)$.
%\end{lemma}

%The proof of this lemma treats $g_i$ as sum of independent random vectors, and use vector Bernstein's inequality to give a concentration bound. 

\begin{lemma}\label{lem:nearnessconcentration}
Suppose $\tilA^s$ is $(\delta_s, 2)$-\near to $\trA$ with $\delta_s = O^*(1/\log n)$, and number of samples used in step $s$ is $p = \widetilde{\Omega}(mk)$, then with high probability $\tilA^{s+1}$ satisfies $\|\tilA^{s+1}-\trA\| \le 2\|\trA\|$.
\end{lemma}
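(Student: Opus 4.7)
\textbf{Proof proposal for Lemma~\ref{lem:nearnessconcentration}.}

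The plan is to reduce the finite-sample statement to the infinite-sample statement (Lemma~\ref{lem:nowhiten}) by treating $\widehat{g}^s - g^s$ as a spectral-norm perturbation and controlling it with a matrix concentration inequality. Writing $\tilA^{s+1} - \trA = (\tilA^s - \eta g^s - \trA) \;-\; \eta\,(\widehat{g}^s - g^s)$, we already know from the calculation in the proof of Lemma~\ref{lem:nowhiten} (where $U$, $V$ and the $(1-\eta p_i q_i)$-diagonal contributions were bounded) that $\|\tilA^s - \eta g^s - \trA\| \le (2 - \Omega(\eta k/m))\|\trA\|$, so there is a slack of $\Omega(\eta k/m)\|\trA\| = \Omega(\eta k /\sqrt{mn})$ to absorb the sampling error. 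Thus it suffices to prove that with probability $1 - n^{-\omega(1)}$, $\|\widehat{g}^s - g^s\| \le o(k/m)\|\trA\| = o(k/\sqrt{mn})$ when $p = \widetilde{\Omega}(mk)$.

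For the concentration step, write $\widehat{g}^s = \frac{1}{p}\sum_{i=1}^p Z^{(i)}$ with $Z^{(i)} = (y^{(i)} - \tilA^s \tilx^{(i)}) \sgn(\tilx^{(i)})^T$, a random $n\times m$ matrix. First I would get a deterministic envelope on $\|Z^{(i)}\|$: conditioning on the very-high-probability event $\calF_{\trx^{(i)}}$ from Lemma~\ref{lem:sign-correct}, the matrix $Z^{(i)}$ has at most $k$ nonzero columns, each equal to $\pm(I - \tilA_S^s \tilA_S^{s,T})\trA\trx^{(i)}$, whose norm is $O(\sqrt{k})$ using subgaussianity of $\trx_S$ and $\|\tilA^s\|, \|\trA\| = O(\sqrt{m/n})$. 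Hence $\|Z^{(i)}\| \le \|Z^{(i)}\|_F \le R := \widetilde{O}(k)$ with very high probability, and we can restrict to this event (losing at most $pn^{-\omega(1)}$ in the failure probability by a union bound). Next I would estimate the matrix variance parameters:
\[
\bigl\|\E[Z^{(i)} (Z^{(i)})^T]\bigr\|, \quad \bigl\|\E[(Z^{(i)})^T Z^{(i)}]\bigr\|.
\]
For the first one, $\E[Z Z^T] = \E[\|\sgn(\tilx)\|^2 (y-\tilA^s\tilx)(y-\tilA^s\tilx)^T] \preceq k\,\E[(y-\tilA^s\tilx)(y-\tilA^s\tilx)^T]$, and using $y - \tilA^s\tilx = (I - \tilA_S^s \tilA_S^{s,T})\trA_S \trx_S$ together with subconditioning on $S$ (exactly the style of calculation used in Lemma~\ref{lem:OF-simplified-update}) yields a spectral-norm bound of $\widetilde{O}(k^2/n)$. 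A symmetric computation handles $\E[Z^T Z]$, giving $\sigma^2 \le \widetilde{O}(k^2)$ after multiplying through by $p$ and dividing by $p^2$, i.e. the variance of $\widehat{g}^s$ is $\widetilde{O}(k^2/(pn))$.

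Applying matrix Bernstein to $\widehat{g}^s - g^s = \tfrac{1}{p}\sum (Z^{(i)} - \E Z^{(i)})$ gives
\[
\Pr\!\left[\|\widehat{g}^s - g^s\| \ge t\right] \;\le\; (n+m)\exp\!\left(-\,\frac{p t^2/2}{\sigma_0^2 + Rt/3}\right),
\]
with $\sigma_0^2 = \widetilde{O}(k^2/n)$ and $R = \widetilde{O}(k)$. Choosing $t = o(k/\sqrt{mn})$ and $p = \widetilde{\Omega}(mk)$ makes both the variance term $\sigma_0^2/(pt^2)$ and the Bernstein term $R/(pt)$ polynomially small in $n$, which yields the desired $\|\widehat{g}^s - g^s\| = o(k/\sqrt{mn})$ with very high probability. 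Combining with the slack from Lemma~\ref{lem:nowhiten} gives $\|\tilA^{s+1} - \trA\| \le 2\|\trA\|$, completing the proof.

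The main obstacle I anticipate is the variance computation for $\E[Z Z^T]$ and $\E[Z^T Z]$: one must exploit the generative model (bounds on $q_i, q_{i,j}$, incoherence, $\|\trA\| = O(\sqrt{m/n})$, and subgaussianity conditional on the support) to beat the naive $\widetilde{O}(k^2)$ envelope and get a variance that scales like $k^2/n$ in spectral norm. This is precisely what makes the bound $p = \widetilde{\Omega}(mk)$ sufficient rather than the much larger $p = \widetilde{\Omega}(m^2)$ one would get from a worst-case bound. The truncation to the event $\calF_{\trx}$ is routine since its complement has probability $n^{-\omega(1)}$ and can be folded into the failure probability by a union bound over the $p = \poly(n)$ samples.
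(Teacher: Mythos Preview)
Your approach is essentially the paper's: apply matrix Bernstein to $\widehat g^s-g^s=\frac1p\sum_j(Z^{(j)}-\E Z)$ with $Z=(y-\tilA^s\tilx)\sgn(\tilx)^T$, and feed the resulting spectral bound into the slack left by Lemma~\ref{lem:nowhiten}. The paper bounds $\|Z\|\le\widetilde O(k^{3/2})$ (via Claim~\ref{claim:stepinfinitynorm}) and the two variance terms, then concludes $\|\widehat g^s-g^s\|\le O^*(k/m\cdot\sqrt{m/n})$ with $p=\widetilde\Omega(mk)$.

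One correction: the two variance terms are \emph{not} handled by a symmetric computation. For $\E[ZZ^T]$ you use $\sgn(\tilx)^T\sgn(\tilx)=k$ and then bound $\E[(y-\tilA^s\tilx)(y-\tilA^s\tilx)^T]$ as you say. But $\E[Z^TZ]=\E\bigl[\|y-\tilA^s\tilx\|^2\,\sgn(\tilx)\sgn(\tilx)^T\bigr]$ requires a different argument: bound the scalar $\|y-\tilA^s\tilx\|^2$ almost surely and then use $\|\E[\sgn(\tilx)\sgn(\tilx)^T]\|\le O(k/m)$ (a near-diagonal matrix, since the conditional independence in the model kills the off-diagonal signs). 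The paper does exactly this and obtains $\widetilde O(k^3/m)$, which is the dominant variance term, not $\widetilde O(k^2/n)$. With $p=\widetilde\Omega(mk)$ this still gives $\sqrt{\sigma^2/p}=\widetilde O(k/m)$, which fits into the slack under $m=O(n)$, so your conclusion survives---but the ``symmetric computation'' shortcut would not actually produce the bound you wrote down.
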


We will prove these lemmas by bounding the difference between  $\widehat{g}_i^s$  and $g_i^s$ using various concentration inequalities. For example, we will use the fact that $\widehat{g}_i^s$ is close to $g_i^s$ in Euclidean distance.  % = \frac{1}{p}\cdot\sum_{i=1}^p (y^{(i)} - \tilA^s \tilx^{(i)}) \sgn(\tilx^{(i)})^T$ with $\Exp\left[(y - \tilA^s \tilx) \sgn(\tilx)^T\right]$ by concentration inequality. 

\begin{lemma}\label{lem:g_pertubation}
Suppose $\tilA^s$ is $(\delta_s, 2)$-\near to $\trA$ with $\delta_s = O^*(1/\log n)$, and number of samples used in step $s$ is $p = \widetilde{\Omega}(mk)$, then with high probability $\|\widehat{g}_i^s - g_i^s\| \le O(k/m)\cdot (o(\delta_s)+O(\sqrt{k/n}))$. 
\end{lemma}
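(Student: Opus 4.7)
The plan is to apply a vector Bernstein inequality to the $p$ i.i.d.\ summands making up $\widehat g_i^s$; the bulk of the work is then to estimate the per-sample variance and almost-sure norm. Define $Z^{(j)} := (y^{(j)} - \tilA^s \tilx^{(j)})\sgn(\tilx^{(j)}_i)$ so that $\widehat g_i^s - g_i^s = \tfrac{1}{p}\sum_j (Z^{(j)} - \E Z^{(j)})$, and write $B := \tilA^s$, $S^{(j)} := \supp(\trx^{(j)})$. First I would invoke Lemma~\ref{lem:sign-correct} together with a union bound over the $p = \poly(n)$ samples to pass to the event $\calF$ that every sample decodes to the correct support and signs; this event has probability $1 - n^{-\omega(1)}$ and on its complement the contribution to $\widehat g_i^s - g_i^s$ is absorbed into the negligible $\gamma$-type error. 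On $\calF$, the identity derived inside the proof of Lemma~\ref{lem:OF-simplified-update} rewrites each summand as
$$Z^{(j)} \;=\; \mathbf{1}[i \in S^{(j)}]\,(I - B_{S^{(j)}} B_{S^{(j)}}^T)\,y^{(j)}\,\sgn(\trx^{(j)}_i).$$

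Next I would estimate $\sigma^2 := \E\|Z^{(j)}\|^2$ and an a.s.\ bound $M$ on $\|Z^{(j)}\|$. Conditioning first on a support $S \ni i$ and then on the nonzero values, the model assumption $\E[\trx_S \trx_S^T \mid S] = I$ (pairwise independence and unit conditional variance) yields $\E[\|Z^{(j)}\|^2 \mid S] = \|(I-B_SB_S^T)\trA_S\|_F^2$. Writing $(I-B_SB_S^T)\trA_S = (\trA_S - B_S) - B_S(B_S^T\trA_S - I)$ and bounding each piece by combining nearness ($\|\trA_S - B_S\|_F \le \sqrt{k}\,\delta_s$) with incoherence (which gives $|B_i^T\trA_j - \mathbf{1}[i = j]| \le \delta_s + \mu/\sqrt{n}$ for $i,j \in S$), I would obtain $\|(I-B_SB_S^T)\trA_S\|_F = O(k\delta_s + k\mu/\sqrt{n})$, whence
$$\sigma^2 \;\le\; q_i \cdot O\!\left(k^2\delta_s^2 + k^2\mu^2/n\right) \;=\; O\!\left((k^3/m)\,(\delta_s^2 + k/n)\right),$$
using $\mu^2 \le k$ in the parameter regime of the paper. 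For $M$, subgaussian concentration of $\trx_S$ combined with $\|\trA_S\|_{\mathrm{op}} = O(1)$ (from incoherence and Gershgorin on $\trA_S^T \trA_S$) gives $\|y^{(j)}\| \le O(\sqrt{k}\log n)$ with very high probability, and hence $\|Z^{(j)}\| \le M = O(\sqrt{k}\log n)$ uniformly across samples after truncating on this very-high-probability event.

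Finally I would apply a standard vector Bernstein inequality to $\tfrac{1}{p}\sum_j (Z^{(j)} - \E Z^{(j)})$: with probability $1 - n^{-\omega(1)}$,
$$\|\widehat g_i^s - g_i^s\| \;\le\; O\!\left(\sqrt{\sigma^2 \log n/p}\right) + O\!\left(M \log n/p\right).$$
Taking $p = \widetilde{\Omega}(mk)$ with enough polylog headroom absorbed into $\widetilde{\Omega}(\cdot)$, both terms on the right are bounded by $o(k\delta_s/m) + O((k/m)\sqrt{k/n})$, which is exactly the claimed bound. The main obstacle is the second-moment estimate: the naive Frobenius bound $\|(I-B_SB_S^T)\trA_S\|_F^2 \le \|\trA_S\|_F^2 = k$ would yield only $\sigma^2 = O(k^2/m)$ and a final deviation of order $\sqrt{k}/m$, which is too weak to beat $(k/m)\sqrt{k/n}$; extracting the extra $\delta_s^2 + \mu^2/n$ savings requires carefully exploiting the cancellation $B_S^T \trA_S \approx I$ in both the diagonal and off-diagonal entries, which is exactly where nearness and incoherence work together.
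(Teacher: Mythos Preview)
Your overall strategy---vector Bernstein applied to the i.i.d.\ summands $Z^{(j)}$---is the right shape, and the paper does essentially the same thing (Claim~\ref{lem:sample_conentration} together with Claims~\ref{claim:stepinfinitynorm} and~\ref{claim:var}). But two of your quantitative inputs are too weak, and with them the argument does not close.

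\textbf{The almost-sure bound.} You take $M = O(\sqrt{k}\log n)$ by bounding $\|(I-B_SB_S^T)y\| \le \|y\|$. This throws away exactly the cancellation you correctly highlight for the variance: since $y = \trA_S\trx_S$ lies (nearly) in the column span of $B_S$, the vector $(I-B_SB_S^T)y$ is much smaller than $y$. The paper's Claim~\ref{claim:stepinfinitynorm} exploits this to get $\|Z\| \le \widetilde{O}(\mu k/\sqrt{n} + k\delta_s^2 + \sqrt{k}\delta_s)$, i.e.\ roughly $\widetilde{O}(\sqrt{k}\,\delta_s)$. With your loose $M$, the term $M\log n/p$ is $\widetilde O(1/(m\sqrt{k}))$, which is independent of $\delta_s$ and hence cannot be $o((k/m)\delta_s)$; and it fails to be $O((k/m)\sqrt{k/n})$ unless $k^2 \gtrsim \sqrt{n}$, which is not assumed (recall $k$ can be as small as $\Omega^*(\log m)$).

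\textbf{The variance bound and the claim $\mu^2 \le k$.} Your off-diagonal estimate $|B_j^T\trA_\ell| \le \delta_s + \mu/\sqrt{n}$ is a worst-case bound over $S$, giving the second-moment contribution $O(k^2\mu^2/n)$. You then assert $\mu^2 \le k$, but this is not implied by the paper's hypotheses: the constraint is only $\mu k = O^*(\sqrt{n}/\log n)$, so $\mu^2$ can be as large as $\Theta(n/(k^2\log^2 n)) \gg k$ when $k$ is polylogarithmic. The paper (Claim~\ref{claim:var}) sidesteps this by taking the expectation over the random support $S$ and invoking the \emph{spectral} part of $(\delta_s,2)$-nearness: $\sum_{j,\ell}\langle \tilA_j,\trA_\ell\rangle^2 = \|\tilA^T\trA\|_F^2 \le m\|\tilA\|^2\|\trA\|^2 = O(m^3/n^2)$, which yields the tighter $O(k^3/n)$. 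Your bound uses only $\delta_s$-closeness and incoherence, never the ``$2$'' in $(\delta_s,2)$-near; that is precisely the missing ingredient.

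One structural difference worth noting: the paper first conditions on $i\in S$ (restricting to the $|W|\approx pq_i = \widetilde\Omega(k^2)$ relevant samples) before applying Bernstein, because Bernstein is slack for variables that are zero with probability $1-q_i$. Your direct approach over all $p$ samples can be made to work once the two estimates above are tightened, but the conditioning makes the bookkeeping cleaner.
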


Using the above lemma,  Lemma~\ref{lem:stepconcentration} now follows the fact that $g_i^s$ is correlated with $\trA_i$. %then we have $\widehat{g}_i^s$ is correlated-whp with $\trA_i$ as well. 
The proof of Lemma~\ref{lem:nearnessconcentration} mainly involves using matrix Bernstein's inequality to bound the fluctuation of the spectral norm of $A^{s+1}$.

\begin{proof}[Proof of Theorem~\ref{thm:simplestupdate}]
The theorem now follows immediately by combining Lemma~\ref{lem:stepconcentration} and Lemma~\ref{lem:nearnessconcentration}, and then applying Theorem~\ref{thm:random-tiger-convergence}. 
\end{proof}

\subsection{Sample Complexity for Algorithm~\ref{alg:init}}\label{sec:init_sample_complexity}

For the initialization procedure, when computing the reweighted covariance matrix $M_{u,v}$ we can only take the empirical average over samples. Here we show with only $\widetilde{\Omega}(mk)$ samples, the difference between the true $M_{u,v}$ matrix and the estimated $M_{u,v}$ matrix is already small enough.

\begin{lemma}
\label{lem:initconcentration}
In Algorithm~\ref{alg:init}, if $p = \widetilde{\Omega}(mk)$ then with high probability for any pair $u,v$ consider by Algorithm~\ref{alg:init}, we have  $\|M_{u,v} - \widehat{M}_{u,v}\| \le O^*(k/m\log n)$. 
\end{lemma}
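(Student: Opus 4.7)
The plan is to apply the matrix Bernstein inequality to $\widehat{M}_{u,v} - M_{u,v} = \frac{1}{p_2}\sum_{i=1}^{p_2} Z^{(i)}$, where $Z^{(i)} = X^{(i)} - \E X^{(i)}$ and $X^{(i)} = \inner{y^{(i)}, u}\inner{y^{(i)}, v}\, y^{(i)}(y^{(i)})^T$. I first condition on $u$ and $v$ (the $y^{(i)}$ are fresh and independent of them) and prove concentration with probability $1 - n^{-\omega(1)}$; since Algorithm~\ref{alg:init} examines at most $O(p_1^2) = \widetilde{O}(m^2)$ pairs, a union bound then yields the stated conclusion, the extra log factor being absorbed into the $\widetilde{\Omega}(mk)$ sample budget.

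For the a.s.\ bound $L$ on $\|X^{(i)}\|$ required by Bernstein, the generative model gives $\|y\|^2 \le O(k)$ whp by incoherence together with $\|\trx\|^2 = O(k)$, while Claim~\ref{claim:betabound} (applied to both $u$ and $v$) combined with subgaussianity of $\trx_S$ yields $|\inner{y, u}|, |\inner{y, v}| \le \widetilde{O}(\sqrt{k})$ whp. Hence $\|X^{(i)}\| \le L := \widetilde{O}(k^2)$ whp, and the exceptional tail of probability $n^{-\omega(1)}$ is handled by a standard truncation argument.

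The crux is bounding the variance $V := \|\E[X^{(i)2}]\| = \|\E[\inner{y,u}^2 \inner{y,v}^2 \|y\|^2 yy^T]\|$. I peel off $\|y\|^2 \le O(k)$ (again via truncation of the subgaussian tail) to reduce to $O(k)\cdot \|\E[\inner{y,u}^2\inner{y,v}^2 yy^T]\|$. Writing $f = {\trA}^T u$, $g = {\trA}^T v$, and $y = \trA_S \trx_S$, the inner expectation over $\trx_S$ splits by Isserlis-type pairing calculations using independence and sign-symmetry of the $\trx_i$; the leading contribution is $\E_S[\|f_S\|^2\|g_S\|^2 \trA_S \trA_S^T]$, and the other surviving monomials (involving $(f_S^T g_S)^2$, off-diagonal $f_i g_j$ terms, etc.) are of the same or smaller operator-norm order by incoherence. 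PSD-majorizing $\trA_S\trA_S^T \preceq \trA\,\trA^T$ and expanding $\E_S[\|f_S\|^2\|g_S\|^2] = \sum_{i,j} q_{i,j} f_i^2 g_j^2$ with $q_{i,j} = \Theta(k^2/m^2)$ for $i\ne j$ and $q_i = \Theta(k/m)$, then using Claim~\ref{claim:betabound} ($\|f\|^2,\|g\|^2 = O(k)$; each of $f, g$ has at most $k$ entries of magnitude $O(\log m)$ and the rest at most $O(\mu k \log m/\sqrt{n})$), yields $\E_S[\|f_S\|^2\|g_S\|^2] = \widetilde{O}(k^2/m)$. Multiplying by $\|\trA\|^2 = O(m/n)$ and the prefactor $O(k)$ gives $V \le \widetilde{O}(k^3/n) = \widetilde{O}(k^3/m)$ (using $m = O(n)$).

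Finally, matrix Bernstein at threshold $t = p_2\cdot O^*(k/(m\log n))$ with $L = \widetilde{O}(k^2)$ and $p_2 V = \widetilde{O}(p_2 k^3/m)$ produces failure probability at most $2n\exp\bigl(-\Omega(p_2/(mk\,\polylog(n)))\bigr)$, after one checks that $Lt$ and $p_2 V$ are of comparable order so neither Bernstein regime is binding; hence $p_2 = \widetilde{\Omega}(mk)$ suffices. The main obstacle is precisely the variance bound: naively replacing $\inner{y,u}^2\inner{y,v}^2$ by its whp maximum $\widetilde{O}(k^2)$ loses a factor of $k$ and would force $p_2 = \widetilde{\Omega}(mk^2)$, so one must keep these factors under the expectation and exploit the detailed structure of $f$ and $g$ supplied by Claim~\ref{claim:betabound}.
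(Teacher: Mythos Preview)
Your proposal is correct and follows the same matrix-Bernstein template as the paper, with the same $L=\widetilde{O}(k^2)$ and the same target variance $V=\widetilde{O}(k^3/m)$. The difference lies in how the variance is obtained. You peel off only the factor $\|y\|^2$ and then carry out an Isserlis-style expansion of $\E[\langle u,y\rangle^2\langle v,y\rangle^2\,yy^T]$, tracking the structure of $f={\trA}^Tu$ and $g={\trA}^Tv$ via Claim~\ref{claim:betabound}. The paper instead peels off \emph{two} scalar factors, $\|y\|^2\le\widetilde{O}(k)$ and $\langle u,y\rangle^2\le\widetilde{O}(k)$, and then observes that the remaining expectation is literally $M_{v,v}=\E[\langle v,y\rangle^2 yy^T]$, whose operator norm is $O(k/m)$ by Lemma~\ref{lem:initform} applied with $u=v$. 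This reuse of Lemma~\ref{lem:initform} collapses the whole variance computation to one line and sidesteps the sixth-moment bookkeeping you sketch. Your warning that ``naively replacing $\inner{y,u}^2\inner{y,v}^2$ by its whp maximum $\widetilde{O}(k^2)$ loses a factor of $k$'' is well taken, but the paper's trick is to replace only \emph{one} of the two inner-product factors: keeping the single factor $\langle v,y\rangle^2$ under the expectation is already enough to recover the $O(k/m)$ scale. Your route works and is self-contained; the paper's route is shorter but leans on the prior structural lemma.
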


The proof of this Lemma is deferred to Section~\ref{sec:initconcentration}.  notice that although in Algorithm~\ref{alg:init}, we need to estimate $M_{u,v}$ for many pairs $u$ and $v$, the samples used for different pairs do not need to be independent. Therefore we can partition the data into two parts, use the first part to sample pairs $u,v$, and use the second part to estimate $M_{u,v}$. In this way, we know that for each pair $u,v$ the whole initialization algorithm also takes $\widetilde{\Omega}(mk)$ samples. Now we are ready to prove Theorem~\ref{thm:initmain}.

\begin{proof}[Proof of Theorem~\ref{thm:initmain}]
First of all, the conclusion of Lemma~\ref{lem:init} is still true for $\widehat{M}_{u,v}$ when $p = \widetilde{\Omega}(mk)$. To see this, we could simply write $$\widehat{M}_{u,v} = q_ic_i\beta_i\beta'_i\trA_i{\trA_i}^T + \underbrace{(E_1+E_2+E_3) + (\widehat{M}_{u,v}  - M_{u,v})}_{\mbox{perturbation}} $$where $E_1,E_2,E_3$ are the same as the proof of Lemma~\ref{lem:init}. We can now view $\widehat{M}_{u,v}  - M_{u,v}$ as an additional perturbation term with the same magnitude. We have that when $U\cap V = \{i\}$ the top singular vector of $M_{u,v}$ is $O^*(1/\log n)$-close to $\trA_i$. Similarly, we can prove the conclusion of Lemma~\ref{lem:initverify} is also true for $\widehat{M}_{u,v}$. Note that we actually choose $p$ such that the perturbation of $\widehat{M}_{u,v}$ matches noise level in Lemma~\ref{lem:initverify}. 
Finally, the proof of the theorem follows exactly that of the infinite sample case given in Theorem~\ref{thm:initmain_infinite}, except that we invoke the finite sample counterparts of Lemma~\ref{lem:initform} and Lemma~\ref{lem:initverify} that we gave above.
\end{proof}

\subsection{Proofs of Auxiliary Lemmas}

Here we prove Lemma~\ref{lem:stepconcentration}, Lemma~\ref{lem:nearnessconcentration}, and Lemma~\ref{lem:initconcentration} which will follow from various versions of the Bernstein inequality. We first recall Bernstein's inequality that we are going to use several times in this section.  Let $Z$ be a random variable (which could be a vector or a matrix) chosen from some distribution $\mathcal{D}$ and let $Z^{(1)}, Z^{(2)}, ..., Z^{(p)}$ be $p$ independent and identically distributed samples from $\mathcal{D}$.  Bernstein's inequality implies that if $\Exp[Z] = 0$ and for each $j$, $\|Z^{(j)}\|\le R$ almost surely and $\E[(Z^{(j)})^2] \le \sigma^2$, then 
\begin{eqnarray}
\frac{1}{p}\left\|\sum_{i=1}^p Z^{(i)}\right\| \leq \widetilde{O}\left(\frac{R}{p} + \sqrt{\frac{\sigma^2}{p}}\right) \label{eqn:bernstein}
\end{eqnarray} 
 with high probability. The proofs below will involve computing good bounds on $R$ and $\sigma^2$. However in our setting, the random variables will not be bounded almost surely. We will use the following technical lemma to handle this issue. 

\begin{lemma}\label{lem:general_bernstein}
Suppose that the distribution of $Z$ satisfies $\Pr[\|Z\| \ge R (\log (1/\rho))^C] \le 1-\rho$ for some constant $C > 0$, then 
\begin{itemize}

\item[(a)] If $p = n^{O(1)}$ then $\|Z^{(j)}\| \leq \widetilde{O}(R)$ holds for each $j$ with high probability and 

\item[(b)] $\|\E[Z \mathbf{1}_{\|Z\| \ge \widetilde{\Omega}(R)}]\| = n^{-\omega(1)}$. 
\end{itemize}
\end{lemma}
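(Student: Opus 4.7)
\medskip

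\noindent\textbf{Proof plan for Lemma~\ref{lem:general_bernstein}.}

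\medskip

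The plan is to read the hypothesis as a sub-Weibull tail bound (the ``$1-\rho$'' on the right-hand side is a typo for $\rho$): for every $\rho \in (0,1)$,
\[
\Pr\bigl[\|Z\| \ge R (\log(1/\rho))^C\bigr] \le \rho,
\]
which, after inverting, is equivalent to the tail bound
\[
\Pr[\|Z\| \ge t] \le \exp\bigl(-(t/R)^{1/C}\bigr) \qquad \text{for } t \ge R.
\]
Both parts of the lemma will fall out of this inverted form.

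\medskip

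\textbf{Part (a).} I would set $\rho = n^{-\log n}$, which is clearly $n^{-\omega(1)}$, so that $(\log(1/\rho))^C = (\log n)^{2C} = \polylog(n)$. The hypothesis then gives, for any single sample $Z^{(j)}$,
\[
\Pr\bigl[\|Z^{(j)}\| \ge R \cdot \polylog(n)\bigr] \le n^{-\log n}.
\]
A union bound over the $p = n^{O(1)}$ samples shows that with probability at least $1 - p \cdot n^{-\log n} = 1 - n^{-\omega(1)}$, every $Z^{(j)}$ satisfies $\|Z^{(j)}\| \le R \cdot \polylog(n) = \widetilde O(R)$. That is essentially all there is to part~(a); it is a one-line union bound.

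\medskip

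\textbf{Part (b).} For the truncated-mean bound, I would first pass from vector-valued to scalar-valued quantities via the triangle inequality for expectations:
\[
\bigl\|\E[Z\,\mathbf{1}_{\|Z\|\ge T}]\bigr\| \le \E\bigl[\|Z\|\,\mathbf{1}_{\|Z\|\ge T}\bigr],
\]
with $T = R\cdot(\log n)^{C'}$ for a sufficiently large constant $C'$ to be chosen (so that $T = \widetilde\Omega(R)$). Using the layer-cake identity,
\[
\E\bigl[\|Z\|\,\mathbf{1}_{\|Z\|\ge T}\bigr] \;=\; T\,\Pr[\|Z\|\ge T] \;+\; \int_T^{\infty} \Pr[\|Z\|\ge t]\,dt.
\]
Plugging in the inverted tail bound $\Pr[\|Z\|\ge t]\le \exp(-(t/R)^{1/C})$, both the boundary term and the integral are dominated by $\exp(-(\log n)^{C'/C})$ up to polynomial factors. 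Choosing $C' > C$ (e.g.\ $C' = C+1$) makes $(\log n)^{C'/C} = \omega(\log n)$, so the entire right-hand side is $n^{-\omega(1)}$, as desired.

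\medskip

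\textbf{Anticipated obstacle.} There is essentially no hard step; the only care needed is keeping the parametrization straight when inverting the tail bound from its ``threshold-as-a-function-of-probability'' form to its ``probability-as-a-function-of-threshold'' form, and verifying that the integral $\int_T^\infty \exp(-(t/R)^{1/C})\,dt$ is bounded (by a change of variables $u=(t/R)^{1/C}$) by $R\cdot (\log n)^{C'} \cdot \exp(-(\log n)^{C'/C})$, which remains $n^{-\omega(1)}$ even after multiplication by any polynomial factor. No additional probabilistic machinery is required beyond the triangle inequality and layer-cake identity.
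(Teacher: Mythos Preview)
Your proposal is correct and matches the paper's own proof essentially line for line: part~(a) is handled by setting $\rho = n^{-\log n}$ and union-bounding over the $p = n^{O(1)}$ samples, and part~(b) is handled by the triangle inequality followed by the layer-cake identity $\E[\|Z\|\mathbf{1}_{\|Z\|\ge T}] = T\Pr[\|Z\|\ge T] + \int_T^\infty \Pr[\|Z\|\ge t]\,dt$, with the same threshold $T = R(\log n)^{2C}$. Your observation that the ``$1-\rho$'' in the hypothesis must be a typo for ``$\rho$'' is also correct, as the paper's proof only makes sense under that reading.
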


\noindent In particular, if $\frac{1}{p} \sum_{j=1}^p Z^{(j)} (1-\mathbf{1}_{\|Z^{(j)}\| \ge \widetilde{\Omega}(R)})$ is concentrated with high probability, then $\frac{1}{p}\sum_{j=1}^p Z^{(j)}$ is too.

\begin{proof}
The first part of the lemma follows from choosing $\rho = n^{-\log n}$ and applying a union bound. The second part of the lemma follows from 
\begin{align*}
\E[Z\mathbf{1}_{\|Z\| \ge R\log^{2c} n}]& \le \E[\|Z\|\mathbf{1}_{\|Z\| \ge R\log^{2c} n}]\\
& = R\log^{2c}n\Pr[\|Z\|\ge R\log^{2c}n] + \int_{R\log^{2c}n}^\infty \Pr[\|Z\|\ge t] dt = n^{-\omega(1)}.
\end{align*}
and this completes the proof. 
\end{proof}

All of the random variables we consider are themselves products of subgaussian random variables, so they satisfy the tail bounds in the above lemma. In the remaining proofs we will focus on bounding the norm of these variables with high probability.
\subsubsection{ Proof of Lemma~\ref{lem:g_pertubation} and Lemma~\ref{lem:stepconcentration} }
%\begin{proof}[Proof of Lemma~\ref{lem:stepconcentration}] 
Since $s$ is fixed throughout, we will use $\tilA$ to denote $\tilA^s$. Also we fix $i$ in this proof. Let $S$ denote the support of $\trx$. Note that $\widehat{g}_i$ is a sum of random variable of the form $(y-\tilA \tilx)\sgn(\tilx_i)$. Therefore we are going to apply Bernstein inequality for proving $\widehat{g}_i$ concentrates around its mean $g_i$. Since Bernstein is typically not tight for sparse random varaibles like in our case. We study the concentration of the random variable $Z := (y-\tilA \tilx)\sgn(\tilx_i)\mid i\in S$ first. We prove the following technical lemma at the end of this section.  

\begin{claim}\label{lem:sample_conentration}
Let $Z^{(1)},\dots, Z^{(\ell)}$ be i.i.d random variables with the same distribution as $Z:= (y-\tilA \tilx)\sgn(\tilx_i)\mid i\in S$. Then when $\ell = \widetilde{\Omega}(k^2)$, 
$$\|\frac{1}{\ell}\sum_{j=1}^{\ell} Z^{(j)}- \Exp[Z] \|\le o(\delta_s)+ O(\sqrt{k/n})$$ 
\end{claim}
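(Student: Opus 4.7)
The plan is to apply the vector Bernstein inequality (\ref{eqn:bernstein}), together with the truncation device of Lemma~\ref{lem:general_bernstein}, to the i.i.d.\ sum $\tfrac{1}{\ell}\sum_{j=1}^\ell Z^{(j)}$. To do this I need (i) a norm bound $R$ with $\|Z\|\le R$ with very high probability, and (ii) a variance bound $\sigma^2 \ge \E[\|Z\|^2]$. Once these are in hand, Bernstein gives
$$\Big\|\tfrac{1}{\ell}\sum Z^{(j)} - \E[Z]\Big\|\le \widetilde O\!\left(\tfrac{R}{\ell}+\sqrt{\tfrac{\sigma^2}{\ell}}\right),$$
and the claim will follow by plugging in $\ell=\widetilde\Omega(k^2)$.

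For both $R$ and $\sigma^2$, I would first condition on the event $\calF_{\trx}$ that $\sgn(\tilx)=\sgn(\trx)$; this holds with probability $1-n^{-\omega(1)}$ by Lemma~\ref{thm:supp}, and so by Lemma~\ref{lem:general_bernstein}(b) both its truncated sample contribution and its contribution to $\E[Z]$ are negligible. Under $\calF_{\trx}$, the same closed form used in Lemma~\ref{lem:OF-simplified-update} gives $Z=(I-\tilA_S\tilA_S^T)\trA_S\trx_S\sgn(\trx_i)$ conditioned on $i\in S$. Combining the subgaussian concentration of $\trx_S$ (a $k$-dimensional subgaussian vector) with the standard bounds $\|\trA_S\|=O(1)$ (from incoherence) and $\|I-\tilA_S\tilA_S^T\|=O(1)$ (from nearness) then yields $\|Z\|=\widetilde O(\sqrt k)$ with very high probability, so $R=\widetilde O(\sqrt k)$. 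For the variance, pairwise independence and unit variance of the nonzero coordinates of $\trx_S$ reduce the conditional second moment to $\E[\|Z\|^2\mid S,i\in S]=\|(I-\tilA_S\tilA_S^T)\trA_S\|_F^2$. I would then split $(I-\tilA_S\tilA_S^T)\trA_S = \tilA_S(I-\tilA_S^T\tilA_S)+(I-\tilA_S\tilA_S^T)(\trA_S-\tilA_S)$: the second summand has Frobenius norm at most $O(\sqrt k\,\delta_s)$ by nearness, while the first is controlled by observing that $I-\tilA_S^T\tilA_S$ has diagonal entries of size $O(\delta_s)$ and off-diagonal entries of size $O(\mu/\sqrt n+\delta_s)$ (from incoherence of $\trA$ combined with $\delta_s$-closeness of $\tilA$). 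Packaging these estimates gives $\sigma^2=\widetilde O(k^2\delta_s^2+k^2\mu^2/n)$, and the sparsity hypothesis $k\mu/\sqrt n\le O^*(1/\log n)$ lets the second term be absorbed up to polylog factors.

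Plugging $R=\widetilde O(\sqrt k)$, $\sigma^2=\widetilde O(k^2\delta_s^2+k^2\mu^2/n)$, and $\ell=\widetilde\Omega(k^2)$ into the Bernstein bound yields a $\widetilde O(R/\ell)=\widetilde O(k^{-3/2})$ term that is negligible and a $\sqrt{\sigma^2/\ell}$ term that splits as $\delta_s/\polylog(n)+\widetilde O(\mu/\sqrt n)$; by choosing the polylog factor hidden in $\widetilde\Omega(k^2)$ large enough, the first piece becomes $o(\delta_s)$, and by the sparsity hypothesis $k\mu/\sqrt n\le O^*(1/\log n)$ the second piece becomes $O(\sqrt{k/n})$. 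The main obstacle in executing this plan is the variance bound: the trivial estimate $\sigma^2\le O(k)$ is much too weak, and one must simultaneously exploit incoherence (to control the off-diagonal inner products among columns of $\tilA$), column-wise closeness (to control $\|\trA_S-\tilA_S\|_F$), and the sparsity relation $k\mu/\sqrt n\le O^*(1/\log n)$ (to combine the two error contributions), in order that $\sqrt{\sigma^2/\ell}$ already matches the advertised error $o(\delta_s)+O(\sqrt{k/n})$ at $\ell=\widetilde\Omega(k^2)$ samples.
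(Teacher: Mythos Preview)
Your overall plan (Bernstein plus the truncation device of Lemma~\ref{lem:general_bernstein}) is exactly what the paper does, and your looser norm bound $R=\widetilde O(\sqrt{k})$ is harmless since $R/\ell$ is already tiny. The gap is in the variance step, specifically in your last sentence.

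Your variance bound $\sigma^2=\widetilde O(k^2\delta_s^2+k^2\mu^2/n)$ is correct as a uniform bound over the support $S$: the off-diagonal entries of $I-\tilA_S^T\tilA_S$ are indeed $O(\mu/\sqrt n+\delta_s)$. But when you plug in $\ell=\widetilde\Omega(k^2)$, the fluctuation term $\sqrt{\sigma^2/\ell}$ contains $\widetilde O(\mu/\sqrt n)$, and your claim that ``by the sparsity hypothesis $k\mu/\sqrt n\le O^*(1/\log n)$ the second piece becomes $O(\sqrt{k/n})$'' is not right. The sparsity hypothesis gives $\mu/\sqrt n\le O^*(1/(k\log n))$; it does \emph{not} give $\mu\le O(\sqrt k)$, which is what you would need for $\mu/\sqrt n\le\sqrt{k/n}$. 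In the regime where $k$ is small (say $\polylog(n)$) and $\mu$ is correspondingly large, your error term $\mu/\sqrt n$ can be much larger than both $\sqrt{k/n}$ and $\delta_s$ (remember $\delta_s$ shrinks toward $\sqrt{k/n}$ as the iteration proceeds), so the claimed bound fails.

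The paper avoids this by not bounding the variance uniformly over $S$ but instead averaging over the random support: writing $\E[\|Z\|^2\mid i\in S]=\E_S[\|\trA_S-\tilA_S\tilA_S^T\trA_S\|_F^2\mid i\in S]$, expanding the off-diagonal sum $\sum_{j\ne l\in S}\langle\tilA_j,\trA_l\rangle^2$ using $\Pr[j,l\in S\mid i\in S]=O(k^2/m^2)$, and then invoking the \emph{spectral} part of the nearness hypothesis (the ``$2$'' in $(\delta_s,2)$-near, giving $\|\tilA\|,\|\trA\|=O(\sqrt{m/n})$) to bound $\sum_{j,l}\langle\tilA_j,\trA_l\rangle^2=\|\tilA^T\trA\|_F^2\le m\,\|\tilA\|^2\|\trA\|^2$. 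This yields $\sigma^2=O(k^2\delta_s^2+k^3/n)$, and then $\sqrt{\sigma^2/\ell}=o(\delta_s)+O(\sqrt{k/n})$ drops out directly. In short: your argument uses only incoherence and column-wise closeness, but the claim as stated needs the global spectral bound on $\tilA$ as well.
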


We begin by proving Lemma~\ref{lem:g_pertubation}. 

\begin{proof}[Proof of Lemma~\ref{lem:g_pertubation}]
Let $W = \{j: i \in \supp(\trxs{j})\}$ and then we have that $$\widehat{g}_i= \frac{|W|}{p}\cdot \frac{1}{|W|}\sum_j (y^{(j)}-\tilA \tilx^{(j)})\sgn(\tilx^{(j)}_i)$$ Note that $\frac{1}{|W|}\sum_j (y^{(j)}-\tilA \tilx^{(j)})\sgn(\tilx^{(j)}_i)$ has the same distribution as $\frac{1}{\ell}\sum_{j=1}^{\ell} Z^{(j)}$ for $\ell = |W|$, and indeed by concentration we have $\ell = |W| = \widetilde{\Omega}(k^2)$ when $p = \widetilde{\Omega}(mk)$. Also note that $\Exp[(y-\tilA \tilx)\sgn(\tilx_i)] = q_i\cdot \Exp[Z]$ with $q_i = O(k/m)$. Therefore by Lemma~\ref{lem:sample_conentration} we have that 
$$\|\widehat{g}_i - g_i\| \le O(k/m) \cdot\|\frac{1}{\ell}\sum_{j=1}^{\ell} Z^{(j)}- \Exp[Z] \| \le O(k/m)\cdot (o(\delta_s)+O(\sqrt{k/n}))$$ 
and this completes the proof. \end{proof}

%We can now combine the above claims and apply the vector Bernstein inequality.  Since $p = \widetilde{\Omega}(m)$ we have that with high probability $w_i = \widetilde{\Omega}(k)$. Then $\|g_i/w_i - \E[(y - \tilA^s \wtilx|i\in S]\| \le (o(\delta_s)+O(\sqrt{k/n}))$. 
\begin{proof}[Proof of Lemma~\ref{lem:stepconcentration}]
Therefore using Lemma~\ref{lem:OF-simplified-update} we can write $\widehat{g}_i^s$ (whp) as $\widehat{g}_i = \widehat{g}_i - g_i + g_i=  4\alpha(\tilA_i^s - \trA_i) + v$ with $\|v\| \le \alpha\|\tilA_i^s - \trA_i\|+O(k/m)\cdot (o(\delta_s)+O(\sqrt{k/n}))$. 
By Lemma~\ref{lem:checking_condtion1} we have $\widehat{g}_i$ %= \E[(y - \tilA\wtilx)\sgn(x_i) ]$ 
is $(\Omega(k/m), \Omega(m/k), o(k/m\cdot \delta_s^2) + O(k^2/mn))$-correlated-whp with $\trA_i$.  %and hence we have
\end{proof}
%\end{proof}
Then it suffices to prove Claim~\ref{lem:sample_conentration}. To this end, we apply the Bernstein's inequality stated in equation~\ref{eqn:bernstein} with the additional technical lemma~\ref{lem:general_bernstein}. We are going to control the maximum norm of $Z$ and as well as the variance of $Z$ using Claim~\ref{claim:stepinfinitynorm} and Claim~\ref{claim:var} as follows:

\begin{claim}
\label{claim:stepinfinitynorm}
 $\|Z\| = \|(y-\tilA\tilx)\sgn(\tilx_i)\| \le \widetilde{O}(\mu k/\sqrt{n} + k \delta_s^2 + \sqrt{k}\delta_s)$ holds with high probability
\end{claim}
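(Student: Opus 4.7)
The plan is to first reduce $Z$ to a deterministic-looking residual using support recovery, and then decompose that residual into two summands that we can bound by subgaussian concentration.

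First, fix $i\in S$ and condition on the very-high-probability event $\calF_{\trx}$ from Lemma~\ref{lem:sign-correct} that $\sgn(\tilx)=\sgn(\trx)$. On $\calF_{\trx}$ the decoding $\tilx$ has support exactly $S$ with $\tilx_S = \tilA_S^T y = \tilA_S^T\trA_S\trx_S$ and $\tilx_{S^c}=0$, so
$$y - \tilA\tilx \;=\; (I - \tilA_S\tilA_S^T)\,\trA_S\trx_S,$$
and since $\sgn(\tilx_i)=\pm 1$ we have $\|Z\| = \|(I-\tilA_S\tilA_S^T)\trA_S\trx_S\|$. Writing $\tilA_S^T\trA_S = I + F$ then gives the decomposition
$$(I - \tilA_S\tilA_S^T)\trA_S\trx_S \;=\; (\trA_S - \tilA_S)\,\trx_S \;-\; \tilA_S F\,\trx_S,$$
which we bound summand by summand.

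The first summand is a sum of $k$ mean-zero subgaussian scalars $\trx_j$ ($j\in S$) weighted by vectors $\trA_j-\tilA_j$ of norm at most $\delta_s$; standard vector-subgaussian concentration yields $\|(\trA_S-\tilA_S)\trx_S\| \le \widetilde{O}(\sqrt{k}\,\delta_s)$ with very high probability. For the second summand we first use $(\delta_s,2)$-nearness together with $\mu$-incoherence and $k\le O^*(\sqrt{n}/\mu\log n)$ to conclude $\|\tilA_S\| = O(1)$ (via Gershgorin, $\|\trA_S\|^2 \le 1 + k\mu/\sqrt{n} = O(1)$, and $\|\tilA_S-\trA_S\|\le\|\tilA-\trA\|=O(\sqrt{m/n})=O(1)$), so that the task reduces to bounding $\|F\trx_S\|$.

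The main obstacle, and the most delicate step, is bounding $\|F\trx_S\|$ without losing a spurious $\sqrt{k}$: any attempt to pass through the Frobenius norm of $F$ is too loose. The plan is to estimate $F\trx_S$ coordinate-by-coordinate. For each $\ell\in S$, the diagonal piece $F_{\ell\ell}\trx_\ell$ has magnitude $|1-\langle\tilA_\ell,\trA_\ell\rangle|\cdot|\trx_\ell|=\widetilde{O}(\delta_s^2)$, while the off-diagonal piece $\sum_{j\in S\setminus\{\ell\}}\langle\tilA_\ell,\trA_j\rangle\trx_j$ is a mean-zero subgaussian whose variance, using $\langle\tilA_\ell,\trA_j\rangle=\langle\trA_\ell,\trA_j\rangle+\langle\tilA_\ell-\trA_\ell,\trA_j\rangle$, is at most
$$2\sum_{j\in S\setminus\{\ell\}}\langle\trA_\ell,\trA_j\rangle^2 + 2\|\trA_S^T(\tilA_\ell-\trA_\ell)\|^2 \;\le\; 2k\mu^2/n + O(\delta_s^2),$$
by $\mu$-incoherence of $\trA$ and $\|\trA_S\|=O(1)$. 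Hence each coordinate of $F\trx_S$ is $\widetilde{O}(\delta_s^2+\sqrt{k}\,\mu/\sqrt{n}+\delta_s)$ with high probability, and summing the $k$ squared coordinates yields $\|F\trx_S\|^2 \le \widetilde{O}(k\delta_s^4 + k^2\mu^2/n + k\delta_s^2)$, i.e.\ $\|F\trx_S\| \le \widetilde{O}(k\mu/\sqrt{n}+\sqrt{k}\,\delta_s)$. Combining the two summands yields $\|Z\| \le \widetilde{O}(\mu k/\sqrt{n} + k\delta_s^2 + \sqrt{k}\,\delta_s)$, matching the claim.
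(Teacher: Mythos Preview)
Your decomposition and overall argument are correct and match the paper's: both write $y-\tilA\tilx=(\trA_S-\tilA_S)\trx_S+\tilA_S(I-\tilA_S^T\trA_S)\trx_S$ and then control each piece by subgaussian concentration. Where you diverge is in handling the second piece. The paper does \emph{not} go coordinate-by-coordinate; it simply bounds the Frobenius norm
\[
\|\tilA_S(I-\tilA_S^T\trA_S)\|_F \;\le\; \|\tilA_S\|\big(\|(\tilA_S-\trA_S)^T\trA_S\|_F+\|\trA_S^T\trA_S-I\|_F\big)\;\le\;(2+\delta_s\sqrt{k})\big(2\delta_s\sqrt{k}+\mu k/\sqrt{n}\big),
\]
and then invokes the one-line fact that for fixed $M$ and subgaussian $\trx_S$, $\|M\trx_S\|\le\widetilde{O}(\|M\|_F)$ with high probability. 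This directly yields the claimed $\widetilde{O}(\mu k/\sqrt{n}+k\delta_s^2+\sqrt{k}\delta_s)$. So your assertion that ``any attempt to pass through the Frobenius norm of $F$ is too loose'' is a misconception: you may have been thinking of the cruder bound $\|F\|_F\cdot\|\trx_S\|$, which would indeed lose a $\sqrt{k}$, but the correct concentration estimate $\|M\trx_S\|\le\widetilde{O}(\|M\|_F)$ does not. Your coordinatewise route is a valid alternative and even slightly sharper (you exploit the spectral part of $(\delta_s,2)$-nearness to get $\|\tilA_S\|=O(1)$, which shaves the $k\delta_s^2$ term down to $\sqrt{k}\,\delta_s^2$), but it is more work than needed. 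One minor imprecision: the diagonal entry $|1-\langle\tilA_\ell,\trA_\ell\rangle|$ is in general only $O(\delta_s)$, not $O(\delta_s^2)$, since $\tilA_\ell$ need not be a unit vector; this does not affect your final bound because the $\sqrt{k}\,\delta_s$ term already dominates.
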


\begin{proof}
We write $ y-\tilA\tilx = (\trA_S - \tilA_S \tilA_S^T \trA_S) \trx_S = (\trA_S - \tilA_S) \trx_S + \tilA_S(I - \tilA_S^T \trA_S) \trx_S$ and we will bound each term. For the first term, since $\tilA$ is $\delta_s$-close to $\trA$ and $|S|\le O(k)$, we have that $\|\trA_S-\tilA_S\|_F \le O(\delta_s \sqrt{k})$. And for the second term, we have
\begin{align*}
\|\tilA_S(\tilA_S^T \trA_S - I)\|_F
&\le \|\tilA_S\|\|(\tilA_S^T \trA_S - I)\|_F\\
& \le (\|\trA_S\|+\delta_s \sqrt{k}) (\| (\tilA_S - \trA_S)^T \trA_S\|_F + \|{\trA_S}^T \trA_S - I\|_F)\\
& \le (2+\delta_s \sqrt{k}) (\|\trA_S\|\|\tilA_S - \trA_S\|_F + \mu k/\sqrt{n})  \le O(\mu k/\sqrt{n} + \delta_s^2 k + \sqrt{k}\delta_s).
\end{align*}
Here we have repeatedly used the bound $\|U V\|_F \le \|U\|\|V\|_F$ and the fact that $\trA$ is $\mu$ incoherent which implies $\|\trA_S\| \le 2$. Recall that the entries in $\trx_S$ are chosen independently of $S$ and are subgaussian. Hence if $M$ is fixed then $\|M\trx_S\| \le \widetilde{O}(\|M\|_F)$ holds with high probability. And so
$$\|(y-\tilA\tilx)\sgn(\tilx_i)\| \leq \widetilde{O}(\|\trA_S-\tilA_S\|_F + \|\tilA_S(\tilA_S^T \trA_S - I)\|_F) \leq  \widetilde{O}(\mu k/\sqrt{n} + k \delta_s^2 + \sqrt{k}\delta_s)$$ which holds with high probability and this completes the proof. 
\end{proof}

Next we bound the variance.
\begin{claim}\label{claim:var}
$\Exp[\|Z\|^2] = \E[\|(y-\tilA\tilx)\sgn(\tilx_i)\|^2|i\in S] \le O(k^2\delta_s^2)  + O(k^3/n)$
%O(\delta_s^2 + k^2/n^2)$.
\end{claim}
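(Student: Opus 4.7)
The plan is to compute $\Exp[\|Z\|^2\mid i\in S]$ by exploiting the second-moment structure of $\trx_S$ directly, rather than squaring the high-probability bound of Claim~\ref{claim:stepinfinitynorm} (which would be wasteful). First I would restrict attention to the event $\calF_{\trx}$ of Lemma~\ref{lem:sign-correct}, under which the decoding satisfies $\tilx_S = \tilA_S^T\trA_S\trx_S$ and $\tilx_{\bar S}=0$, so that $y-\tilA\tilx = M_S\trx_S$ where $M_S := \trA_S - \tilA_S\tilA_S^T\trA_S$. Since $\sgn(\tilx_i)^2\le 1$, on this event $\|Z\|^2\le \|M_S\trx_S\|^2$, while the complementary event has probability $n^{-\omega(1)}$ and (via the tail control of Lemma~\ref{lem:general_bernstein}) contributes negligibly to the expectation.

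Next I would condition on the support $S$ and use that the non-zero entries of $\trx_S$ are pairwise independent, zero mean, and have unit variance (by the normalization in the model). This immediately gives
\[
\Exp\!\left[\trx_S^T M_S^T M_S \trx_S \,\middle|\, S\right] \;=\; \mathrm{tr}(M_S^T M_S) \;=\; \|M_S\|_F^2,
\]
reducing the problem to a purely deterministic Frobenius-norm bound on $M_S$.

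The heart of the proof is to bound $\|M_S\|_F^2$ via the decomposition $M_S=(\trA_S-\tilA_S) + \tilA_S(I-\tilA_S^T\trA_S)$. The first piece contributes at most $\|\trA_S-\tilA_S\|_F^2\le k\delta_s^2$. For the second, Gershgorin combined with $\mu$-incoherence of $\trA$ and $\|\tilA_S-\trA_S\|\le \sqrt{k}\delta_s$ yields $\|\tilA_S\|=O(1)$, so it suffices to bound $\|I-\tilA_S^T\trA_S\|_F^2$. I would split this into diagonal and off-diagonal parts: on the diagonal, $|1-\inner{\tilA_j,\trA_j}| = |\inner{\trA_j-\tilA_j,\trA_j}|\le \delta_s$, giving a total of $k\delta_s^2$; off the diagonal, apply $(a+b)^2\le 2a^2+2b^2$ to $\inner{\tilA_i,\trA_j} = \inner{\trA_i,\trA_j}+\inner{\tilA_i-\trA_i,\trA_j}$, bounding the first contribution by incoherence ($\sum\inner{\trA_i,\trA_j}^2\le k^2\mu^2/n$) and the second via $\sum_{i\in S}\|\trA_{S\setminus\{i\}}^T(\tilA_i-\trA_i)\|^2 \le 4k\delta_s^2$ using $\|\trA_S\|\le 2$. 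This produces $\|I-\tilA_S^T\trA_S\|_F^2 = O(k\delta_s^2+k^2\mu^2/n)$.

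Combining the pieces gives $\|M_S\|_F^2 \le O(k\delta_s^2) + O(k^2\mu^2/n)$ uniformly over $|S|\le k$, so the outer expectation over $S$ conditioned on $i\in S$ preserves the bound. Under the paper's standing assumption $\mu=O(1)$, we have $\mu^2\le k$ and hence $k^2\mu^2/n\le k^3/n$, which is then absorbed into the looser claimed bound $O(k^2\delta_s^2)+O(k^3/n)$. The main obstacle is the off-diagonal cross-term $\sum_{i\ne j\in S}\inner{\tilA_i,\trA_j}^2$, where one has to simultaneously exploit incoherence of $\trA$ and closeness of $\tilA$ to $\trA$; once that is handled cleanly, the rest of the argument is bookkeeping.
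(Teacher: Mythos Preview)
Your reduction to $\Exp[\|M_S\|_F^2\mid i\in S]$ and the decomposition $M_S=(\trA_S-\tilA_S)+\tilA_S(I-\tilA_S^T\trA_S)$ match the paper exactly. The divergence is in how you handle the off-diagonal of $I-\tilA_S^T\trA_S$, and this is where a genuine gap appears.

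You bound $\sum_{j\ne\ell\in S}\langle\tilA_j,\trA_\ell\rangle^2$ \emph{deterministically} for each fixed $S$ via incoherence, arriving at $O(k\delta_s^2+k^2\mu^2/n)$, and then absorb $k^2\mu^2/n$ into $O(k^3/n)$ by appealing to ``the paper's standing assumption $\mu=O(1)$.'' The paper makes no such assumption: $\mu$ is a free parameter constrained only by $k\le O^*(\sqrt{n}/(\mu\log n))$, so $\mu$ may be as large as $\Theta(\sqrt{n}/(k\log n))$. In that regime $k^2\mu^2/n=\Theta(1/\log^2 n)$, which dominates $k^3/n$ whenever $k$ is small, and your final absorption fails. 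Your bound is correct as a bound; it just does not imply the stated claim.

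The paper avoids this by \emph{not} fixing $S$: it keeps the outer expectation over the random support (conditioned on $i\in S$), uses $\Pr[j,\ell\in S\mid i\in S]=O(k^2/m^2)$ to turn the off-diagonal sum into $O(k^2/m^2)\|\tilA^T\trA\|_F^2$ plus boundary terms involving column $i$, and then invokes the spectral hypothesis $\|\trA\|=O(\sqrt{m/n})$ together with nearness $\|\tilA-\trA\|\le 2\|\trA\|$. This produces $O(k^3/n)$ with no dependence on $\mu$ at all. So the key idea you are missing is that the randomness of $S$ lets you trade incoherence for the (much stronger, here) spectral-norm control on the full dictionary.

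A smaller issue: your justification of $\|\tilA_S\|=O(1)$ via $\|\tilA_S-\trA_S\|\le\sqrt{k}\delta_s$ does not work, since $\sqrt{k}\delta_s$ is not $O(1)$ under the paper's parameters. The conclusion is still true, but it comes from the spectral part of nearness, $\|\tilA_S-\trA_S\|\le\|\tilA-\trA\|\le 2\|\trA\|=O(1)$. The paper sidesteps this entirely by using the cruder $\|\tilA_S\|_F=O(\sqrt{k})$, which is why its first term is $O(k^2\delta_s^2)$ rather than your sharper $O(k\delta_s^2)$.
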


\begin{proof}
We can again use the fact that $ y-\tilA\tilx = (\trA_S - \tilA_S \tilA_S^T \trA_S) \trx_S $ and that $\trx_S$ is conditionally independent of $S$ with $\Exp[\trx_S(\trx_S)^T]  = I$ and conclude

$$\E[\|(y-\tilA\tilx)\sgn(\tilx_i)\|^2|i\in S]  = \Exp[\|\trA_S - \tilA_S\tilA_S^T\trA_S\|_F^2\mid i\in S]$$
%\mbox{tr}\Big(\E_{S\ni i}[(\trA_S - \tilA_S \tilA_S^T \trA_S)(\trA_S - \tilA_S \tilA_S^T \trA_S)^T\mid i\in S]\Big)$$
%where $\mbox{tr}(\cdot)$ is the trace of a matrix and we used the fact that both the expectation and the trace are linear operators.

Then again we write $\trA_S - \tilA_S\tilA_S^T\trA_S$ as $(\trA_S - \tilA_S) + \tilA_S(I_{k\times k} - \tilA_S^T \trA_S)$, and the bound the Frobenius norm of the two terms separately. 
First, since $\tilA$ is $\delta_s$-close to $\trA$, we have that $\trA_S-\tilA_S$ has column-wise norm at most $\delta_s$ and therefore $\|\trA_S-\tilA_S\|_F \le \sqrt{k} \delta_s$. Second, note that $\|\tilA_S\|_F \le O(\sqrt{k})$ since each column of $\tilA$ has norm $1\pm \delta_s$, we have that
\begin{eqnarray*}
\Exp[\|\tilA_S(I - \tilA_S^T \trA_S)\|^2_F\mid i\in S] &\le & 
%\Exp[2\|\tilA_S\|^2 - \tilA_S^T \trA_S)\|^2_F] 
 O(k)\Exp[\|(I_{k\times k} - \tilA_S^T \trA_S)\|^2_F\mid i\in S] \\
 &\le & O(k) \Exp\left[\sum_{j\in S}(1-\tilA_j^T\trA_j)^2 + \sum_{j\neq \ell \in S} \langle\tilA_j,\trA_{\ell}\rangle^2\mid i\in S\right] \\
\end{eqnarray*}

We can now use the fact that $\tilA$ is $\delta_s$-close to $\trA$, expand out the expectation, and use the fact that $\Pr[j\in S, \ell\in S\mid i\in S]\le O(k^2/m^2)$, to obtain
\begin{eqnarray*}
&&\Exp[\|\tilA_S(I - \tilA_S^T \trA_S)\|^2_F\mid i\in S] \\
& \le& O(k^2\delta_s^2) + O(k^3/m^2)\cdot \sum_{j,\ell \in [m]\backslash i} \langle \tilA_j,\trA_{\ell} \rangle^2+ O(k^2/m) \|\tilA_i^T\trA_{-i}\|^2 + O(k^2/m)\|\tilA_{-i}^T\trA_i\|^2\\
&\le & O(k^2\delta_s^2)  + O(k^3/n)
\end{eqnarray*}
and this completes the proof. 
\end{proof}

\begin{proof}[Proof of Claim~\ref{lem:sample_conentration}]
We apply first Bernstein's inequality (\ref{eqn:bernstein}) with $R = \widetilde{O}(\mu k/\sqrt{n} + k \delta_s^2 + \sqrt{k}\delta_s)$ and $\sigma^2 = O(k^2\delta_s^2)  + O(k^3/n)$ on random variable $Z^{(j)}(1-\mathbf{1}_{\|Z^{(j)}\|\ge \Omega(R)})$. Then by claim~\ref{claim:stepinfinitynorm}, claim~\ref{claim:var} and Bernstein Inequality, we know that the truncated version of $Z$ concentrates when $\ell = \Omega(k^2)$, 
$$\left\|\frac{1}{\ell}\sum_{j=1}^{\ell} Z^{(j)}(1-\mathbf{1}_{\|Z^{(j)}\|\ge \Omega(R)})- \Exp[Z(1-\mathbf{1}_{\|Z\|\ge \Omega(R)})] \right\|\le \widetilde{O}\left(\frac{R}{\ell}\right) + \widetilde{O}\left(\sqrt{\frac{\sigma^2}{\ell}}\right) = o(\delta_s) + O(\sqrt{k/n}) $$
Note that we choose $\ell = k^2 \log^c n$ for a large constant $c$ so that it kills the log factors caused by Bernstein's inequality. 
%o(\delta_s) + O(k/n)$$ 
Then by Lemma~\ref{lem:general_bernstein}, we have that $\sum_j Z^{(j)}$ also concentrates: 
$$\|\frac{1}{\ell}\sum_{j=1}^{\ell} Z^{(j)}- \Exp[Z] \|\le o(\delta_s) + O(\sqrt{k/n}) $$ 
and this completes the proof. \end{proof}
\subsubsection{Proof of Lemma~\ref{lem:nearnessconcentration}}
\begin{proof}[Proof of Lemma~\ref{lem:nearnessconcentration}]
We will apply the matrix Bernstein inequality. In order to do this, we need to establish bounds on the spectral norm and on the variance. For the spectral norm bound, we have $\|(y - \tilA^s \wtilx) \sgn(\wtilx)^T\| = \|(y - \tilA^s \wtilx)\|\|\sgn(\wtilx)\| = \sqrt{k}\|(y - \tilA^s \wtilx)\|$. We can now use Claim~\ref{claim:stepinfinitynorm} to conclude that $\|(y - \tilA^s \wtilx)\| \le \widetilde{O}(k)$, and hence $\|(y - \tilA^s \wtilx) \sgn(\wtilx)\| \le \widetilde{O}(k^{3/2})$ holds with high probability.  

For the variance, we need to bound both $\E[(y - \tilA^s \wtilx) \sgn(\wtilx)^T\sgn(\wtilx)(y - \tilA^s \wtilx)^T]$ and $\E[\sgn(\wtilx)(y - \tilA^s \wtilx)^T(y - \tilA^s \wtilx) \sgn(\wtilx)^T]$. The first term is equal to $k\E[(y - \tilA^s \wtilx)(y - \tilA^s \wtilx)^T]$. Again, the bound follows from the calculation in Lemma~\ref{lem:OFupdate} and we conclude that $$\|\E[(y - \tilA^s \wtilx) \sgn(\wtilx)^T\sgn(\wtilx)(y - \tilA^s \wtilx)^T] \|\leq O(k^2/n)$$
To bound the second term we note that $$\E[\sgn(\wtilx)(y - \tilA^s \wtilx)^T(y - \tilA^s \wtilx) \sgn(\wtilx)^T] \preceq \widetilde{O}(k^2)\E[\sgn(\wtilx)\sgn(\wtilx)^T] \preceq \widetilde{O}(k^3/m) I$$ Moreover we can now apply the matrix Bernstein inequality and conclude that when the number of samples is at least $\widetilde{\Omega}(mk)$ we have $$\|\frac{1}{p} \sum_{j=1}^p (y^{(j)} - \tilA^s \trxs{j}) \sgn(\trxs{j})^T - \E[(y - \tilA^s \wtilx) \sgn(\wtilx)^T\| \le O^*(k/m \cdot \sqrt{m/n})$$ and this completes the proof. 
\end{proof}
\subsubsection{Proof of Lemma~\ref{lem:initconcentration}}\label{sec:initconcentration}

%\begin{proof}[Proof of Lemma~\ref{lem:initconcentration}]
Again in order to apply the matrix Bernstein inequality we need to bound the spectral norm and the variance of each term of the form $\langle u,y\rangle\langle v,y\rangle yy^T$ . We make use of the following claim to bound the magnitude of the inner product:

\begin{claim}
$|\langle u,y\rangle| \le \widetilde{O}(\sqrt{k})$ and $\|y\| \le \widetilde{O}(\sqrt{k})$ hold with high probability
\end{claim}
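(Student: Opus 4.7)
The plan is to handle each of the two bounds separately, in both cases by conditioning on the support of the relevant sparse vector and then invoking subgaussian concentration.

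First I would bound $\|y\|$. Write $y = \trA\trx$ and let $S = \mbox{supp}(\trx)$, so $y = \trA_S \trx_S$ and hence $\|y\|^2 \le \|\trA_S\|^2 \|\trx_S\|^2$. For the spectral norm, our assumption $k \le O^*(\sqrt{n}/\mu\log n)$ together with $\mu$-incoherence gives, via Gershgorin's disk theorem, $\|\trA_S\|^2 \le 1 + \mu k/\sqrt{n} = O(1)$. For the vector norm, conditional on $S$ the entries of $\trx_S$ are independent subgaussians with unit variance, so $\|\trx_S\|^2$ is a sum of at most $k$ squared subgaussians with mean $O(k)$, and by standard tail bounds $\|\trx_S\|^2 \le \widetilde{O}(k)$ with high probability. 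Combining the two gives $\|y\| \le \widetilde{O}(\sqrt{k})$.

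Next I would bound $|\langle u,y\rangle|$. Write $\langle u, y\rangle = \langle \trA\alpha, \trA\trx\rangle = \langle \beta, \trx\rangle$ where $\beta = \trA^T u$, exactly the object studied in Claim~\ref{claim:betabound}. That claim already yields $\|\beta\| \le O(\sqrt{mk/n}) = O(\sqrt{k})$ with high probability over $u$, using the assumption $m = O(n)$. Now condition on $u$ (so $\beta$ is fixed) and on $S = \mbox{supp}(\trx)$: then $\langle \beta, \trx\rangle = \sum_{j\in S} \beta_j \trx_j$ is a sum of independent mean-zero subgaussians, and hence is itself subgaussian with parameter at most $\|\beta_S\|^2 \le \|\beta\|^2 = O(k)$. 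Standard subgaussian concentration therefore gives $|\langle \beta, \trx\rangle| \le \widetilde{O}(\|\beta\|) = \widetilde{O}(\sqrt{k})$ with high probability. Taking a union bound over the (high-probability) events that $\|\beta\| \le O(\sqrt{k})$ and that the conditional concentration succeeds yields the claim.

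The only mild subtlety is the need to be careful about which randomness is being conditioned on in the second bound --- $\|\beta\|$ is itself random through $u$, so we invoke Claim~\ref{claim:betabound} first and then use the conditional subgaussianity of $\langle \beta, \trx\rangle$ given $u$ and $S$. Beyond this, the argument is routine, and I would not expect any nontrivial obstacle.
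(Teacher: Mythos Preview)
Your proposal is correct and follows essentially the same approach as the paper: bound $\|y\|$ via $\|\trA_S\|\|\trx_S\|$ using incoherence and subgaussian concentration, and bound $|\langle u,y\rangle|$ by writing it as $\langle \trA_S^T u, \trx_S\rangle$ and applying subgaussian concentration conditional on $S$. The only cosmetic difference is that you bound the coefficient vector by citing Claim~\ref{claim:betabound} for $\|\beta\|\le O(\sqrt{mk/n})=O(\sqrt{k})$, whereas the paper bounds $\|\trA_S^T u\|\le \|\trA_S\|\|u\|\le \widetilde O(\sqrt{k})$ directly; both are equivalent here.
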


\begin{proof}
Since $u = \trA\alpha$ and because $\alpha$ is $k$-sparse and has subgaussian non-zero entries we have that $\|u\| \le \widetilde{O}(\sqrt{k})$, and the same bound holds for $y$ too. Next we write $|\langle u,y\rangle| = |\langle {\trA}_S^Tu, \trx_S\rangle|$ where $S$ is the support of $\trx$. Moreover for any set $S$, we have that $$\|{\trA_S}^Tu\| \le \|\trA_S\|\|u\| \le \widetilde{O}(\sqrt{k})$$ holds with high probability, again because the entries of $\trx_S$ are subgaussian we conclude that $|\langle u,y\rangle|\le \widetilde{O}(\sqrt{k})$ with high probability.
\end{proof}

\noindent This implies that $\|\langle u,y\rangle\langle v,y\rangle yy^T \| \leq \widetilde{O}(k^2)$ with high probability.

Now we need to bound the variance:

\begin{claim}
$\|\E[\langle u,y\rangle^2\langle v,y\rangle^2 yy^Tyy^T]\| \le \widetilde{O}(k^3/m)$
\end{claim}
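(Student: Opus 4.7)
The plan is to exploit the identity $yy^Tyy^T = \|y\|^2\, yy^T$, so that the quantity we must bound becomes the spectral norm of the positive semidefinite matrix
$\E\!\left[\langle u,y\rangle^2\langle v,y\rangle^2 \|y\|^2\, yy^T\right]$.
The previous claim gives, with high probability, the scalar bounds $|\langle u,y\rangle|,|\langle v,y\rangle|,\|y\|\le \widetilde{O}(\sqrt{k})$; so on this good event $E$ the scalar prefactor $\langle u,y\rangle^2\langle v,y\rangle^2\|y\|^2$ is bounded by $\widetilde{O}(k^3)$ deterministically.

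The first step is to split
$\E[\,\cdot\,] = \E[\,\cdot\,\mathbf{1}_E] + \E[\,\cdot\,\mathbf{1}_{E^c}]$.
On the good event I use PSD monotonicity: since $yy^T\succeq 0$, multiplying it by the nonnegative scalar $\langle u,y\rangle^2\langle v,y\rangle^2\|y\|^2$, which on $E$ is bounded above by $\widetilde{O}(k^3)$, gives the operator inequality
$\E[\,\cdot\,\mathbf{1}_E] \preceq \widetilde{O}(k^3)\,\E[yy^T\mathbf{1}_E] \preceq \widetilde{O}(k^3)\,\E[yy^T]$.
So it suffices to bound $\|\E[yy^T]\|$. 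Using the model assumptions ($\trx_i$ mean zero with unit conditional variance, pairwise independent conditioned on support), one has $\E[\trx\trx^T] = \mathrm{diag}(q_i)$, hence $\E[yy^T] = \trA\,\mathrm{diag}(q_i)\,\trA^T$ and
$\|\E[yy^T]\| \le (\max_i q_i)\,\|\trA\|^2 = O(k/m)\cdot O(m/n) = O(k/n) = O(k/m)$,
using the hypothesis $m=\Theta(n)$. Multiplying by the $\widetilde{O}(k^3)$ prefactor gives the claimed $\widetilde{O}(k^3/m)$.

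For the bad contribution $\E[\,\cdot\,\mathbf{1}_{E^c}]$ I would invoke Lemma~\ref{lem:general_bernstein}(b). The integrand is a polynomial in subgaussian random variables (the nonzero entries of $\trx$), and the operator norm of the matrix $\langle u,y\rangle^2\langle v,y\rangle^2 yy^Tyy^T$ is bounded by the polynomial $\langle u,y\rangle^2\langle v,y\rangle^2\|y\|^4$; standard subgaussian tail bounds applied to $\trA_S^T u,\,\trA_S^T v$ and $\trx_S$ give the $(\log(1/\rho))^C$-type tail hypothesis required by the lemma. Consequently the truncated expectation has norm $n^{-\omega(1)}$ and is absorbed into the $\widetilde{O}$.

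The only real delicate point is the tail control used for $E^c$: one must check that the joint product $\langle u,y\rangle^2\langle v,y\rangle^2\|y\|^4$ really satisfies a $\poly(\log(1/\rho))$ tail, but this is routine because each factor is the squared norm (or squared inner product) of a subgaussian vector derived from a $k$-sparse subgaussian source, and products of subgaussian/sub-exponential variables have polylogarithmic tails of this form. Everything else is bookkeeping.
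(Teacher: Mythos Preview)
Your arithmetic at the end is off by a factor of $k$: you bound the scalar prefactor $\langle u,y\rangle^2\langle v,y\rangle^2\|y\|^2$ by $\widetilde{O}(k^3)$ and then multiply by $\|\E[yy^T]\|=O(k/m)$, which yields $\widetilde{O}(k^4/m)$, not the claimed $\widetilde{O}(k^3/m)$. The overcounting happens because you pulled all three scalar factors out of the expectation; each such extraction costs a $\widetilde{O}(k)$, but the residual matrix $\E[yy^T]$ has the same $O(k/m)$ norm whether or not you leave a weight inside, so you gain nothing from the last extraction.

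The paper's fix is to pull out only two scalar factors, $\|y\|^2$ and $\langle u,y\rangle^2$, obtaining
\[
\E[\langle u,y\rangle^2\langle v,y\rangle^2\, yy^Tyy^T]\;\preceq\;\widetilde{O}(k^2)\,\E[\langle v,y\rangle^2\, yy^T],
\]
and then observe that $\E[\langle v,y\rangle^2\, yy^T]$ is exactly $M_{v,v}$, whose spectral norm is $O(k/m)$ by Lemma~\ref{lem:initform} (the leading term is $\sum_{i\in V} q_i c_i\beta_i^2\,\trA_i\trA_i^T$ with $\|\trA_V\|=O(1)$, and the error terms are $O^*(k/(m\log n))$). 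This gives $\widetilde{O}(k^2)\cdot O(k/m)=\widetilde{O}(k^3/m)$ as required. Your treatment of the bad event via Lemma~\ref{lem:general_bernstein} is fine and matches the paper's implicit handling.
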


\begin{proof}
We have that with high probability $\|y\|^2 \le \widetilde{O}(k)$ and $\langle u,y\rangle^2 \le \widetilde{O}(k)$, and we can apply these bounds to obtain $$\E[\langle u,y\rangle^2\langle v,y\rangle^2 yy^Tyy^T] \preceq \widetilde{O}(k^2)\E[\langle v,y\rangle^2 yy^T]$$ On the other hand, notice that $\E[\langle v,y\rangle^2 yy^T] = M_{v,v}$ and using Lemma~\ref{lem:initform} we have that $\|\E[\langle v,y\rangle^2 yy^T]\| \le O(k/m)$. Hence we conclude that the variance term is bounded by $\widetilde{O}(k^3/m)$.
\end{proof}

Now we can apply the matrix Bernstein inequality and conclude that when the number of samples is $p = \widetilde{\Omega}(mk)$ then $$\|\widehat{M}_{u,v} - M_{u,v}\| \leq \widetilde{O}(k^2)/p + \sqrt{\widetilde{O}(k^3/mp)} \le O^*(k/m\log n)$$ with high probability, and this completes the proof. 
%\end{proof}

\section{Neural Implementation}\label{sec:neural}

	\begin{figure}
   				\begin{center}\label{fig:neural_architecture}
               \includegraphics[width=1\textwidth]{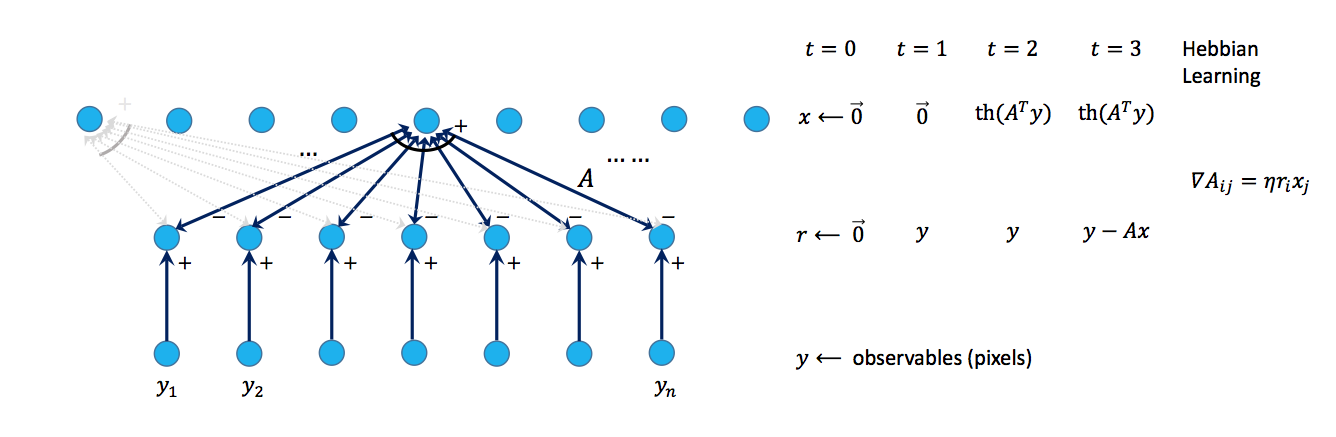} 
                \end{center}
      \caption{A neural implementation of Algorithm~\ref{eqn:simplestupdate}, which mimics that of Olshausen-Field  (Figure 5 in \cite{OF})}
       \end{figure}
       
           \paragraph{Neural Implementation of Alternating Minimization:}  Here we sketch a neural architecture implementing Algorithm~\ref{eqn:simplestupdate}, essentially mimicking Olshausen-Field  (Figure 5 in \cite{OF}), except our decoding rule is much simpler and takes
       a single neuronal step. 
      % \vspace{0.5pc}
      
             \vspace{0.2pc}
       
\textbf{(a)} The bottom layer of neurons take input $y$ and at the top layer neurons output the decoding $x$ of $y$ with respect to the current code. The middle layer labeled $r$ is used for intermediate computation.  The code $A$ is stored as the weights between the top and middle layer on the synapses. Moreover these weights are set via a Hebbian rule, and upon receiving a new sample $y$, we update the weight $A_{ij}$ on the synapses by the product of the values of the two endpoint neurons, $x_j$ and $r_i$.  

       \vspace{0.2pc}
      
       \textbf{(b)} The top layer neurons are equipped with a threshold function. 
            The middle layer ones are equipped with simple  linear functions (no threshold). 
            The bottom layer can only be changed by stimulation from outside the system.
            
                   \vspace{0.2pc}
            
    \textbf{(c)} We remark that the updates require some attention to timing, which can be accomplished via spike timing.
      %more detail, our network requires a clock for synchronization. 
     In particular, when a new image is presented, the value of all neurons are updated to a (nonlinear) function of the weighted sum of the values of its neighbors with weights on the corresponding synapses. 
      The execution of the network is shown at the right hand side of the figure. Upon receiving a new sample at time $t = 0$, the values of bottom layer are set to be $y$ and all the other layers are reset to zero. At time $t=1$, the values in the middle layer are updated by the weighted sum of their neighbors, which is just $y$. Then at time $t =2$, the top layer obtains the decoding $x$ of $y$ by calculating $\textrm{threshold}_{C/2}(A^Ty)$. At time $t=3$ the middle layer   calculates the residual error $y - Ax$ and then at time $t=4$ the  synapse weights that
store $A$ are updated via Hebbian rule (update proportional to the product of the endpoint values). Repeating this process with many images indeed implements Algorithm~\ref{eqn:simplestupdate} and succeeds provided that the network is appropriately initialized to $\tilA^0$ that is $(\delta, 2)$ near to $\trA$. 
      
      %The architecture for computing the intial $\tilA^0$ is omitted here and uses the usual neural algorithms for computing top singular vectors.
      
      \paragraph{Neural Implementation of Initialization:} Here we sketch a neural implementation of Algorithm~\ref{thm:initmain}. This algorithm uses simple operations that can be implemented in neurons and composed, however unlike the above implementation we do not know of a two layer network, but note that this procedure need only be performed once so it need not have a particularly fast or short neural implementation. 
      
             \vspace{0.2pc}
      
       \textbf{(a)} It is standard to compute the inner products $\inner{y, u}$ and $\inner{y, v}$ using neurons, and even the top singular vector can be computed using  the classic Oja's Rule~\cite{oja82} in an online manner where each sample $y$ is received sequentially. There are also generalizations to computing other principle components~\cite{oja92}. However, we only need the top singular vector and the top two singular values. 
      
             \vspace{0.2pc}
      
       \textbf{(b)} Also, the greedy clustering algorithm which preserves a single estimate $z$ in each equivalence class of vectors that are $O^*(1/\log m)$-close (after sign flips) can be implemented using inner products. Finally, projecting the estimate $\widetilde{A}$ onto the set $\mathcal{B}$ may not be required in real life (or even for correctness proof), but even if it is it can be accomplished via stochastic gradient descent where the gradient again makes use of the top singular vector of the matrix.

\end{document}